\documentclass{article}
\usepackage{iclr2027_conference,times}
\usepackage[T1]{fontenc}
\usepackage[utf8]{inputenc}
\usepackage{amsmath,amssymb,amsthm,booktabs,tabularx,longtable,array}
\usepackage{graphicx,xcolor,microtype}
\usepackage{colortbl}
\definecolor{auditShadeA}{gray}{1.00}
\definecolor{auditShadeB}{gray}{0.96}
\definecolor{auditShadeC}{gray}{0.92}
\definecolor{auditShadeD}{gray}{0.86}
\definecolor{auditShadeE}{gray}{0.80}
\definecolor{auditDropA}{HTML}{EDF3FA}
\definecolor{auditDropB}{HTML}{DAE8F5}
\definecolor{auditDropC}{HTML}{BED6EB}
\definecolor{auditDropD}{HTML}{98BDDE}
\definecolor{auditDropE}{HTML}{6EA2CC}
\definecolor{auditRiseA}{HTML}{FEF3E8}
\definecolor{auditRiseB}{HTML}{FCE1C4}
\definecolor{auditRiseC}{HTML}{F6C996}
\definecolor{auditRiseD}{HTML}{EFAF6C}
\definecolor{auditRiseE}{HTML}{E39643}
\usepackage{float}
\usepackage{wrapfig}
\usepackage{etoolbox}
\AtBeginEnvironment{table}{\setlength{\abovecaptionskip}{10pt}\setlength{\belowcaptionskip}{\baselineskip}}
\AtBeginEnvironment{wraptable}{\setlength{\abovecaptionskip}{10pt}\setlength{\belowcaptionskip}{\baselineskip}}
\AtBeginEnvironment{figure}{\setlength{\abovecaptionskip}{10pt}}
\usepackage{placeins} 
\floatstyle{ruled}
\newfloat{algorithm}{tbp}{loa}
\floatname{algorithm}{Algorithm}
\usepackage{hyperref}
\usepackage{xurl}
\hypersetup{colorlinks=true,linkcolor=blue!45!black,citecolor=blue!45!black,urlcolor=blue!45!black,
pdftitle={What Paired Evaluations Reveal under Visual Perturbations},pdfauthor={Yongda Wei, Chen Zhang, Yifei Wang, Xinyu Wang, Bosen Shao, Hanxi Li, Liping Di}}
\newtheorem{theorem}{Theorem}
\newtheorem{lemma}{Lemma}

\newtheorem{corollary}{Corollary}
\theoremstyle{definition}
\newtheorem{definition}{Definition}
\theoremstyle{plain}
\newcommand{\E}{\mathbb{E}}
\newcommand{\Prb}{\mathbb{P}}
\newcommand{\ind}{\mathrm{ind}}

\newcolumntype{Y}{>{\raggedright\arraybackslash}X}
\title{What Paired Evaluations Reveal under Visual Perturbations}
\author{\textbf{Yongda Wei \quad Chen Zhang \quad Yifei Wang \quad Xinyu Wang}\\
\textbf{Bosen Shao \quad Hanxi Li \quad Liping Di}}
\iclrfinalcopy
\makeatletter
\patchcmd{\@maketitle}
  {Published as a conference paper at ICLR 2027}
  {Preprint}
  {}
  {\PackageError{arxiv-preprint}{Failed to set the preprint header}{}}
\makeatother
\newcommand{\enumClassesOrig}{19,200}
\newcommand{\enumNarrowerOrig}{318}
\newcommand{\enumNarrowerOrigPct}{1.7}
\newcommand{\enumClassesReal}{1,834,272}
\newcommand{\enumNarrowerReal}{1,708}
\newcommand{\enumCellsReal}{9,504}

\begin{document}
\raggedbottom
\maketitle
\begin{abstract}
To understand the reliability of computer vision models in real-world applications, robustness evaluation must examine diverse visual perturbations, yet benchmarks cover only some real-world conditions and physical testing is costly. Paired evaluations link clean and perturbed predictions for the same image, retaining changes in correctness, confidence, and acceptance that aggregate accuracy does not describe. We investigate how this image correspondence supports two needs in robustness evaluation: \textbf{interpreting paired evaluation results} and \textbf{prioritizing samples for physical testing}. To interpret paired evaluation results, we fix both sets of prediction records and vary their correspondence within each class. We prove that classwise correct--correct counts give the same sharp bounds on lost acceptance and mean true-class probability decrease among retained-correct inputs as any feasible five-state refinement. Distinguishing persistent from changed wrong answers can further constrain accepted-error transitions, while shared correspondence can establish policy orderings left unresolved by separate cost intervals. To prioritize samples for physical testing, we retain each image's synthetic responses and rank clean-correct images by their mean true-class probability under corruption. Across 44 classifiers, testing the highest-risk 20\% finds 67\% and 45\% of failures under mild screen and print recaptures, versus 58\% and 36\% for clean confidence and 60\% and 37\% for an equal-size natural-transformation average. With both probability averaging and an A3Rank scoring adaptation, the tested corruption set yields higher mean failure recall than the natural-transform set; differences between scores depend on the source and budget. Together, these findings show that the value of correspondence depends on the evaluation objective: classwise counts suffice for specified reliability bounds, while image-specific synthetic responses improve the allocation of physical tests within the evaluated pool.
\end{abstract}

\section{Introduction}
Understanding the reliability of computer vision models in real-world applications requires examining their responses to diverse visual perturbations, from digital corruptions to physical changes in image capture \citep{hendrycks2019,baek2024,baek2025,nugent2025procedural}. A fixed benchmark covers only a subset of these conditions, while collecting physical measurements across additional settings requires resources. This creates two needs: extracting useful reliability information from completed evaluations and deciding which images to prioritize in subsequent tests. Aggregate accuracy describes overall correctness but leaves important image-level changes unresolved. For example, an image may remain correctly classified after perturbation while its confidence falls below an acceptance threshold, changing an accepted prediction into a rejection without changing accuracy. Paired evaluations link clean and perturbed predictions for the same image, making such changes observable.

How does image correspondence in paired evaluations support the judgments needed in robustness evaluation? For completed tests, without their matching, knowing each side's predictions does not necessarily reveal which accepted correct predictions became accepted errors or lost acceptance. The answer depends on how the records correspond and what information about that correspondence has been retained. For subsequent physical tests, associating synthetic responses with their source images may help identify which images are likely to fail under real perturbations. We examine this question in image classification through two subquestions: first, which reliability changes and policy comparisons can be determined from separate prediction records and selected joint counts; second, whether image-specific synthetic responses improve failure discovery at a fixed physical testing budget. The first question concerns what the available records establish about completed tests; the second concerns their empirical usefulness for directing further tests.

The first part presents our analysis for \textbf{interpreting paired evaluation results}. We hold the clean and perturbed prediction records fixed and vary their correspondence within each true class (Section~\ref{sec:retro}). Classwise counts are then added: first the number of images classified correctly in both conditions, followed by the full five-state breakdown of correctness and answer changes. Given the fixed records, correctness overlap provides all five-state endpoint constraints for lost acceptance among retained-correct inputs and their mean true-class probability decrease. Distinguishing persistent from changed wrong answers can further narrow accepted-error transitions. Comparing policies under a common feasible correspondence also establishes some orderings that separate cost intervals leave unresolved. The value of the additional counts depends on the policy cost, including whether it tracks where accepted errors originate or counts all accepted errors. These results specify which retained information constrains each target and how those constraints support policy comparison.

The second part focuses on \textbf{prioritizing samples for physical testing}. We retain each image's synthetic responses and rank clean-correct images by their mean true-class probability under corruption (Section~\ref{sec:prosp}). On mild screen and print recaptures, a 20\% testing budget finds, on average, 67\% and 45\% of failures, versus 60\% and 37\% for an equal-size natural-transformation average. Reassigning synthetic scores to other same-class images reduces screening performance, directly demonstrating the value of matching responses to source images. Crossing both probe sets with probability averaging and adapted A3Rank yields higher mean failure recall for the tested corruption set under both scores, with score preferences varying by source and budget. These comparisons assess image identity and synthetic-probe choice for allocating real tests.

Together, the analyses show that the value of correspondence depends on the judgment an evaluation is intended to support. Classwise correctness overlap already captures all five-state endpoint constraints for two reliability targets given fixed prediction records, whereas image-specific synthetic responses improve real-failure screening within the evaluated pool. More detailed information therefore calls for target-specific assessment: tighter bounds, additional policy orderings and improved predictive performance are distinct outcomes. Section~\ref{sec:prelim} defines the records, response states and information levels; Section~\ref{sec:retro} develops the identification and policy results; Section~\ref{sec:prosp} evaluates failure screening; and Section~\ref{sec:boundaries} examines the different outcome of aggregating responses for model-level prediction.

\section{Related Work}
\paragraph{Robustness and performance prediction.}
Corruption benchmarks, replicated test sets and deployment datasets measure different distribution changes \citep{hendrycks2019,recht2019,koh2021}. Synthetic robustness need not transfer uniformly to natural or perceptually different shifts \citep{hendrycks2021faces,mintun2021,kar2022}. ImageNet-ES and ES Diverse provide controlled physical capture variation \citep{baek2024,baek2025}. Effective robustness and accuracy-on-the-line motivate strong clean-performance baselines \citep{taori2020,miller2021}; pretraining and reference distributions affect these relations \citep{djolonga2021,shi2023}.

\paragraph{Paired transitions and confidence.}
Matched outcomes underpin correlated-proportion tests \citep{mcnemar1947}, negative-flip analysis \citep{yan2021,sawada2026compound} and error-consistency studies \citep{geirhos2020,kim2025,klein2025}. Bench-C examines confidence and correctness changes under corruption \citep{sui2026}, while VLM-RobustBench analyzes harmful and helpful answer flips \citep{saxena2026}. Paired evaluations also assess reward-model robustness for language-model alignment by comparing preference rankings or confidence before and after input transformations \citep{wu2025rewordbench,zang2026reward}. We study what joint counts determine beyond separate records, without claiming novelty for the partition. Calibration, failure detection and selective classification measure different aspects of confidence \citep{guo2017,minderer2021,hendrycks2017baseline,corbiere2019,geifman2017,geifman2019}. Their behavior can change under shift or without an accuracy change \citep{ovadia2019,galil2021,liang2024}, requiring explicit protocols and baselines \citep{jaeger2023,cattelan2024,traub2024}.

\paragraph{Test prioritization and augmentation-based confidence.}
PRIMA uses input and model mutations with learning to rank to prioritize test inputs \citep{wang2021prima}; A3Rank uses augmentation alignment to prioritize overconfident failures \citep{wei2024a3rank}. Simple uncertainty scores can be competitive test-prioritization baselines \citep{weiss2022simple}. Averaging predicted-class probabilities over test-time transformations is an existing training-free confidence score \citep{bahat2020tta}. We use corruption responses of known clean images to prioritize subsequent physical recaptures. At equal inference counts, we compare corruption and natural-transformation averages and adapt the A3Rank scoring rule to both probe sets; its original rejection component and current-input error target are distinct from our future-recapture task.

\paragraph{Identification and scoring.}
Joint distributions constrained by their marginals are a classical problem \citep{strassen1965,tchen1980,kellerer1984}. Stratified intersection bounds also quantify treatment harm \citep{kallus2022harm}, while partially identified counterfactual utilities support policy comparison \citep{benmichael2024asymmetric}. We use elementary intersection counts and extremal subset sums; our structural question is which correctness and wrong-answer constraints alter these endpoints. Finite-record identification differs from sampling inference for partially identified parameters \citep{imbens2004}. Proper scores and CORP quantify probability quality \citep{gneiting2007,dimitriadis2021corp}; totals need no correspondence, unlike allocation to response states. Alternative ImageNet annotations delimit what correctness means \citep{beyer2020real,vasudevan2022imagenetmistakes}.

\section{Records, Targets, and Missing Correspondence}\label{sec:prelim}
\subsection{Why Counts Need Not Determine a Joint Change}
\begingroup
\setlength{\intextsep}{0pt}
\setlength{\columnsep}{12pt}
\begin{wraptable}{r}{0.50\linewidth}
\setlength{\abovecaptionskip}{0pt}
\caption{Five disjoint response states and aggregate metrics under the original single-label definition. The state symbols also denote empirical proportions.}
\label{tab:response-states}
\centering

\setlength{\tabcolsep}{2pt}
\renewcommand{\arraystretch}{1.18}

\fontsize{8}{9.5}\selectfont
\begin{tabularx}{\linewidth}{@{}l Y c c c@{}}
\toprule
State & Response & $C_i$ & $C'_i$ & Labels\\
\midrule
$h$ & Harmful flip & $1$ & $0$ & $\hat y_i\ne\hat y'_i$\\
$r$ & Correction & $0$ & $1$ & $\hat y_i\ne\hat y'_i$\\
$s$ & Persistent error & $0$ & $0$ & $\hat y_i=\hat y'_i$\\
$w$ & Changed error & $0$ & $0$ & $\hat y_i\ne\hat y'_i$\\
$k$ & Retained correctness & $1$ & $1$ & $\hat y_i=\hat y'_i$\\
\midrule
\end{tabularx}
\begin{tabularx}{\linewidth}{@{}*{2}{>{\centering\arraybackslash}X}@{}}
Clean accuracy & Perturbed accuracy\\[-1pt]
$a_0=h+k$ & $a_1=r+k$\\[2pt]
Accuracy decrease & Label-change rate\\[-1pt]
$a_0-a_1=h-r$ & $F=h+r+w$\\
\bottomrule
\end{tabularx}
\end{wraptable}
Consider two images of the same true class $y$ with clean probability vectors $(p_y,p_{\tilde y})=(0.9,0.1),(0.6,0.4)$ and perturbed vectors $(0.6,0.4),(0.2,0.8)$ for alternative class $\tilde y$. At threshold 0.8, only the first clean prediction and the wrong perturbed prediction are accepted. Both correspondences give one retained-correct pair and one harmful flip, with identical classwise five-state counts. Yet matching the accepted clean prediction to the correct perturbed one loses acceptance while retaining correctness; the other matching produces an accepted error. The counts therefore need not determine which accepted prediction failed. We ask what remains possible as increasingly detailed joint counts supplement the report.\label{prop:nonid}

\paragraph{Records and response states.}\label{def:records}\label{def:states}
For $N$ paired inputs with shared original labels $y_i$, unprimed and primed quantities refer to clean and perturbed inputs. Each record carries a predicted label $\hat y_i$ or $\hat y'_i$ under a fixed tie rule, maximum probability $S_i$ or $S'_i$, and true-class probability $M_i$ or $M'_i$. Correctness is $C_i=\mathbf1\{\hat y_i=y_i\}$. To examine whether predictions remain accepted after perturbation, we accept a prediction when its maximum probability reaches a fixed threshold $\tau$ and reject it otherwise. Thus $Z_i=\mathbf1[S_i\geq\tau]$ indicates acceptance; primed counterparts use the same definitions and threshold.

Table~\ref{tab:response-states} defines the five disjoint states $T_i\in\{h,r,s,w,k\}$ and their relationships to clean accuracy $a_0$, perturbed accuracy $a_1$, and the predicted-label change rate $F$. The same letters denote empirical proportions and, in events, $j$ means $T=j$. Thus $s,w,k$ contribute zero directly to the accuracy difference, although their probabilities may change. Given $a_0,a_1,F$, the state proportions have at most one remaining degree of freedom.
\par
\endgroup

\begin{table}[!t]
\centering
\caption{Reliability targets. The first two are empirical conditional means, undefined for empty states and positive for decreases. The last three are empirical event proportions with all $N$ evaluated inputs as denominator.}
\label{tab:reliability-targets}

\setlength{\tabcolsep}{4pt}
\renewcommand{\arraystretch}{1.15}

\fontsize{9}{10.5}\selectfont
\begin{tabularx}{\linewidth}{@{}>{\raggedright\arraybackslash}p{0.09\linewidth} >{\raggedright\arraybackslash}p{0.37\linewidth} Y@{}}
\toprule
Target & Definition & Interpretation\\
\midrule
$dM_k$ & $\E[M-M'\mid T=k]$ & Mean true-class probability decrease among inputs correct on both sides.\\[3pt]
$ds_j$ & $\E[S-S'\mid T=j],\ j\in\{s,w\}$ & Mean decrease in maximum probability within the specified wrong-answer state.\\
\midrule
$E_\tau$ & $\Prb(C'=0,Z'=1)$ & Fraction of inputs with an accepted error after perturbation.\\[3pt]
$L=L_k$ & $\Prb(T=k,Z=1,Z'=0)$ & Fraction remaining correct but changing from accepted to rejected.\\[3pt]
$H$ & $\Prb(C=1,Z=1,C'=0,Z'=1)$ & Fraction changing from an accepted correct prediction to an accepted error.\\
\bottomrule
\end{tabularx}
\end{table}
\paragraph{Reliability targets.}\label{def:targets}
Table~\ref{tab:reliability-targets} distinguishes empirical conditional score changes ($dM_k,ds_j$) from acceptance-event proportions ($E_\tau,L=L_k,H$). In the opening example, $(H,L,dM_k)$ is either $(0,1/2,0.3)$ or $(1/2,0,0)$, which means the counts need not point-identify these targets. $E_\tau$, in contrast, needs only perturbed records.

\emph{Paired evaluation} denotes a matched-input design, not pairwise comparison of two models. A statistic may use both sides without depending on their correspondence, as a difference of marginal means does. Here correspondence dependence means that a statistic can change when the same two sets of records are reassigned. Besides the targets above, we use $A=\E|S-S'|$ and $Q=\operatorname{Corr}(S,S')$ as simple paired statistics.

\subsection{Information Available without the Actual Pairing}
\begin{definition}[Correspondence and information levels]\label{def:levels}
Fix one model and condition, with $n_y$ clean and $n_y$ perturbed records in each true class $y$. Within each class, match every clean record to exactly one perturbed record, using each perturbed record once. Write $\pi_y(i)=j$ for this match and $\pi=(\pi_y)_y$ for all classwise matchings. Changing the matching changes only which records are linked; their labels, scores and acceptance decisions remain fixed.

The information levels $\mathcal I_0$, $\mathcal I_k$ and $\mathcal I_5$ retain, respectively, all prediction records on both sides, including true class labels, without paired counts; the same records plus the observed correct--correct counts $k_y$ for each class; and the same records plus all five observed state counts $h_y,r_y,s_y,w_y,k_y$ for each class.
Whereas $\pi$ is one matching, $\Pi(\mathcal I)$ contains all within-class matchings consistent with the retained information. Because $\Pi(\mathcal I_5)\subseteq\Pi(\mathcal I_k)\subseteq\Pi(\mathcal I_0)$, retaining more counts can rule out possible matchings, while the observed matching remains feasible at every level.
\end{definition}
For clean and perturbed correct counts $a_y,b_y$, overlap $k_y$ is feasible exactly when $\max(0,a_y+b_y-n_y)\leq k_y\leq\min(a_y,b_y)$. Let $q(\pi)$ denote a target's value under matching $\pi$. When defined throughout the feasible set, its \emph{identified interval} spans the smallest and largest allowed values:
\begin{equation}
[\underline q(\mathcal I),\overline q(\mathcal I)]
=\left[\min_{\pi\in\Pi(\mathcal I)}q(\pi),\max_{\pi\in\Pi(\mathcal I)}q(\pi)\right].\label{def:identified}
\end{equation}
Endpoints attained by feasible correspondences are sharp, and the width measures ambiguity for fixed records, not sampling uncertainty. Classwise choices form a Cartesian product, allowing extrema of additive counts to be summed without assuming statistical independence between classes.

For $H$ and $L$, marginal reference intervals follow from classwise intersection bounds. Class-conditional independent matching gives reference values. Appendix~\ref{app:d2} provides the formulas and detailed sharpness proof, and Appendix~\ref{app:correspondence-value} also examines accuracy-only reports.

\subsection{Models, Sources, and Protocol}\label{sec:design}
There are 44 models in ten fixed statistical groups, covering image--text, self-supervised and supervised models (Appendix~\ref{app:models}). Syn is a synthetic-corruption evaluation set that applies nine corruption types at severities 2 and 4 to 10,000 ImageNet images with 1,000 candidate classes. Syn-B applies these same 18 conditions to the 1,000 original images underlying ES \citep{baek2024} and Diverse \citep{baek2025}, spanning 200 classes with five images per class. ES provides 54 screen-recapture conditions and Diverse provides 162 print-recapture conditions. This links synthetic and physical responses for the same images, enabling sample prioritization. 

Images are split into development and evaluation sets within each class. Identification analyses use evaluation records, while acceptance thresholds are set using development images. Real capture conditions are grouped by mean development accuracy across models into mild (at least 40\%), severe (20\% to below 40\%), and extreme (below 20\%) tiers. Appendix~\ref{app:protocol} provides further details on data construction and evaluation protocols.

\section{Retrospective Diagnosis: Information Requirements and Policy Costs}\label{sec:retro}
With both sets of prediction records fixed, we examine which correspondence information constrains reliability changes (Section~\ref{sec:overlap-targets}), supports policy comparisons under different costs (Section~\ref{sec:joint-policy}), and reduces ambiguity in the observed records (Section~\ref{sec:retro-empirical}). The empirical analysis distinguishes certified policy orderings from realized selection outcomes.

\begin{table}[!b]
\centering\setlength{\tabcolsep}{4pt}
\caption{Information and supported conclusions for fixed single-label records and within-class reassignment. Both sides' records remain available; $dM_k$ requires positive retained-correct support.}
\label{tab:information-guide}

\small
\begin{tabularx}{\linewidth}{@{}l >{\raggedright\arraybackslash}p{0.25\linewidth} Y@{}}\toprule
Target & Information used & Supported conclusion \\\midrule
$E_\tau$ & Perturbed records & Exact accepted-error mass. \\
$L,\ dM_k$ & Classwise overlaps $\mathcal I_k$ & Sharp bounds equal those under every feasible five-state refinement. \\
$H$ & Five-state counts $\mathcal I_5$ & Can tighten overlap-conditioned bounds. \\
$J_j-J_\ell$ & Shared correspondence across policies & Can certify an ordering throughout the feasible set despite overlapping cost intervals. \\\bottomrule
\end{tabularx}
\end{table}

\subsection{Correctness Overlap Suffices for Two Targets}\label{sec:overlap-targets}
\begin{theorem}[Endpoint sufficiency of correctness overlap]\label{prop:sufficiency}
For fixed single-label prediction records and feasible classwise overlaps $k_y$, the identified intervals for $L$ and, when $K=\sum_y k_y>0$, $dM_k$ are identical under $\mathcal I_k$ and every feasible classwise five-state refinement $\mathcal I_5$.
\end{theorem}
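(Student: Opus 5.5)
Since $\Pi(\mathcal I_5)\subseteq\Pi(\mathcal I_k)$, every interval under a five-state refinement lies inside the corresponding interval under $\mathcal I_k$. So it suffices to prove the reverse inclusion of endpoints. Fix any feasible refinement $(h_y,r_y,s_y,w_y,k_y)_y$ and any matching $\pi^\ast\in\Pi(\mathcal I_k)$ attaining an endpoint of $L$ or $dM_k$. I will construct $\tilde\pi\in\Pi(\mathcal I_5)$ with the same target value. Because classwise choices form a Cartesian product, the construction can be done one class $y$ at a time.

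\textbf{Step 1: the targets depend only on the correct--correct block.} For $L$, write
\[
L(\pi)=\frac1N\sum_y\#\{i:C_i=1,\ C'_{\pi_y(i)}=1,\ Z_i=1,\ Z'_{\pi_y(i)}=0\}.
\]
This count depends only on the \emph{correct--correct block} of class $y$. That block is the set $A_y$ of $k_y$ clean-correct records, the set $B_y$ of $k_y$ perturbed-correct records, and the bijection between them. For $dM_k$, the denominator $K$ is fixed. The numerator equals $\sum_{i\in A_y}M_i-\sum_{j\in B_y}M'_j$, which depends only on the sets $A_y,B_y$. Hence any matching that reproduces $\pi^\ast$'s block in every class has the same value of $L$ and $dM_k$ as $\pi^\ast$.

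\textbf{Step 2: decompose the rest of the matching.} Given the block, the remaining records of class $y$ are matched as follows.
\begin{itemize}
\item The $a_y-k_y$ leftover clean-correct records must go to perturbed-wrong records, so each such pair is an $h$ pair. Thus $h_y=a_y-k_y$ is forced.
\item The $b_y-k_y$ leftover perturbed-correct records must go to clean-wrong records, giving $r$ pairs, so $r_y=b_y-k_y$.
\item The remaining $n_y-a_y-b_y+k_y$ wrong--wrong pairs split into $s$ or $w$ according to whether the predicted labels agree.
\end{itemize}
The first two items need no label condition: a correct and a wrong prediction always have different labels. So $h_y$ and $r_y$ hold automatically. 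The split into $s_y$ versus $w_y$ depends only on three choices, all involving wrong records only:
\begin{itemize}
\item which clean-wrong records are sent to corrections,
\item which perturbed-wrong records are sent to harmful flips,
\item how the remaining wrong records are matched among themselves.
\end{itemize}
None of these choices involves which correct records form $A_y$ and $B_y$.

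\textbf{Step 3: splice.} Take any witness $\pi^{(5)}\in\Pi(\mathcal I_5)$ for the given refinement; one exists by feasibility. Build $\tilde\pi_y$ in three parts:
\begin{itemize}
\item its correct--correct block is that of $\pi^\ast_y$;
\item its wrong--wrong block is copied from $\pi^{(5)}_y$;
\item the leftover clean-correct records are assigned bijectively to the perturbed-wrong records that $\pi^{(5)}_y$ used for $h$, and the leftover perturbed-correct records to the clean-wrong records that $\pi^{(5)}_y$ used for $r$.
\end{itemize}
The cardinalities match, since both matchings have block size $k_y$. So $\tilde\pi_y$ is a valid within-class matching. It reproduces all five counts of $\pi^{(5)}$ by Step 2, and it has the target value of $\pi^\ast$ by Step 1. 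Applying this to both the minimizing and maximizing matchings yields equality of the identified intervals. When $K>0$, $dM_k$ is defined throughout both feasible sets.

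\textbf{Main obstacle.} The delicate point is Step 2's independence claim. One must check that swapping which correct records sit in the block cannot affect wrong-answer agreement. This holds because every correct record has the same label $y$, and the $s/w$ distinction only ever compares two wrong predictions. I would make this precise by stating the matching as a disjoint union of three bijections: correct--correct, mixed, and wrong--wrong. I would also verify that the mixed bijections are unconstrained by $\mathcal I_5$ beyond their sizes.
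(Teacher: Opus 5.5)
Your proof is correct and is essentially the paper's argument. Your splice of $\pi^\ast$'s correct--correct block onto the wrong-record structure of an $\mathcal I_5$ witness is exactly what the paper gets by independently permuting clean and perturbed correct identities within a five-state witness. That fixes the wrong--wrong matching and reaches every correct--correct block, and, as in the paper's proof of Corollary~\ref{cor:natural}, your direct transfer shows that the whole attainable sets coincide; the paper's appendix proof additionally derives the closed-form endpoints in Equations~\eqref{eq:Lk-bound} and~\eqref{eq:dmk-bound} and checks that they stay attainable.
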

\noindent\emph{Proof idea.} Fix a correspondence with the requested five-state counts. Independently permuting clean and perturbed correct identities among their correct positions preserves the wrong--wrong matching and every state count. These permutations select any $k_y$ correct records on each side and any bijection between them. Subset intersections and extreme score sums give the sharp $L$ and $dM_k$ bounds, respectively (Equations~\eqref{eq:Lk-bound} and~\eqref{eq:dmk-bound}, Appendix~\ref{app:d2}). Overlaps suffice for the endpoints, but observed values may still vary within them (Table~\ref{tab:information-guide}).

\paragraph{Why accepted-error transitions differ.}\label{prop:H}
For $H$, changing the wrong perturbed endpoint can change the persistent/changed-wrong split. Five-state refinement can therefore strictly narrow its overlap-conditioned interval. Appendix~\ref{app:d2} gives the bounds and a three-record counterexample.

\subsection{Shared Correspondence and the Choice of Policy Cost}\label{sec:joint-policy}
We compare fixed rejection policies under partially identified outcomes \citep{benmichael2024asymmetric}. Keep the clean threshold fixed and let policy $j$ use perturbed threshold $\tau_j$. Substituting $Z'_j=\mathbf1[S'\geq\tau_j]$ defines $H_j,L_j$ and transition cost $J_j=\lambda H_j+(1-\lambda)L_j$, $\lambda\in[0,1]$. This cost separately penalizes accepted errors and lost correct acceptances originating from initially correct accepted predictions. Within each class, these events involve disjoint outcomes of the same records. We consider a cost counting all accepted errors below.

Given their sharp classwise count bounds, constraining their sum to at most the number $u_y$ of correct accepted clean records determines the sharp single-policy upper cost bound (Lemma~\ref{lem:cap}, Appendix~\ref{app:d2}). No additional joint enumeration is needed once those bounds are known; computing five-state $H$ bounds can still require enumeration, and the lower cost bound need not separate this way.

\paragraph{Comparisons use one common correspondence.}\label{prop:certificate}
For policies $j,\ell$, the elementary inequality
\begin{equation}
\overline\Delta_{j\ell}(\mathcal I)=\max_{\pi\in\Pi(\mathcal I)}[J_j(\pi)-J_\ell(\pi)]\leq\overline J_j(\mathcal I)-\underline J_\ell(\mathcal I)\label{eq:joint-policy-difference}
\end{equation}
can be strict because separate endpoints may require different correspondences. A negative left side establishes that $j$ costs less throughout the feasible set, even if the separate intervals overlap. Appendix~\ref{app:joint-policy} reports these additional comparisons. Their information requirements depend on what the cost counts:

\begin{corollary}[A cost based on total accepted errors]\label{cor:natural}
Let $J'_j(\pi)=\lambda E_{\tau_j}+(1-\lambda)L_j(\pi)$, where $E_{\tau_j}$ is the perturbed-side accepted-error mass. The identified intervals of $J'_j$ and the upper difference bounds $\overline\Delta'_{j\ell}$ are identical under $\mathcal I_k$ and $\mathcal I_5$.
\end{corollary}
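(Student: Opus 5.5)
The plan is to reduce the corollary to the permutation argument behind Theorem~\ref{prop:sufficiency}. The key observation is that $E_{\tau_j}=\Prb(C'=0,Z'_j=1)$ is a function of the perturbed records alone, so it is the same constant under every matching $\pi$. Consequently
\[
J'_j(\pi)=\lambda E_{\tau_j}+(1-\lambda)L_j(\pi)
\]
varies over any feasible set only through $L_j(\pi)$, and
\[
J'_j(\pi)-J'_\ell(\pi)=\lambda\bigl(E_{\tau_j}-E_{\tau_\ell}\bigr)+(1-\lambda)\bigl(L_j(\pi)-L_\ell(\pi)\bigr).
\]
For $\lambda<1$ it therefore suffices to show two things. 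First, the range of $L_j$ over $\Pi(\mathcal I_k)$ equals its range over $\Pi(\mathcal I_5)$. Second, the maximum of $L_j-L_\ell$ over $\Pi(\mathcal I_k)$ equals its maximum over $\Pi(\mathcal I_5)$. The case $\lambda=1$ is trivial, since then everything is a constant.

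\textbf{Identified intervals of $J'_j$.} Since $\Pi(\mathcal I_5)\subseteq\Pi(\mathcal I_k)$, the $\mathcal I_5$ interval lies inside the $\mathcal I_k$ interval, and only the reverse inclusion needs proof. The $\mathcal I_5$ count vector is feasible (the observed matching lies in $\Pi(\mathcal I_5)$), so the proof idea of Theorem~\ref{prop:sufficiency} applies to it. Fix any $\pi_5\in\Pi(\mathcal I_5)$. Permuting, independently within each class, which correct clean records and which correct perturbed records occupy the correct positions preserves all five state counts. By choosing these permutations we can make the $k_y$ correct--correct pairs consist of any prescribed $k_y$-subset of correct clean records, any prescribed $k_y$-subset of correct perturbed records, and any bijection between them.

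Now $L_j(\pi)$ depends only on the correct--correct pairs: it counts pairs with $Z=1$ and $Z'_j=0$. Every $\pi\in\Pi(\mathcal I_k)$ is determined, as far as $L_j$ is concerned, by such subset-and-bijection data in each class. That data can therefore be realized inside $\Pi(\mathcal I_5)$ with the same $L_j$. This gives the equality of the intervals of $L_j$, which is the statement of Theorem~\ref{prop:sufficiency} with $Z'$ replaced by $Z'_j$; the theorem holds for any fixed perturbed threshold. Equality of the intervals of $J'_j$ then follows from the affine relation above.

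\textbf{Upper difference bound $\overline\Delta'_{j\ell}$.} This is the step I expect to need the most care, because the two policies must be evaluated under one common matching. The same construction handles it. $L_j(\pi)-L_\ell(\pi)$ is again a function only of the correct--correct pairing data, since both policies share the clean threshold and differ only through $Z'_j$ and $Z'_\ell$ on the same perturbed records. Given a maximizer $\pi^\ast\in\Pi(\mathcal I_k)$, I take its classwise correct--correct subsets and bijection. I then realize them by a single permuted correspondence in $\Pi(\mathcal I_5)$, obtained from any $\pi_5$ as above. Both $L_j$ and $L_\ell$ are unchanged simultaneously, so the difference is unchanged. Hence $\overline\Delta'_{j\ell}(\mathcal I_5)\ge\overline\Delta'_{j\ell}(\mathcal I_k)$, and the reverse inequality follows from set inclusion.

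The only point to verify carefully is that the permutations within the correct positions do not alter the wrong--wrong block. That block determines $s_y$ and $w_y$, and $h_y$ and $r_y$ are fixed by the counts, so this holds. This reconfirms that the corollary differs from the $H$-based cost: the accepted-error term no longer depends on which wrong perturbed record is matched where.
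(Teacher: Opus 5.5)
Your proposal is correct and follows the paper's proof: $E_{\tau_j}$ is constant on every feasible set, $L_j$ and $L_j-L_\ell$ depend on $\pi$ only through its classwise correct--correct block, and the independent row and column permutations of correct identities realize any $\Pi(\mathcal I_k)$ block inside $\Pi(\mathcal I_5)$ under one common correspondence. The paper states this as the whole vector $(L_j(\pi))_j$ ranging over the same set at both levels, which is what your single-correspondence realization of $\pi^\ast$ establishes.
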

Here $E_{\tau_j}$ is marginal, and correct-record symmetry preserves the attainable set of the entire vector $(L_j)_j$ across both levels. Thus the value of wrong-answer refinement depends on whether the cost tracks errors through $H$ or counts all accepted errors through $E_\tau$ (proof in Appendix~\ref{app:d2}).

Table~\ref{tab:information-guide} summarizes the information requirements and the conclusions they support for the targets considered in this section.

\subsection{Findings: How Much Ambiguity Remains in the Observed Records?}\label{sec:retro-empirical}
Acceptance uses the same $\tau$ on both sides, targeting 80\% clean-development coverage and accepting cutoff ties. Policy comparisons fix this clean threshold; five perturbed-side thresholds target 60--100\% clean-development coverage. Analytic bounds cover 11,088 model--condition cells; additional bound computations for five-state $H$ and positive-overlap marginal $dM_k$ cover a frozen, nonrandom 450-cell panel spanning all 44 models and four sources (Appendices~\ref{app:empirical} and~\ref{app:correspondence-value}).

\begin{figure}[!b]
\centering\includegraphics[width=\linewidth]{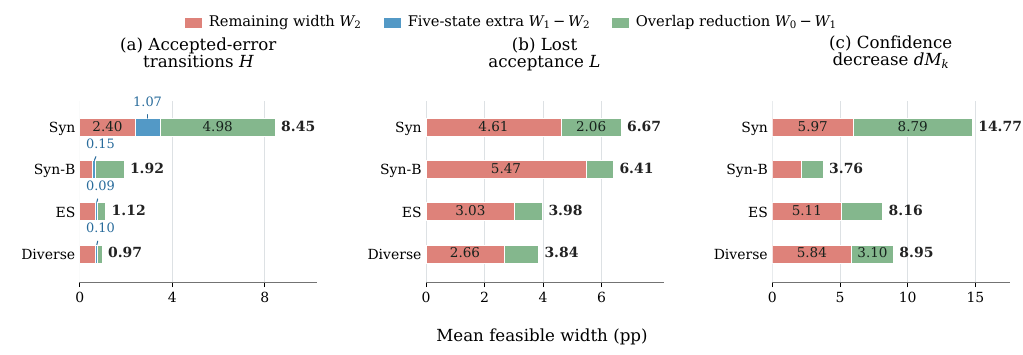}
\caption{Mean identified widths in the 450-cell panel measure correspondence ambiguity. $W_0$, $W_1$ and $W_2$ use marginal records, added classwise overlap and added five-state counts. Colors partition $W_0$ into remaining width and two reductions; bold labels give $W_0$. Five-state refinement narrows $H$, but not $L$ or $dM_k$. Marginal $dM_k$ ranges over positive overlaps; later levels fix observed overlaps.}
\label{fig:d2}
\end{figure}

Overlap reduces full-grid mean $L$ widths by 14--31\% across sources. Five-state refinement further narrows $H$ in 195/450 cells, but not $L$ or $dM_k$, often leaving values undetermined (Figure~\ref{fig:d2}). Appendix~\ref{app:empirical} gives full-grid results and matched-support checks for cross-source comparisons.

Table~\ref{tab:policy-overview} distinguishes certified orderings from realized selection outcomes. Its first three rows count cells with an additional policy pair for which one policy has lower $J$ under every feasible correspondence. The last row compares observed costs of minimax-regret and feasible-mean selection; lower, higher and equal refer to the former relative to the latter. Under $J'$, five-state refinement adds no comparisons beyond overlap (Corollary~\ref{cor:natural}). Appendix~\ref{app:joint-policy} gives source-level results, the weight grid and a worked print-recapture example.

\begin{table}[!tb]
\centering
\caption{Policy comparisons and selection on the frozen 450-cell panel, $\lambda=0.5$. Ordering rows count cells with additional strict comparisons; these counts can overlap. Counted orderings require both correct and incorrect predictions among threshold-switching records. Feasible-mean selection minimizes mean cost over uniformly weighted feasible correspondences.}
\label{tab:policy-overview}
\setlength{\tabcolsep}{4pt}

\small\fontsize{8.5}{10}\selectfont
\begin{tabularx}{\linewidth}{@{}>{\hsize=.9\hsize\linewidth=\hsize}Y >{\hsize=1.1\hsize\linewidth=\hsize}Y l@{}}
\toprule
Question & Comparison & Result \\\midrule
Shared correspondence: more orderings? & Shared correspondence vs. separate intervals ($\mathcal I_5$) & 312/450 (69.3\%) \\
Correctness overlap: more orderings? & $\mathcal I_0\rightarrow\mathcal I_k$ & 241/450 (53.6\%) \\
Five-state counts: more orderings? & $\mathcal I_k\rightarrow\mathcal I_5$ & 42/450 (9.3\%) \\\midrule
Minimax regret: lower realized cost? & Minimax regret vs. feasible mean ($\mathcal I_5$) & \shortstack[l]{4 lower / 3 higher\\443 equal} \\\bottomrule
\end{tabularx}
\end{table}

\begin{figure}[!b]
\centering\includegraphics[width=\linewidth]{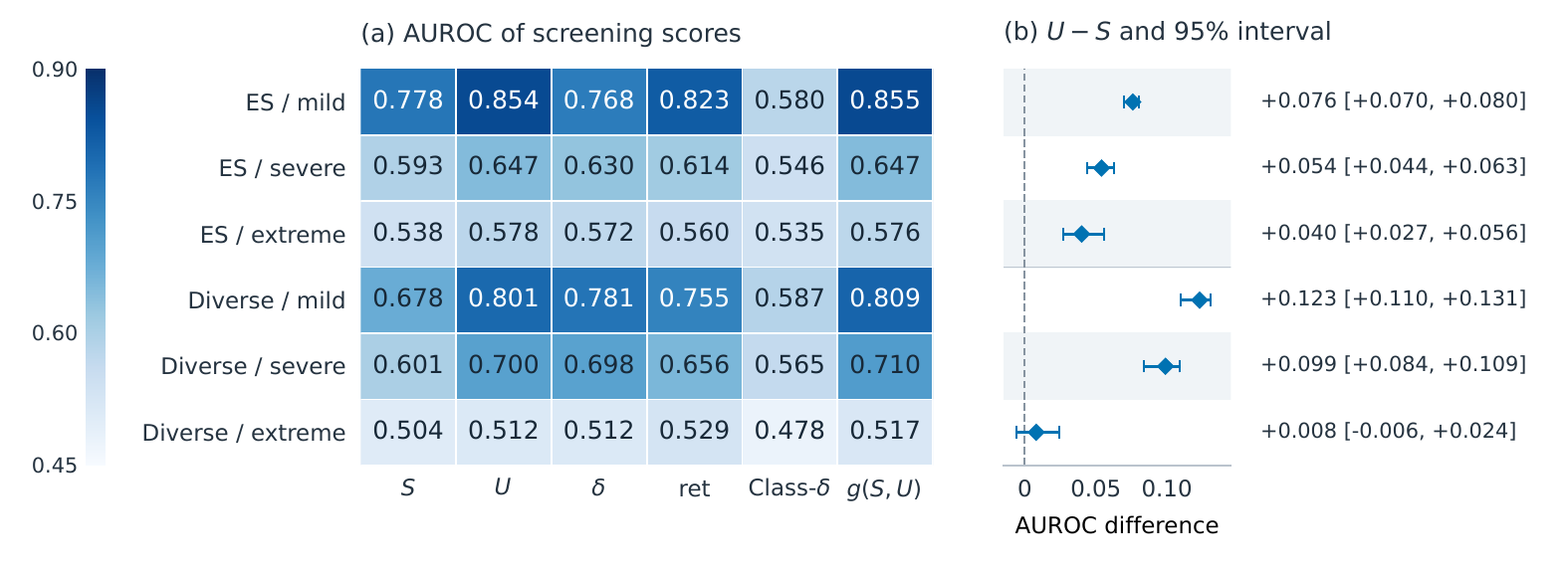}
\caption{Screening scores are evaluated by how well they distinguish images that fail under real recapture from those that remain correct. All scores use identical clean-correct image sets. (a) AUROC on a common color scale. Class-$\delta$ is the development-set class mean of $M-U$; $g(S,U)$ is fitted on development images. (b) Paired AUROC differences $U-S$ with 95\% model-group resampling intervals; the dashed line marks zero gain. Numerical results, class-resampled intervals and cell counts are in Appendix~\ref{app:screening}.}
\label{fig:screening-auroc}
\end{figure}

\section{Prospective Screening: Which Images Deserve Real Recapture?}\label{sec:prosp}
After examining what correspondence establishes about completed evaluations, we test whether image-specific synthetic responses help prioritize physical tests. We define scores and test image identity (Section~\ref{sec:incrementmethod}), measure failure discovery at fixed budgets (Section~\ref{sec:screening-budget}), compare probe sets and scores at equal inference counts (Section~\ref{sec:tta}), and assess scope and inference cost (Section~\ref{sec:screening-scope}).

\subsection{Scores, Image Sets, and Baselines}\label{sec:incrementmethod}

For each model's clean-correct images among the 500 real evaluation base images, define $U_i$ as mean true-class probability over the 18 Syn-B conditions, $\mathrm{ret}_i$ as the fraction remaining correct, and $\delta_i=S_i-U_i$. Here true and original predicted labels coincide, so $U$ uses the predicted-class averaging construction of \citet{bahat2020tta} on corruptions excluding the original image. We rank subsequent recapture failures ($h$ rather than $k$) by $-U$, $-S$, $\delta$ and $-\mathrm{ret}$ on identical image sets.

Controls include development-class means and a frozen logistic predictor $g(S,U)$ fitted on development images. Intervals separately resample ten model groups or 200 true classes to assess within-pool sensitivity. Appendix~\ref{app:screening} details fitting, identity ablation and matched class baselines.

Figure~\ref{fig:screening-auroc} compares six scores across sources and tiers and the AUROC gain of $U$ over $S$. On mild ES/Diverse, $U$ reaches 0.854/0.801, above clean confidence, retention and the class-mean drop baseline; $g(S,U)$ differs by at most 0.01. The drop $\delta$ ranks worse by 0.086/0.020, with exceptions at other budgets or splits. On extreme Diverse, $U$ reaches only 0.512 with an inconclusive gain over $S$.

Image identity contributes to screening: assigning $U$ to another clean-correct image of the same class reduces mild AUROC from about 0.85/0.80 to 0.55/0.58 on the common non-singleton subset. A development-class-mean $U$ predictor reaches about 0.60 on both sources on its supported subset. These prediction controls complement the fixed-record identification analysis (protocols in Appendix~\ref{app:screening}).

\subsection{Failure Coverage at a Fixed Testing Budget}\label{sec:screening-budget}
For a budget $q$, select the highest-risk fraction of clean-correct images and measure the share of all real failures found, with cutoff ties prorated. This \emph{failure coverage} is recall, not the error rate within the selected images. At the primary $q=0.2$, $U$ covers 67\% and 45\% of mild ES and Diverse failures, against 58\% and 36\% for clean confidence (Figure~\ref{fig:screening-budget}). The $U-S$ differences exclude zero under both resampling schemes at all tested budgets $q\in\{0.1,0.2,0.3\}$ on mild conditions. On Diverse, coverage is nearly tied with $\delta$ at $q=0.2$, and both $\delta$ and retention are better at $q=0.1$ (Table~\ref{tab:screening} from Appendix~\ref{app:screening}).

\begin{figure}[!t]
\centering\includegraphics[width=0.88\linewidth]{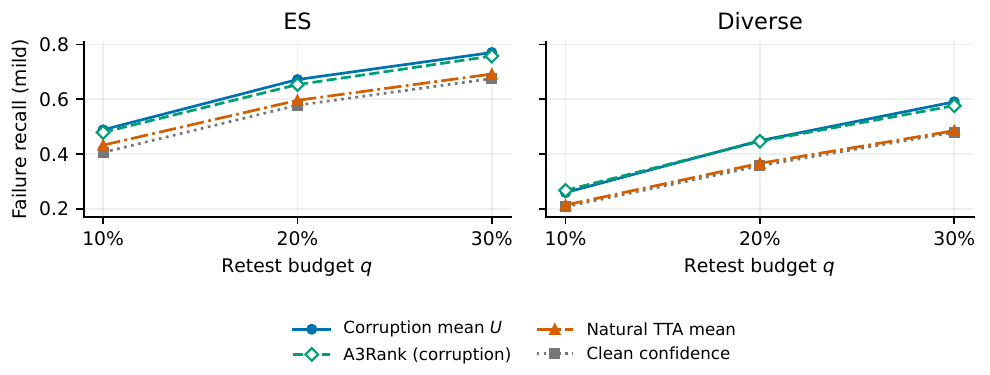}
\caption{Different rankings are evaluated by the fraction of real failures they uncover at a given recapture budget. Results cover three budgets on mild conditions. Curves average the same model--condition cells; $A^3$ uses the same corruptions as $U$. Intervals and other tiers are in Appendix~\ref{app:screening}.}
\label{fig:screening-budget}
\end{figure}

On extreme conditions, high failure prevalence leaves little room to improve coverage despite near-chance ranking. Appendix~\ref{app:screening} quantifies this prevalence-imposed ceiling.

\subsection{Probe Sets and Scoring Rules at Equal Inference Counts}\label{sec:tta}
Following \citet{bahat2020tta}, $\mathrm{TTA}_{18}$ averages true-class probabilities over 18 geometric and gamma transformations without label-based selection, matching $U_{18}$'s inference count. The protocol fixed mild-condition coverage at $q=0.2$ as primary and 0.03 as the practical-effect threshold. Figure~\ref{fig:probe-score} crosses the probe sets with probability averaging and an adapted A3Rank alignment score (Appendix~\ref{app:alignment-baseline}).

\begin{figure}[!tb]
\centering\includegraphics[width=\linewidth]{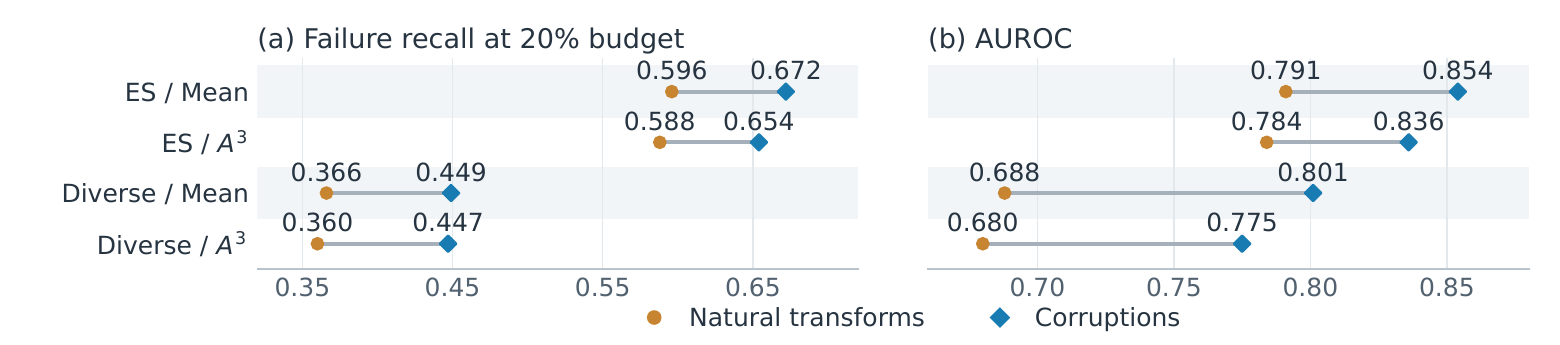}
\caption{At equal inference counts, the tested corruptions yield higher mean failure recall and AUROC than natural transforms under both scoring rules. Comparisons use identical mild-condition image sets and 18 transformed inputs each. Mean denotes probability averaging; $A^3$, adapted A3Rank. Lines connect point estimates; selected difference intervals appear in Appendices~\ref{app:alignment-baseline} and~\ref{app:probe-diagnostics}.}
\label{fig:probe-score}
\end{figure}

The corruption average exceeds the natural average by 0.076 coverage on ES and 0.083 on Diverse, with both model-group and class-resampled interval lower endpoints above 0.03; the AUROC gaps are 0.063 and 0.114. Under either scoring rule, the tested corruption set yields higher mean recall and AUROC than the natural set. Score preferences depend on source, metric and budget: on Diverse, $U$ and the corruption-based $A^3$ have an inconclusive recall difference at 20\%, and $A^3$ leads by 0.008 at 10\% (Figure~\ref{fig:screening-budget}), while $U$ has higher AUROC.

Mild gains persist against additional confidence baselines; checks of averaging, transform selection and model scoring retain the advantage over TTA (Appendices~\ref{app:simple-baselines} and~\ref{app:screening}); Appendix~\ref{app:probe-diagnostics} examines individual probes and capture conditions. Because the sets differ in transformation family and response strength, these results establish a tested-set advantage rather than isolate corruption type.

\subsection{Scope and Additional Inference Cost}\label{sec:screening-scope}
Screening was initially explored on evaluation images. Swapping split roles and defining tiers on the opposite half preserve mild $U-S$ gaps within the same image, model and capture pool. Recapture budgets exclude the additional synthetic inference. Appendix~\ref{app:screening} reports split and scoring checks, including $\delta$, and gains from prefixed probe subsets (Table~\ref{tab:cost-curve} from Appendix~\ref{app:screening}); these characterize the tested configurations, not arbitrary smaller sets.

\section{Boundaries of Other Uses}\label{sec:boundaries}
\paragraph{Do paired summaries improve model-level prediction?}
Model-level summaries of synthetic responses are used to predict performance under mild real perturbations. Comparisons with unpaired baselines across held-out model groups test whether retaining image correspondence adds predictive value. The gains depend on the prediction target and baseline. This analysis separates the usefulness of correspondence for predicting model-level performance from its usefulness for prioritizing individual images. Appendix~\ref{app:new-p8} reports the detailed comparisons and group-removal refits.

\section{Scope and Conclusion}\label{sec:limits}
The bounds assume fixed labeled records, unrestricted within-class reassignment and fixed record-level decisions. ReaL checks follow this convention; human recapture-label review remains pending. Resampling assesses sensitivity within the shared image, model and capture pool. Policy conclusions depend on thresholds and costs, with certified comparisons but mixed realized-selection outcomes.

Image correspondence supports \textbf{interpreting paired evaluation results} and \textbf{prioritizing samples for physical testing}. Correctness overlap gives the same sharp $L$ and $dM_k$ bounds as every feasible five-state refinement; distinguishing persistent from changed wrong answers can tighten $H$. Shared correspondence certifies policy orderings unresolved by separate cost intervals.

On mild screen and print conditions, image-specific corruption responses improve mean failure recall over clean confidence and the tested equal-size natural-transformation average at a fixed recapture budget. Identity ablations establish the contribution of image correspondence. Both probability averaging and adapted A3Rank favor the tested corruption set, with score preferences varying by source and budget. These findings guide information retention, reliability judgments and physical testing within the evaluated pool.
\label{main-end}
\clearpage
\section*{Statement on AI Assistance}
Large language models assisted with research ideation and experimental implementation, literature retrieval and reference organization and polishing manuscript sections, and developing and checking mathematical proofs. They also supported analysis code and typesetting. No generative models were used to synthesize the experimental images. Verification included separate computational implementations and agent-assisted reviews; these checks do not constitute independent human replication. The authors take responsibility for the final text, mathematical claims, code, and reported results.

\section*{Reproducibility Statement}
We provide supplementary code and documentation to support reproduction of the experiments and analyses. The supplementary materials specify the fixed image identities, development/evaluation splits, perturbation conditions, scoring protocol, and the 44 evaluated models and their statistical groupings, together with the 450-cell panel described in Section~\ref{sec:retro}. Detailed inference settings, split definitions, and threshold conventions are provided in Appendix~\ref{app:protocol}, while the accompanying documentation provides instructions for obtaining the data and running the experiments. Model checkpoints and the datasets are obtained from their original sources; access to some weights (e.g., DINOv3) might require the official access procedure. The clean images are identical to those used in the study, and predictions on the physical recaptures reproduce the recorded results. For synthetic perturbations, the original per-image corruption seeds were not retained. Regeneration therefore follows the documented experimental protocol without guaranteeing pixel-level identity with the original perturbed images.

\begingroup
\interlinepenalty=10000
\bibliography{references}
\bibliographystyle{iclr2027_conference}
\endgroup
\clearpage
\appendix
\makeatletter
\setlength{\@fptop}{0pt}
\setlength{\@fpsep}{18pt plus 2pt minus 2pt}
\setlength{\@fpbot}{0pt plus 1fil}
\makeatother
\raggedbottom
The appendices support the two main questions separately. Appendices~\ref{app:d2}--\ref{app:joint-policy} give proofs, numerical checks and full policy comparisons. Appendix~\ref{app:screening} gives screening protocols and controls, followed by inference details and the model inventory. The final appendices present model-level prediction comparisons and annotation-sensitivity analyses.

\section{Proofs}\label{app:d2}
This appendix proves the endpoint and policy-cost results of Section~\ref{sec:retro} under record-attached correctness and within-class reassignment.

\subsection{Feasible records and count constraints}
Fix one true label $y$, $n_y$ clean records and $n_y$ perturbed records. Correctness means that the single predicted label equals $y$. Probabilities, predicted labels and acceptance decisions are attached to records; they are not recomputed after reassignment. A feasible correspondence is any bijection within the true class. No geometric, temporal or physical compatibility constraints are imposed.

Let $a_y,b_y$ be the two correct counts. A correct--correct overlap $k_y$ is feasible exactly when
\begin{equation}
\max(0,a_y+b_y-n_y)\leq k_y\leq\min(a_y,b_y).
\end{equation}
The correctness-block sizes are then $k_y$, $h_y=a_y-k_y$, $r_y=b_y-k_y$, and $s_y+w_y=n_y-a_y-b_y+k_y$. A proposed $s_y/w_y$ split must also be realizable given the wrong predicted labels; nonnegativity and the displayed sum alone are insufficient. We condition only on nonempty feasible sets, including the observed counts.

\subsection{Marginal intersection bounds}
\label{lem:frechet}
For $q\in\{H,L\}$, let $u_y$ count correct accepted clean records and $v_{qy}$ count wrong accepted perturbed records for $H$, or correct rejected ones for $L$. Classical intersection bounds give
\begin{equation}
[\underline q_0,\overline q_0]=\frac1N\left[\sum_y\max(0,u_y+v_{qy}-n_y),\ \sum_y\min(u_y,v_{qy})\right],\qquad
q_{\ind}=\frac1N\sum_y\frac{u_yv_{qy}}{n_y},\label{eq:frechet}
\end{equation}
where $N=\sum_y n_y$ and $q_{\ind}$ uses class-conditional independent matching. The construction below proves sharpness; Appendix~\ref{app:correspondence-value} considers reports retaining only per-class accuracies.

For two record subsets of sizes $u_y,v_y$, pairing their members together as much as possible attains intersection count $\min(u_y,v_y)$. Pairing them outside one another as much as possible attains $\max(0,u_y+v_y-n_y)$. Arbitrary bijections complete the unused positions. Choices in distinct true classes form a Cartesian product, so classwise extrema can be summed and divided by $N=\sum_y n_y$. This gives Equation~\eqref{eq:frechet}. The product assignment with edge mass $1/n_y$ gives the independent value, which is also the expected count under a uniform permutation. These are finite-record identification statements, distinct from sampling inference for partially identified population parameters \citep{imbens2004}.

\subsection{A symmetry of the correct records}
Start from any bijection satisfying the specified classwise five-state counts. Permute clean correct identities among the $a_y$ correct row positions and, independently, perturbed correct identities among the $b_y$ correct column positions. Keeping all incorrect identities fixed leaves the entire wrong--wrong matching unchanged and therefore preserves every five-state count, even if the constraint retained every individual wrong--wrong predicted-label pair.

There are $k_y$ correct row positions and $k_y$ correct column positions in the correct--correct block. These permutations can select any $k_y$ correct clean records, any $k_y$ correct perturbed records, and any bijection between the selected subsets. This establishes the sufficiency of classwise correctness overlap for the endpoints below. The statement requires both row and column permutations.

\subsection{Sharp lost-acceptance bounds}
Write $v_y=v_{Ly}$. Under $\mathcal I_k$, the sharp lost-acceptance bounds are
\begin{equation}
[\underline L_k,\overline L_k]=\frac1N\left[\sum_y\max\{0,k_y-(a_y-u_y)-(b_y-v_y)\},\ \sum_y\min\{u_y,v_y,k_y\}\right].\label{eq:Lk-bound}
\end{equation}
Let $u_y$ count correct accepted clean records and $v_y$ correct rejected perturbed records. Among the $k_y$ retained-correctness pairs, the number $x$ with an accepted clean endpoint can range over
\[
\max\{0,k_y-(a_y-u_y)\}\leq x\leq\min\{u_y,k_y\}.
\]
The number $z$ with a rejected perturbed endpoint ranges analogously from $\max\{0,k_y-(b_y-v_y)\}$ to $\min\{v_y,k_y\}$. For fixed $x,z$, their overlap in the $k_y$ positions ranges from $\max\{0,x+z-k_y\}$ to $\min\{x,z\}$. Selecting the smallest or largest feasible $x,z$ yields Equation~\eqref{eq:Lk-bound}. The preceding symmetry makes both extrema attainable under every additional feasible $s_y/w_y$ split. Classwise count extrema are normalized by the total number of inputs $N$. No assertion is made that extrema of different targets are simultaneously attainable.

\subsection{Sharp conditional-confidence bounds}
Order the true-class probabilities on each correct side as $x_{y(1)}\leq\cdots\leq x_{y(a_y)}$ and $z_{y(1)}\leq\cdots\leq z_{y(b_y)}$. For $K=\sum_y k_y>0$, the sharp interval is
\begin{equation}
[\underline d_k,\overline d_k]=\frac1K\left[\sum_y\left(\sum_{i=1}^{k_y}x_{y(i)}-\sum_{i=b_y-k_y+1}^{b_y}z_{y(i)}\right),\ \sum_y\left(\sum_{i=a_y-k_y+1}^{a_y}x_{y(i)}-\sum_{i=1}^{k_y}z_{y(i)}\right)\right].\label{eq:dmk-bound}
\end{equation}
 For a selected correct--correct block, its signed difference sum is the sum of selected clean scores minus the sum of selected perturbed scores. Its value is independent of the internal matching. Selecting the smallest clean $k_y$ scores and largest perturbed $k_y$ scores minimizes this sum; reversing the choices maximizes it. The correct-record symmetry permits both choices under every feasible wrong-state refinement. This proves Equation~\eqref{eq:dmk-bound}.

Empty sums are zero. When $K=\sum_y k_y>0$, class numerator extrema are summed and then divided by $K$; an unweighted average of class conditional means would be incorrect. When $K=0$, the conditional mean and its bounds are undefined. Knowing only total $K$ instead of each $k_y$ defines a different feasible set and is not the claim proved here. The argument uses elementary finite selection and separable costs.

Algorithm~\ref{alg:fixed-overlap-bounds} summarizes the computation of Equations~\eqref{eq:Lk-bound} and~\eqref{eq:dmk-bound}. It retains the valid $L$ bounds when $K=0$ and applies to fixed classwise overlaps.

\begin{algorithm}[tbp]
\caption{Sharp bounds with fixed classwise correctness overlap}\label{alg:fixed-overlap-bounds}

\small
\begin{list}{\arabic{enumi}:}{\usecounter{enumi}\setlength{\leftmargin}{1.9em}\setlength{\labelwidth}{1.3em}\setlength{\labelsep}{0.5em}\setlength{\itemsep}{1pt}\setlength{\parsep}{0pt}\setlength{\topsep}{4pt}}
\item \textbf{Require:} Classwise counts $n_y,a_y,b_y,u_y,v_y$, correct-side scores $x_y,z_y$, and feasible overlaps $k_y$; $N>0$.
\item $N\gets\sum_y n_y$, $K\gets\sum_y k_y$; $L^-\gets0$, $L^+\gets0$, $D^-\gets0$, $D^+\gets0$.
\item \textbf{for each class} $y$ \textbf{do}
\item \hspace*{1em} Sort $x_y$ and $z_y$ in ascending order as $x_{y(i)}$ and $z_{y(i)}$.
\item \hspace*{1em} $L^-\gets L^-+\max\{0,k_y-(a_y-u_y)-(b_y-v_y)\}$.
\item \hspace*{1em} $L^+\gets L^++\min\{u_y,v_y,k_y\}$.
\item \hspace*{1em} \textbf{if} $k_y>0$ \textbf{then}
\item \hspace*{2em} $D^-\gets D^-+\sum_{i=1}^{k_y}x_{y(i)}-\sum_{i=b_y-k_y+1}^{b_y}z_{y(i)}$.
\item \hspace*{2em} $D^+\gets D^++\sum_{i=a_y-k_y+1}^{a_y}x_{y(i)}-\sum_{i=1}^{k_y}z_{y(i)}$.
\item \hspace*{1em} \textbf{end if}
\item \textbf{end for}
\item $[\underline L_k,\overline L_k]\gets[L^-/N,L^+/N]$.
\item \textbf{if} $K>0$ \textbf{then}
\item \hspace*{1em} $[\underline d_k,\overline d_k]\gets[D^-/K,D^+/K]$.
\item \textbf{else}
\item \hspace*{1em} Mark the conditional-confidence bounds as undefined.
\item \textbf{end if}
\item \textbf{Ensure:} Lost-acceptance bounds $[\underline L_k,\overline L_k]$ and conditional-confidence bounds, when defined.
\end{list}
\end{algorithm}

\FloatBarrier
\subsection{A counterexample for accepted-error transitions}
Let the true class be $T$, with alternative labels $A,B$, and accept maximum probability at least $0.6$. The two sides contain the probability vectors in Table~\ref{tab:counterexample-records}, in the order $(T,A,B)$.
\begin{table}[H]
\centering
\caption{Prediction records for the accepted-error counterexample.}
\label{tab:counterexample-records}
\begin{tabular}{@{}lll@{}}\toprule
Record & Clean & Perturbed\\\midrule
1 & $(0.6,0.2,0.2)$ & $(0.6,0.2,0.2)$\\
2 & $(0.1,0.8,0.1)$ & $(0.1,0.8,0.1)$\\
3 & $(0.1,0.8,0.1)$ & $(0.3,0.3,0.4)$\\\bottomrule
\end{tabular}
\end{table}
Fix $k_y=0$. The correct clean record may pair with either wrong perturbed record, so the $H$ count can be zero or one. Imposing $s_y=1,w_y=0$ forces the single wrong--wrong edge to terminate at perturbed record 2, leaving record 3 for the correct clean record; consequently $H=0$. Imposing $s_y=0,w_y=1$ forces the opposite allocation, giving count one and rate $H=1/3$. Both are feasible bijections. Adding one correct accepted record to each side and fixing $k_y=1$ gives the same distinction with positive $K$ and denominator four.

Before the wrong-state refinement, let $t_y$ count wrong accepted perturbed records and put $h_y=a_y-k_y$. The sharp $H$ count bounds are
\begin{equation}
\max\{0,h_y-(a_y-u_y)-((n_y-b_y)-t_y)\},\qquad
\min\{u_y,t_y,h_y\}.\label{eq:Hk-bound}
\end{equation}
This is the same subset-intersection argument in the correct--wrong block. Attaining its endpoints may require permuting incorrect perturbed identities, which can change $s_y/w_y$; the invariance proof for $L$ therefore does not apply.

\subsection{Single-policy upper bounds and proof}
Let $h_{jy},l_{jy}$ count the events $H_j,L_j$ within class $y$, and let $u_y$ count correct accepted clean records. In class $y$, the event counts $h_{jy},l_{jy}$ draw from the same $u_y$ clean records and have disjoint perturbed endpoints, so $h_{jy}+l_{jy}\leq u_y$.

\begin{lemma}[Single-policy upper bounds]\label{lem:cap}
For every $\mathcal I\in\{\mathcal I_0,\mathcal I_k,\mathcal I_5\}$, policy $j$ and $\lambda\in[0,1]$,
\begin{equation}
\max_{\pi\in\Pi(\mathcal I)}J_j(\pi)=\frac1N\sum_y\max\{\lambda h+(1-\lambda)l:\ h\in[\underline h_{jy},\overline h_{jy}],\ l\in[\underline l_{jy},\overline l_{jy}],\ h+l\leq u_y\},\label{eq:cap}
\end{equation}
where the intervals are the sharp classwise count bounds for the two events at level $\mathcal I$.
\end{lemma}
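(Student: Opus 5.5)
The equality splits into an upper bound (``$\leq$'') and an attainment (``$\geq$''); we prove each, handling one class at a time.

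\emph{Upper bound.} Fix $\pi\in\Pi(\mathcal I)$ and write $J_j(\pi)=\frac1N\sum_y[\lambda h_{jy}(\pi)+(1-\lambda)l_{jy}(\pi)]$. For every class $y$, the pair $(h_{jy}(\pi),l_{jy}(\pi))$ satisfies three constraints:
\begin{itemize}
\item $h_{jy}(\pi)$ lies in $[\underline h_{jy},\overline h_{jy}]$, because these are its sharp classwise bounds at level $\mathcal I$;
\item $l_{jy}(\pi)$ lies in $[\underline l_{jy},\overline l_{jy}]$, for the same reason;
\item $h_{jy}(\pi)+l_{jy}(\pi)\leq u_y$, by the disjoint-endpoint observation stated before the lemma.
\end{itemize}
Hence each class summand is at most the classwise maximum in Equation~\eqref{eq:cap}, and summing over $y$ gives ``$\leq$''.

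\emph{Attainment: reduction to one class.} Feasible sets are Cartesian products over classes, so it suffices to realize the maximizing pair $(h^*,l^*)$ within each class by a classwise bijection consistent with $\mathcal I$. The objective is linear and the feasible region is a lattice polygon, so we may take $(h^*,l^*)$ to be an integer vertex. Since $\lambda,1-\lambda\geq0$, the maximum is attained either at $(\overline h,\overline l)$ when $\overline h+\overline l\leq u_y$, or on the edge $h+l=\min\{u_y,\overline h+\overline l\}$ clipped to the two intervals.

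\emph{Attainment: the construction.} Among the $u_y$ correct accepted clean records, we assign $h^*$ to wrong accepted perturbed records and $l^*$ to correct rejected perturbed records, then complete the bijection. We need to check that such a completion respects the level's constraints.
\begin{itemize}
\item \emph{Level $\mathcal I_0$:} there are no constraints beyond $h^*\leq\min(u_y,t_y)$, $l^*\leq\min(u_y,v_y)$ and $h^*+l^*\leq u_y$, so a greedy pairing followed by an arbitrary completion works.
\item \emph{Level $\mathcal I_k$:} the $H$-pairs must lie in the correct--wrong block of size $h_y=a_y-k_y$, and the $L$-pairs in the correct--correct block of size $k_y$. These blocks are disjoint, so the number of clean endpoints used by each event can be chosen separately, subject to the shared budget $u_y$. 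We realize this by first choosing how many accepted clean correct records go to each block and then applying the subset-intersection constructions of Equations~\eqref{eq:Lk-bound} and~\eqref{eq:Hk-bound} inside each block.
\item \emph{Level $\mathcal I_5$:} we use the correct-record symmetry to move the $L$ construction freely without disturbing the wrong--wrong matching. The $H$ count, however, depends on which wrong perturbed records are left for the correct clean rows. We therefore fix a wrong--wrong matching that attains $\overline h_{jy}$ (or the needed $h^*$) and observe that the correct--correct assignment is then unconstrained by it.
\end{itemize}

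\emph{Main obstacle.} The hard step is showing that attainability is jointly rectangular-minus-cap. We must show that whenever $h$ and $l$ are separately attainable and $h+l\leq u_y$, they are simultaneously attainable, particularly the lower endpoints, and particularly at $\mathcal I_5$, where $\underline h_{jy}$ may be forced by the wrong-state split. The intended route has two parts:
\begin{itemize}
\item The number of accepted clean correct records sent to the correct--wrong block versus the correct--correct block is a single free integer $m\in[\max(0,u_y-k_y),\min(u_y,h_y)]$.
\item For each $m$, the attainable $h$ and $l$ ranges are intervals that shift monotonically with $m$; taking their union over $m$ recovers exactly the capped rectangle.
\end{itemize}
We verify this by an exchange argument that swaps one accepted clean correct record between the two blocks, which changes $h$ and $l$ by at most one each.
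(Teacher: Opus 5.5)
\textbf{Gap: the step you call the main obstacle is false.} Your upper bound is the paper's argument. Your $\mathcal I_5$ bullet is also the paper's construction: freeze the wrong--wrong part of a correspondence attaining $\overline h_{jy}$, then use the correct-record symmetry to place the accepted correct clean records. The trouble is your claim that separately attainable $h$ and $l$ with $h+l\le u_y$ are always jointly attainable, ``particularly the lower endpoints''. Because this is false, the exchange argument meant to show that the union over $m$ is ``exactly the capped rectangle'' cannot succeed.

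Take one class with $n_y=2$:
\begin{itemize}
\item clean records $c_1$ (correct, accepted) and $c_2$ (correct, rejected), so $a_y=2$ and $u_y=1$;
\item perturbed records $p_1$ (correct, rejected by $\tau_j$) and $p_2$ (wrong, accepted by $\tau_j$), so $b_y=|R_{jy}|=|W_{jy}|=1$.
\end{itemize}
Feasibility forces $k_y=1$ and $s_y=w_y=0$. All three levels therefore admit both bijections, which give $(h,l)=(1,0)$ and $(0,1)$. Both sharp intervals are $[0,1]$, consistent with Equations~\eqref{eq:Lk-bound} and~\eqref{eq:Hk-bound}. The point $(0,0)$ satisfies the cap, yet no correspondence attains it, because $c_1$ must land in $W_{jy}\cup R_{jy}$. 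Your parametrization with $m\in\{0,1\}$ produces exactly these two points. In general the attainable set has a lower cut on $h+l$ that the capped rectangle does not record.

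\textbf{What the proof actually needs.} You do not need joint attainability at all. Since $\lambda,1-\lambda\ge0$, the maximum of the region sits at $(\overline h,\overline l)$ when $\overline h+\overline l\le u_y$. Otherwise it sits at $P_1=(\overline h,u_y-\overline h)$ or $P_2=(u_y-\overline l,\overline l)$. Only these points must be realized. If you keep $m$, aim it only at the upper frontier $\max_m[\lambda\max H(m)+(1-\lambda)\max L(m)]$, not at the whole union.

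Two things are still missing from your proposal.
\begin{itemize}
\item \emph{Membership of the vertices.} You must show $P_1,P_2$ lie in the region, i.e.\ $\underline l\le u_y-\overline h$ and $\underline h\le u_y-\overline l$. This holds because a correspondence attaining $\overline h$ has $l\le u_y-\overline h$, and symmetrically. Without it, ``clipping the edge to the intervals'' could move the maximizer to a point you never construct.
\item \emph{Both vertices from one $\pi^\ast$.} At $\mathcal I_k$ and $\mathcal I_5$, build both vertices from the same $\pi^\ast\in\Pi(\mathcal I)$ attaining $\overline h$.
\begin{itemize}
\item For $P_1$, place $\overline h$ records of $A_y$ on the $W_{jy}$-columns of its correct--wrong block. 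Place $\min(\overline l,u_y-\overline h)$ further records of $A_y$ on $R_{jy}$-columns of the correct--correct block.
\item For $P_2$, place $\overline l=\min(u_y,|R_{jy}|,k_y)$ records of $A_y$ on $R_{jy}$-columns of the correct--correct block. Place $\min(\overline h,u_y-\overline l)$ further records on the (at least $\overline h$) $W_{jy}$-columns of $\pi^\ast$.
\end{itemize}
Starting from a correspondence attaining $\overline l$ instead can fail at $\mathcal I_5$. There the wrong perturbed records in the correct--wrong block are not free.
\end{itemize}
With the main-obstacle paragraph replaced by these steps, your proof coincides with the paper's.
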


Fix a class $y$, a policy $j$ and a level $\mathcal I$. Let $A_y$ be the $u_y$ clean records that are correct and accepted, $W_{jy}$ the perturbed records that are wrong and accepted by policy $j$, and $R_{jy}$ the perturbed records that are correct and rejected by $j$; $W_{jy}$ and $R_{jy}$ are disjoint. For any correspondence, $h_{jy}(\pi)=|\pi_y(A_y)\cap W_{jy}|$ and $l_{jy}(\pi)=|\pi_y(A_y)\cap R_{jy}|$, so $h_{jy}+l_{jy}\leq u_y$, and each count lies in its bounding interval. Hence every feasible $(h_{jy},l_{jy})$ lies in the region on the right of Equation~\eqref{eq:cap}, which proves the inequality $\leq$ after summing over classes.

For $\geq$, in the case $\overline h+\overline l>u_y$, the points $P_1=(\overline h,u_y-\overline h)$ and $P_2=(u_y-\overline l,\overline l)$ lie in the region: a correspondence attaining $\overline h$ has $l\leq u_y-\overline h$, so $\underline l\leq u_y-\overline h$, and symmetrically $\underline h\leq u_y-\overline l$. The maximum of a linear function with nonnegative coefficients over the region is attained at $(\overline h,\overline l)$ if $\overline h+\overline l\leq u_y$, and otherwise on the segment $h+l=u_y$ between $P_1$ and $P_2$, hence at $P_1$ or $P_2$; in both cases it suffices to exhibit correspondences in $\Pi(\mathcal I)$ with $h\geq\overline h$ and $l\geq\min(\overline l,u_y-\overline h)$, and symmetrically with $l\geq\overline l$ and $h\geq\min(\overline h,u_y-\overline l)$. Under $\mathcal I_0$, match $\overline h=\min(u_y,|W_{jy}|)$ records of $A_y$ to $W_{jy}$, match $\min(|R_{jy}|,u_y-\overline h)=\min(\overline l,u_y-\overline h)$ of the remaining records of $A_y$ to $R_{jy}$, and complete the bijection arbitrarily; this is feasible because the two target sets are disjoint. Under $\mathcal I_k$ or $\mathcal I_5$, start from any $\pi^\ast\in\Pi(\mathcal I)$ attaining $h_{jy}(\pi^\ast)=\overline h$; its correct--wrong block contains at least $\overline h$ columns in $W_{jy}$. Apply the correct-record symmetry of Appendix~\ref{app:d2}, which preserves the level: permute clean correct identities so that $\overline h$ records of $A_y$ occupy those columns and $\min(\overline l,u_y-\overline h)$ further records of $A_y$ occupy rows of the correct--correct block, and permute perturbed correct identities so that $\min(k_y,|R_{jy}|)\geq\min(\overline l,u_y-\overline h)$ columns of that block lie in $R_{jy}$ and are matched to those rows. The resulting correspondence has $h\geq\overline h$ and $l\geq\min(\overline l,u_y-\overline h)$. The symmetric vertex is built from the same $\pi^\ast$: place $\overline l=\min(u_y,|R_{jy}|,k_y)$ records of $A_y$ on $R_{jy}$-columns of the correct--correct block (reachable by the row and column permutations, since that block is free at every level) and $\min(\overline h,u_y-\overline l)$ further records of $A_y$ on the $W_{jy}$-columns of the correct--wrong block of $\pi^\ast$, of which there are at least $\overline h$. Starting instead from a correspondence that attains $\overline l$ can fail at $\mathcal I_5$, because the wrong perturbed records in the correct--wrong block are not free at that level. Classes are independent, so the classwise maxima add. The identity holds for every cost weight and every level; the panel computation of Appendix~\ref{app:joint-policy} confirms it numerically with a maximum residual of $1.39\times10^{-15}$ percentage points. The corresponding lower bound and the difference of two policies are not covered: the difference $J_j-J_\ell$ depends on which records switch acceptance between $\tau_j$ and $\tau_\ell$, and its maximum over a shared correspondence can be smaller than the difference of separate endpoints (the shared-correspondence comparison in Section~\ref{sec:joint-policy}).

\subsection{Proof of Corollary~\ref{cor:natural} (comparisons under a natural cost)}
$E_{\tau_j}$ is a function of the perturbed records alone and is therefore constant on $\Pi(\mathcal I)$ at every level. For any $\pi$,
\[
L_j(\pi)-L_\ell(\pi)=\frac1N\sum_y\sum_{i\in A_y,\ C'_{\pi_y(i)}=1}(Z'_{\pi_y(i),\ell}-Z'_{\pi_y(i),j}),
\]
which depends on $\pi$ only through its correct--correct block: which clean correct records are matched to which perturbed correct records. By the correct-record symmetry, from any correspondence in $\Pi(\mathcal I_5)$ every choice of $k_y$ clean correct records, $k_y$ perturbed correct records and bijection between them is reachable inside $\Pi(\mathcal I_5)$, and $\Pi(\mathcal I_k)$ admits exactly the same set of correct--correct blocks because $\Pi(\mathcal I_5)\subseteq\Pi(\mathcal I_k)$ and a correct--correct block of size $k_y$ is completable at level $\mathcal I_k$ whenever $k_y$ is feasible. Hence the vector $(L_j(\pi))_j$ ranges over the same set under both levels, and so does every function of it, including $J'_j$ and $J'_j-J'_\ell$. The same argument shows that the $L_j$ endpoints coincide at the two levels, which is the $L$ part of Theorem~\ref{prop:sufficiency}. Under the cost $J_j=\lambda H_j+(1-\lambda)L_j$ the argument fails because $H_j$ depends on the correct--wrong block, whose reachable set differs between the levels (the accepted-error result in Section~\ref{sec:retro}).

\subsection{Scope of the identification experiment}
The classwise choices vary independently within the mathematical feasible set, without implying statistical independence between classes or between experimental cells. Neither the endpoints nor their width are a confidence interval, a causal effect, or a guarantee that a camera can realize the extremizing correspondence. With multiple acceptable labels, two correct predictions can differ. If correctness is attached to each record under its own accepted-label set, the symmetry argument applies unchanged, and enumeration under ReaL labels confirms identical $L$ and $dM_k$ endpoints at $\mathcal I_k$ and $\mathcal I_5$ (Appendix~\ref{app:labels}); reassigning a record to another image and re-judging its correctness under that image's label set is a different model and is not covered. Additional pair-dependent constraints or acceptance rules can also invalidate the symmetry.

\subsection{Variable-support confidence bounds and finite verification}
For the expanded panel, we additionally compute a marginal-only $dM_k$ range over all assignments with positive total $K$. Unlike Equation~\eqref{eq:dmk-bound}, the denominator is allowed to vary. For each class and each feasible integer $k_y$, the preceding score-subset formulas give the minimum and maximum numerator. For a candidate ratio $r$, minimizing $\sum_y[D_y(k_y)-rk_y]$ over class choices detects whether a positive-support ratio below $r$ exists; the maximization version detects a ratio above it. The all-zero choice cannot supply a strict inequality. Bisection on $[-1,1]$ yields the endpoints. If no positive support is feasible, the ratio is undefined. These extra panel ranges complement the fixed-support theorem.

Numerical verification uses independent permutation enumeration, including empty overlap, tied probabilities, threshold equality and unequal class supports. The five-state endpoints in the expanded panel are obtained by exact enumeration of integer permutations. Verification covers both algebraic cases and actual prediction records.

\section{Coverage, Matched Support, and Numerical Verification}\label{app:empirical}
The analyses below support Section~\ref{sec:retro-empirical} by documenting the full analytic grid, the 450-cell panel and the matched-support comparisons.

\subsection{Frozen panel and complete analytic grid}
The panel contains 440 deterministically selected expanded cells plus the 16 historical cells, with six duplicates removed. It includes 90 Syn, 90 Syn-B, 135 ES and 135 Diverse cells. Each of the 44 models appears, with all ten fixed groups represented. Conditions cover 18/18 Syn, 18/18 Syn-B, 54/54 ES and 113/162 Diverse settings. The model--condition pairs were fixed after prior exploration and before computing the new outcomes, with no outcome-based replacements.

The selected real conditions cover every lighting setting and all three levels of ISO, shutter speed and aperture, with incomplete coverage of their combinations. Diverse light-code coverage is 27/27 for each of l1, l2 and l3; 17/27 for l4; 10/27 for l6; and 5/27 for l7. The panel omits 49 Diverse combinations, yielding unequal coverage across lighting settings.

The full grid has 11,088 cells and 133,056 target--information-level rows; Tables~\ref{tab:d2-events} and~\ref{tab:d2-panel} report the full-grid and panel results, respectively. Analytic event bounds and fixed-overlap confidence bounds cover the full grid. Five-state $H$ bounds and marginal variable-support confidence bounds are evaluated on the 450-cell panel. Conditional means with zero overlap are undefined. With overlap fixed, 461 of the 10,968 positive-overlap $dM_k$ intervals straddle zero; the 120 zero-overlap cells remain undefined.

\begin{table}[!htbp]
\centering
\caption{Full-grid mean identification widths (percentage points). $\mathcal I_0$ retains complete classwise marginal records; $\mathcal I_k$ additionally fixes classwise correct--correct counts. For $L,dM_k$, the five-state bounds equal $\mathcal I_k$ by Theorem~\ref{prop:sufficiency}. Conditional-confidence means exclude zero-$K$ cells. Cells are weighted equally within each source.}
\label{tab:d2-events}

\small
\begin{tabular}{@{}lrrrrrr@{}}
\toprule
& & \multicolumn{2}{c}{$H$} & \multicolumn{2}{c}{$L$} & $dM_k$\\ Source & Cells & $\mathcal I_0$ & $\mathcal I_k$ & $\mathcal I_0$ & $\mathcal I_k$ & $\mathcal I_k$\\\midrule
Syn & 792 & 8.44 & 3.44 & 6.73 & 4.63 & 6.01\\
Syn-B & 792 & 1.96 & 0.66 & 6.33 & 5.42 & 2.11\\
ES & 2,376 & 1.56 & 0.79 & 5.52 & 4.44 & 3.52\\
Diverse & 7,128 & 0.86 & 0.69 & 2.86 & 1.99 & 5.76\\
\bottomrule
\end{tabular}
\end{table}

\begin{table}[!htbp]
\centering
\caption{Expanded panel: mean $H$ widths (pp), additional narrowing from wrong-state refinement, and number of cells with strictly narrower bounds. Panel means use the same cells at every level.}
\label{tab:d2-panel}

\small
\begin{tabular}{@{}lrrrrrl@{}}
\toprule
Source & Cells & $\mathcal I_0$ & $\mathcal I_k$ & $\mathcal I_5$ & Extra & Narrower\\\midrule
Syn & 90 & 8.45 & 3.47 & 2.40 & 1.07 & 90/90\\
Syn-B & 90 & 1.92 & 0.69 & 0.54 & 0.15 & 25/90\\
ES & 135 & 1.12 & 0.77 & 0.68 & 0.09 & 38/135\\
Diverse & 135 & 0.97 & 0.78 & 0.68 & 0.10 & 42/135\\
\bottomrule
\end{tabular}
\end{table}

\FloatBarrier
\subsection{Matching protocol and event support}
\begin{table}[!htbp]
\centering
\caption{Mean five-state width before and after matching to 200 common true classes and two evaluation records per class (pp). Matched values first average defined cells within each seed and then the five seeds equally. Candidate-probability spaces retain their original sizes of 1,000 classes for Syn and 200 for the other sources.}
\label{tab:matching}

\small
\begin{tabular}{@{}lrrrrrr@{}}
\toprule
& \multicolumn{2}{c}{$H$} & \multicolumn{2}{c}{$L$} & \multicolumn{2}{c}{$dM_k$}\\Source & Original & Matched & Original & Matched & Original & Matched\\\midrule
Syn & 2.40 & 0.87 & 4.61 & 2.52 & 5.97 & 2.76\\
Syn-B & 0.54 & 0.41 & 5.47 & 4.74 & 2.13 & 1.76\\
ES & 0.68 & 0.54 & 3.03 & 2.38 & 5.11 & 3.90\\
Diverse & 0.68 & 0.52 & 2.66 & 2.04 & 5.84 & 4.27\\
\bottomrule
\end{tabular}
\end{table}
The matched comparison (Table~\ref{tab:matching}) fixes the 200 common true classes and selects two evaluation records per class using an outcome-independent SHA-256 ordering of seed, source, synset and image identifier. The five seeds are 20260910--20260914. Every selected model--condition cell then has 400 records. All original clean-development thresholds are retained. The sample membership is fixed across models within each source/condition selection; no score or outcome enters the selection hash. Candidate scoring spaces remain unchanged: 1,000 for Syn and 200 elsewhere. Matching to two images per class reverses the native Syn-versus-real $dM_k$ width ordering in all five seeds, so the original ordering cannot be attributed to capture medium alone.

Seed summaries first average defined cell widths within each seed, then weight the five seed means equally. Their minimum and maximum are sensitivity summaries, not confidence intervals. There are seven undefined matched cell--seed confidence means: five Diverse and two ES. Original-panel $dM_k$ is defined in 449/450 cells. Positive remaining five-state widths occur in 301/450 cells for H, 422/450 for L and 444/449 for $dM_k$. After matching, the corresponding counts are 1,244/2,250, 2,036/2,250 and 2,202/2,243. Identical endpoints at two information levels therefore rarely imply point identification of the actual value.

Event scale also affects width comparisons. Dividing each H width by its maximum feasible event mass, when that denominator is positive, gives mean native-panel ratios 0.2054 for Syn, 0.2649 for ES and 0.3381 for Diverse. The matched ratios are 0.0892, 0.2245 and 0.2576. Thus the ordering of absolute widths is not an ordering of this normalized ambiguity. The supplementary tables stratify by $K/N$ in $[0,.1),[.1,.5),[.5,1]$ and by left/right event fractions in $[0,.1),[.1,.3),[.3,1]$. Mean widths for empty strata and ratios with zero denominators are undefined. Counts of exposed records across conditions are not numbers of unique independent images.

\FloatBarrier
\subsection{Independent verification and computational cost}
Formula checks independently enumerate 252 normalized record pairs, 212,868 permutations and 905 feasible overlap/persistent-wrong strata, including tied probabilities, threshold equality, zero overlap and unequal class supports. Maximum formula discrepancy is $4.44\times10^{-16}$ in probability units. A second implementation independently recomputes 192 endpoints from eight native and eight matched original-record cells spanning all sources and different overlap supports; the maximum discrepancy is $1.60\times10^{-14}$ percentage points. Its marginal conditional-mean check uses explicit feasible-support dynamic programming rather than the primary ratio-bisection procedure. Additional checks confirm information-set nesting, inclusion of observed values, identical matched samples across models, and consistent aggregation by support stratum.

For the historical 16 cells, all 48 target comparisons between the previous constrained LP and new integer endpoints agree to $2.49\times10^{-14}$ points. This is a finite comparison, not a universal LP-integrality claim. The enlarged panel uses integer permutations directly for five-state endpoints. Original H/L marginal bounds and actual statistics also agree over the full grid with the frozen earlier outputs.

The hierarchy analysis of 9,108,000 paired evaluations took 218.35 seconds on CPU, and the matched-support analysis took 88.07 seconds. These calculations use existing predictions; the reported times exclude model inference and image acquisition.
\subsection{Signed decomposition and survivor support}
Beyond interval widths, this decomposition examines the difference between the observed retained-correct confidence decrease and its class-conditional independent counterpart.

Let $K_y$ be the observed count of pairs correct on both sides, $K=\sum_yK_y$, and $\widetilde K_y=a_yb_y/n_y$ its independent mass; $a_y,b_y$ are the two sides' correct counts. Write $b_y^M=\mu_y-\mu_y'$ for the difference of the two marginal correct-record means and $\delta_y$ for the observed mean difference on surviving pairs. When $K>0$,
\begin{equation}
dM_k-dM_{k,\ind}=
\underbrace{\sum_y\left(\frac{K_y}{K}-\frac{\widetilde K_y}{\sum_z\widetilde K_z}\right)b_y^M}_{W:\ \text{class composition}}
+\underbrace{\sum_{y:K_y>0}\frac{K_y}{K}(\delta_y-b_y^M)}_{V:\ \text{within-class selection}}.\label{eq:selection}
\end{equation}
Define $b_y^M$ only when $a_yb_y>0$ and set it to zero otherwise; those classes have zero observed and independent overlap weights. Independence uses a ratio of expected numerator and denominator, not an expectation of a random conditional ratio. The two terms are signed contributions to the observed difference.

Equation~\eqref{eq:selection} follows by adding and subtracting $\sum_y(K_y/K)b_y^M$. The composition term changes class weights; the within-class term compares the surviving correct records to each side's complete correct set. Both terms may have either sign and may cancel. Their absolute magnitudes do not define explained-variance or causal-contribution fractions. Absolute-component comparisons (Table~\ref{tab:d2-decomp}) use a numerical tie tolerance of $10^{-8}$ percentage points: a selection term is counted as larger only when $|V|>|W|+10^{-8}$ in these units.

\begin{table}[!htbp]
\caption{Signed confidence decomposition on positive-$K$ cells. Means are cell-weighted, confidence differences are in percentage points, and the last column reports the percentage of cells where the absolute within-class selection term exceeds the absolute composition term.}
\label{tab:d2-decomp}\centering

\small
\begin{tabular}{@{}lrrrrr@{}}
\toprule
Source & $K>0$ cells & Pair $-$ ind. & Mean $|W|$ & Mean $|V|$ & $|V|>|W|$\\
\midrule
Syn & 792 & +1.00 & 0.08 & 0.96 & 98.6\%\\
Syn-B & 792 & +0.38 & 0.09 & 0.39 & 85.0\%\\
ES & 2,374 & +0.52 & 0.21 & 0.67 & 86.3\%\\
Diverse & 7,010 & +0.66 & 0.50 & 1.57 & 76.8\%\\
\bottomrule
\end{tabular}
\space
\end{table}

There are 120 zero-$K$ cells: 118 Diverse and two ES. The independent overlap mass is zero in 102 cells; in another 18, observed overlap is zero but independent overlap is positive. Diverse has 2,235 cells with $1\leq K\leq5$ and 4,042 with $K\geq20$. Its mean $dM_k$ is 43.84 over positive-$K$ cells, versus 21.06 over the latter subset; these changing subsets are descriptive and are not support-corrected estimates of a common population mean. In that $K\geq20$ subset, mean absolute selection/composition terms remain 0.89/0.30 points.

\FloatBarrier
\section{Additional Checks of Correspondence Information}\label{app:correspondence-value}
Three checks extend the correspondence analysis: verifying attainable endpoints by enumeration, measuring the precision gain from pairing, and bounding overlap when only accuracies are reported.

The analyses use the fixed evaluation records described in Section~\ref{sec:design}. Unless a table states resampling of the 200 true classes, intervals are percentile intervals from resampling the ten model groups (2,000 draws); both summarize sensitivity within this pool, not population uncertainty. Tiers are the development-defined tiers of Section~\ref{sec:design}.

\subsection{Theorem~\ref{prop:sufficiency} at scale}
Direct permutation enumeration, independent of the analytic formulas, covers every class in 48 cells spanning all four sources and four models (\enumClassesOrig{} classes; Syn evaluation classes hold five records, real classes two or three, so every class is enumerable). In every class the $L$ and $dM_k$ extrema at $\mathcal I_k$ equal those at $\mathcal I_5$ and match Equations~\eqref{eq:Lk-bound} and~\eqref{eq:dmk-bound}. The five-state $H$ interval is strictly narrower than the $\mathcal I_k$ interval in \enumNarrowerOrig{} classes (\enumNarrowerOrigPct\%), most often on Syn, because a class with few records rarely contains both a persistent and a changed wrong answer. Appendix~\ref{app:labels} repeats the enumeration under ReaL correctness on all real cells.

\subsection{Variance reduction from pairing}
Actual pairing also improves the precision of a measured change. The variance of the paired difference divided by the sum of marginal variances is about one third for correctness under digital corruptions, versus 0.55 on ES and 0.87 on Diverse. It approaches one on extreme recapture, where perturbed correctness variance collapses. Pairing improves precision while preserving the expected change.

Table~\ref{tab:variance-ratio-full} reports the ratios with group-resampled intervals by source and tier. The ratio varies across model groups (for example, 0.42 to 0.63 for correctness on ES) and rises with tier within each real source. The standard error of an estimated accuracy change scales with the square root of the ratio, so pairing shortens it by about 40\% under digital corruptions and by less than 10\% on extreme recapture.

\begin{table}[!htbp]
\centering
\caption{Paired-to-unpaired variance ratio with group-resampled 95\% intervals, by source and tier, for correctness $C$, maximum probability $S$ and true-class probability $M$. Cells are model--condition pairs, weighted equally.}
\label{tab:variance-ratio-full}

\small
\begin{tabular}{@{}llrlll@{}}\toprule
Source & Tier & Cells & $C$ & $S$ & $M$\\\midrule
Syn & all tiers & 792 & 0.32 [0.28, 0.35] & 0.32 [0.27, 0.37] & 0.20 [0.16, 0.23]\\
Syn & mild & 792 & 0.32 [0.28, 0.35] & 0.32 [0.27, 0.37] & 0.20 [0.16, 0.23]\\
Syn-B & all tiers & 792 & 0.33 [0.29, 0.37] & 0.31 [0.25, 0.36] & 0.20 [0.17, 0.23]\\
Syn-B & mild & 792 & 0.33 [0.29, 0.37] & 0.31 [0.25, 0.36] & 0.20 [0.17, 0.23]\\
ES & all tiers & 2376 & 0.55 [0.53, 0.58] & 0.55 [0.50, 0.59] & 0.44 [0.41, 0.46]\\
ES & mild & 1936 & 0.47 [0.44, 0.50] & 0.47 [0.41, 0.51] & 0.34 [0.31, 0.36]\\
ES & severe & 220 & 0.87 [0.86, 0.88] & 0.89 [0.87, 0.91] & 0.82 [0.80, 0.83]\\
ES & extreme & 220 & 0.95 [0.94, 0.96] & 0.96 [0.95, 0.97] & 0.93 [0.92, 0.94]\\
Diverse & all tiers & 7128 & 0.87 [0.86, 0.89] & 0.89 [0.87, 0.90] & 0.83 [0.82, 0.85]\\
Diverse & mild & 2244 & 0.71 [0.68, 0.74] & 0.72 [0.67, 0.76] & 0.61 [0.57, 0.64]\\
Diverse & severe & 1100 & 0.85 [0.84, 0.87] & 0.88 [0.86, 0.90] & 0.80 [0.78, 0.83]\\
Diverse & extreme & 3784 & 0.98 [0.97, 0.98] & 0.99 [0.98, 0.99] & 0.98 [0.97, 0.98]\\
\bottomrule
\end{tabular}

\end{table}

\FloatBarrier
\subsection{Identification from marginals only}
Table~\ref{tab:marginal-K} treats the coarser information level in which only per-class, or only global, accuracies are reported. With $a_y,b_y$ the clean and perturbed correct counts, each $k_y$ lies in $[\max(0,a_y+b_y-n_y),\min(a_y,b_y)]$; summing over classes bounds $K$, hence $h=A-K$ and $r=B-K$ for total correct counts $A,B$, while the net change $A-B$ is always identified. Per-class accuracies reduce the width obtained from global accuracies by 40 to 60\%. On Syn, with five evaluation records per class, the width of $K$ is of the same order as the observed gross flip rate, so gross flips are identified only to within a range comparable to their observed magnitude. On the real sources, the two or three records per class sharply limit feasible overlaps, fixing the overlap in most classes. This level sits below $\mathcal I_0$, which retains complete two-sided records.

\begin{table}[!htbp]
\centering
\caption{Identification of the total correct--correct overlap $K$ from accuracies alone (percentage points of $N$). Classwise sums per-class Fr\'echet intervals; global uses only $A$, $B$ and $N$. Gross flips $h+r=A+B-2K$ inherit twice the width of $K$. Pinned is the share of classes whose interval is a single value.}
\label{tab:marginal-K}

\footnotesize
\begin{tabular}{@{}llrrrr@{}}\toprule
Source & Tier & $K$ width, classwise & $K$ width, global & Observed $h{+}r$ & Pinned (\%)\\\midrule
Syn & all tiers & 13.78 & 23.31 & 13.87 & 48.4\\
Syn & mild & 13.78 & 23.31 & 13.87 & 48.4\\
Syn-B & all tiers & 6.44 & 11.32 & 9.33 & 83.9\\
Syn-B & mild & 6.44 & 11.32 & 9.33 & 83.9\\
ES & all tiers & 5.16 & 10.39 & 26.04 & 87.1\\
ES & mild & 5.90 & 11.15 & 16.02 & 85.2\\
ES & severe & 2.85 & 9.14 & 59.83 & 92.9\\
ES & extreme & 0.91 & 4.97 & 80.38 & 97.7\\
Diverse & all tiers & 2.18 & 5.76 & 64.39 & 94.6\\
Diverse & mild & 4.72 & 10.17 & 32.33 & 88.2\\
Diverse & severe & 3.00 & 9.18 & 57.75 & 92.5\\
Diverse & extreme & 0.43 & 2.15 & 85.35 & 98.9\\
\bottomrule
\end{tabular}

\end{table}

\FloatBarrier
\section{Shared Correspondence in Rejection-Policy Comparisons}\label{app:joint-policy}

\paragraph{Evaluation protocol.}
This appendix gives the protocol behind Section~\ref{sec:joint-policy}: whether the information hierarchy of Definition~\ref{def:levels} changes which rejection policies can be ruled out on fixed records. It uses the previously frozen 450-cell panel: 90 Syn, 90 Syn-B, 135 ES and 135 Diverse cells, spanning all 44 models. Syn contributes 5,000 evaluation records per cell, five per class; the other sources contribute 500, two or three per class. With true labels known on both sides, decisions are made separately within each model--condition cell, with no common correspondence constraint across conditions. The resulting comparisons characterize policy costs under the feasible correspondences of these labeled records.

The clean acceptance rule remains $Z_i=\mathbf1[S_i\geq\tau_0]$, with $\tau_0$ fixed at the original 80\% clean-development threshold. The five perturbed-side policies use $Z_{b,j}'=\mathbf1[S_b'\geq\tau_j]$, where $\tau_j$ is the descending clean-development order statistic at rank $\operatorname{round}(q_j n_{\rm dev})$, for $q_j\in\{.6,.7,.8,.9,1\}$. All cutoff ties are accepted, although a threshold for 100\% development coverage need not accept every evaluation record. Neither the candidate thresholds nor the cost weights are tuned using evaluation outcomes. The frozen weights are $\lambda\in\{0,.1,\ldots,1\}$, with $.5$ used for the primary comparison.

\paragraph{Events under one shared correspondence.}
Let $\pi_y$ map clean to perturbed records within class $y$, and $\pi=(\pi_y)_y$. Extend the definitions of $H$ and $L$ in Table~\ref{tab:reliability-targets} to policy $j$ by
\begin{align}
H_j(\pi)&=\frac1N\sum_y\sum_{i\in y} C_iZ_i(1-C'_{\pi_y(i)})Z'_{\pi_y(i),j},\nonumber\\
L_j(\pi)&=\frac1N\sum_y\sum_{i\in y} C_iZ_i C'_{\pi_y(i)}(1-Z'_{\pi_y(i),j}),\nonumber\\
J_j(\pi)&=\lambda H_j(\pi)+(1-\lambda)L_j(\pi).
\label{eq:joint-policy-cost}
\end{align}
All rates use $N=\sum_y n_y$. The cost measures changes to clean predictions that were correct and accepted. Evaluation coverage and the total accepted-error rate provide complementary perturbed-side summaries.

For $\mathcal I\in\{\mathcal I_0,\mathcal I_k,\mathcal I_5\}$, write $\Pi_{\mathcal I,y}$ for all classwise bijections satisfying that information, and $\Pi_{\mathcal I}=\prod_y\Pi_{\mathcal I,y}$. These are the feasible sets of Definition~\ref{def:levels}: unrestricted within-class permutations, permutations with the observed classwise correct--correct counts, and permutations with all five observed classwise state counts. Scores and acceptance decisions stay attached to records. Every policy in a comparison faces the \emph{same} $\pi$.

The sharp upper difference bound and worst-case regret are
\begin{align}
D^+_{j\ell}(\mathcal I)&=\sup_{\pi\in\Pi_{\mathcal I}}\{J_j(\pi)-J_\ell(\pi)\},\nonumber\\
\mathcal R_j(\mathcal I)&=\sup_{\pi\in\Pi_{\mathcal I}}\{J_j(\pi)-\min_\ell J_\ell(\pi)\}
=\max_\ell D^+_{j\ell}(\mathcal I),\nonumber\\
&=\frac1N\max_\ell\sum_y\max_{\pi_y\in\Pi_{\mathcal I,y}}
\{c_{jy}(\pi_y)-c_{\ell y}(\pi_y)\},
\label{eq:joint-policy-regret}
\end{align}
where $c_{jy}$ is the classwise weighted event \emph{count} in Equation~\eqref{eq:joint-policy-cost}. The comparator $\ell$ is a single policy for the cell: its maximization stays outside the class sum. Exact enumeration of classwise record permutations gives these extrema.

Policy $j$ has strictly lower cost than $\ell$ throughout the feasible set if $D^+_{j\ell}<0$. Separate sharp intervals establish this only when $\overline J_j-\underline J_\ell<0$; their two endpoints can require different correspondences. We count comparisons established by the shared difference but missed by that same-information interval test. Numerical strictness uses $D^+_{j\ell}<-10^{-12}$. Establishing that a policy is no worse than every candidate is a separate property.

\paragraph{Selection rules and controls.}
Minimax regret is a classical criterion used, for example, in statistical treatment choice \citep{manski2004}. Here uncertainty concerns correspondence within fixed labeled records, rather than sampling from a population. The minimax-regret rule minimizes $\mathcal R_j$. Its primary comparator minimizes $\E_{\pi\sim\mathrm{Unif}(\Pi_{\mathcal I})}[J_j(\pi)]$ using the \emph{same information}. This mean weights feasible record permutations uniformly, retaining their multiplicities; it does not weight distinct count tables uniformly. Under $\mathcal I_0$ it equals the classwise product-independent value. That unconstrained product assignment need not satisfy the $\mathcal I_k$ or $\mathcal I_5$ constraints and is retained only as a weaker control. Other controls keep the original 80\% threshold or minimize the marginal proxy $\lambda\Prb(C'=0,Z_j'=1)+(1-\lambda)\Prb(C'=1,Z_j'=0)$. For all minimizations, values within $10^{-12}$ of the minimum are tied; ties favor nominal coverage closest to $.8$, then lower coverage, then policy index. Actual paired costs play no role in this tie rule. Realized regret is $J_j(\pi^{\rm obs})-\min_\ell J_\ell(\pi^{\rm obs})$.

\paragraph{Which comparisons become determined?}
At $\lambda=.5$, $\mathcal I_5$ establishes 641 directed comparisons in 312/450 cells that the separate $J$ intervals miss. These counts require both correct and incorrect perturbed records among those whose acceptance changes between the policies, excluding switches with only one correctness outcome. The cell counts are 82/90 Syn, 60/90 Syn-B, 92/135 ES and 78/135 Diverse. A different comparison measures information increments: $\mathcal I_0\to\mathcal I_k$ adds 586 mixed-outcome strict comparisons in 241 cells, and $\mathcal I_k\to\mathcal I_5$ adds 52 in 42 cells. These information increments use a different comparator from the 641 gains over separate intervals. The latter control also strengthens with information, so the number it misses need not increase across information levels.

After removing candidates strictly worse than another throughout the feasible set, a \emph{post hoc diagnostic} merges thresholds with identical evaluation acceptance vectors, treating each distinct vector as one action. The number of cells with one remaining distinct action is 148, 189 and 191 under $\mathcal I_0,\mathcal I_k,\mathcal I_5$. This merging of identical actions was not a prespecified primary endpoint; the five nominal-policy results are also retained. Additional strict comparisons often leave several actions undominated. The $L_j$ endpoints remain identical under $\mathcal I_k$ and $\mathcal I_5$, consistent with Theorem~\ref{prop:sufficiency}; five-state increments in the combined comparisons do not contradict target-specific sufficiency.

Figure~\ref{fig:joint-policy-sensitivity} shows how the fraction of cells with additional determined policy comparisons varies across the frozen cost-weight grid. It measures which additional policy orderings can be established; realized policy-selection gains are reported separately below.

\paragraph{Shared-correspondence difference bound.}
The certificate $\overline\Delta_{j\ell}(\mathcal I)$ of the shared-correspondence comparison in Section~\ref{sec:joint-policy}, Equation~\eqref{eq:joint-policy-difference}, is the quantity written $D^+_{j\ell}(\mathcal I)$ above; Table~\ref{tab:joint-policy-main} counts the cells in which it certifies an ordering that the separate sharp intervals leave open.

\begin{table}[!htbp]
\centering
\caption{Cells with at least one additional strict policy comparison at $\lambda=0.5$, restricted to threshold-switching records containing both correct and incorrect predictions. Shared $\pi$ compares common-correspondence differences against separate sharp policy intervals within $\mathcal I_5$; $+k$ and $+5$ instead add information ($\mathcal I_0\to\mathcal I_k$ and $\mathcal I_k\to\mathcal I_5$). Counts summarize cells of the frozen panel.}
\label{tab:joint-policy-main}

\small
\begin{tabular}{@{}lrrrr@{}}\toprule
Source & Cells & Shared $\pi$ & $+k$ & $+5$\\\midrule
Syn & 90 & 82 & 77 & 28\\
Syn-B & 90 & 60 & 75 & 6\\
ES & 135 & 92 & 47 & 2\\
Diverse & 135 & 78 & 42 & 6\\
\midrule
Total & 450 & 312 & 241 & 42\\\bottomrule
\end{tabular}
\end{table}

\begin{figure}[t]
\centering\includegraphics[width=\linewidth]{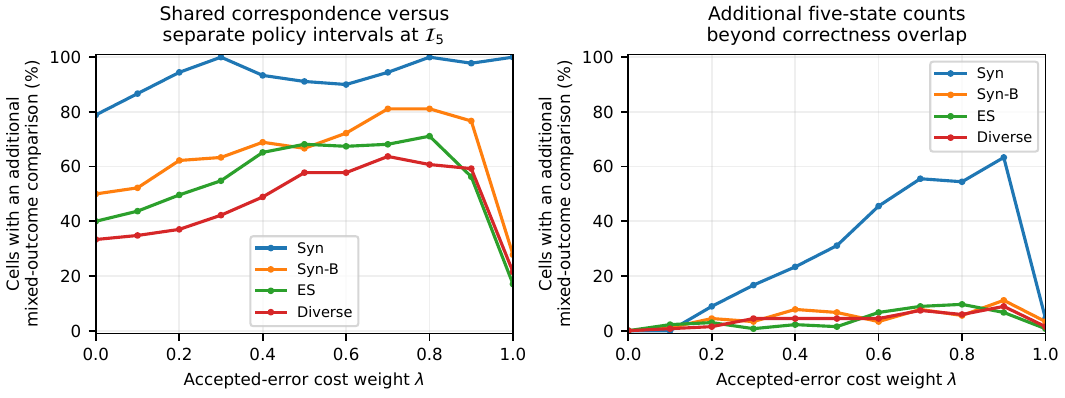}
\caption{Sensitivity to the frozen cost-weight grid. Fractions give the share of cells in each source's fixed panel with an additional certified ordering. Shared-correspondence comparisons use separate sharp cost intervals as the control; information increments compare successive information levels. The primary weight is $\lambda=.5$.}
\label{fig:joint-policy-sensitivity}
\end{figure}

\FloatBarrier
\paragraph{Single-policy bounds and policy-selection results.}
Within each class, the two event counts satisfy $h_{jy}+l_{jy}\leq u_y$, where $u_y$ counts correct accepted clean records, and Lemma~\ref{lem:cap} states that maximizing the weighted cost over the separate sharp $H_j/L_j$ count intervals intersected with this cap equals the joint single-policy upper bound. The panel computation confirms the identity: the maximum residual over 74,250 policy--weight--information rows is $1.39\times10^{-15}$ percentage points. Once the sharp event bounds and the support cap are supplied, joint optimization adds nothing to the single-policy upper bound. Shared-correspondence differences can still strengthen comparisons. The $+5$ column of Table~\ref{tab:joint-policy-main} is specific to the cost $J_j$, which charges accepted errors by their transition origin $H_j$: under the natural cost $\lambda E_{\tau_j}+(1-\lambda)L_j$, Corollary~\ref{cor:natural} makes the $\mathcal I_k\to\mathcal I_5$ increment zero in every cell, because $E_{\tau_j}$ is marginal and the $L_j$ differences range over the same set at both levels.

At $\lambda=.5$ under $\mathcal I_5$, minimax regret yields lower realized cost than the same-information feasible-permutation mean rule in four cells, higher cost in three and equal cost in 443. Table~\ref{tab:joint-policy-regret} reports the correspondingly small realized-regret differences. Gains over the fixed 80\% threshold are larger, but that control was not optimized for $J$. Worst-case optimality follows from the minimax definition; the realized-cost comparison evaluates selection performance. The evidence supports additional finite-record comparisons, while realized-selection gains are small and mixed.

\begin{table}[!htbp]
\centering
\caption{Mean realized regret in percentage points at $\lambda=.5$, $\mathcal I_5$, averaging cells equally within source. Both rules use the same feasible permutations and measure regret relative to the lowest-cost policy under the observed correspondence. Better/Worse/Tied count cells where minimax has lower/higher/equal realized cost than the feasible-permutation mean rule.}
\label{tab:joint-policy-regret}

\small
\begin{tabular}{@{}lrrrrrr@{}}\toprule
Source & Cells & Feasible mean & Minimax & Better & Worse & Tied\\\midrule
Syn & 90 & 0.004778 & 0.004222 & 1 & 1 & 88 \\
Syn-B & 90 & 0.005556 & 0.005556 & 0 & 0 & 90 \\
ES & 135 & 0.012593 & 0.004444 & 3 & 0 & 132 \\
Diverse & 135 & 0.003704 & 0.005185 & 0 & 2 & 133 \\
\bottomrule
\end{tabular}
\end{table}

Table~\ref{tab:joint-policy-weight-selection} reports the full frozen weight grid. Mean realized-regret reduction is negative at six of the eleven weights. These small differences change sign across weights and do not establish a consistent actual-selection advantage.

\begin{table}[!htbp]
\centering
\caption{Realized policy-selection results over the full frozen weight grid at $\mathcal I_5$. $\Delta$ is feasible-permutation mean regret minus minimax regret, in percentage points, averaged equally over all 450 cells; negative values favor the mean rule. Better/Worse/Tied count lower/higher/equal realized costs for minimax. }
\label{tab:joint-policy-weight-selection}

\small
\begin{tabular}{@{}rrrrr@{}}\toprule
$\lambda$ & $\Delta$ (pp) & Better & Worse & Tied\\\midrule
0.0 & 0.000000 & 0 & 0 & 450 \\
0.1 & 0.000044 & 1 & 0 & 449 \\
0.2 & -0.000542 & 0 & 3 & 447 \\
0.3 & -0.002400 & 2 & 5 & 443 \\
0.4 & -0.003111 & 4 & 7 & 439 \\
0.5 & 0.002111 & 4 & 3 & 443 \\
0.6 & 0.001182 & 7 & 9 & 434 \\
0.7 & -0.000076 & 3 & 9 & 438 \\
0.8 & -0.001396 & 2 & 7 & 441 \\
0.9 & -0.002791 & 5 & 8 & 437 \\
1.0 & 0.000000 & 0 & 0 & 450 \\
\bottomrule
\end{tabular}
\end{table}

\FloatBarrier
\paragraph{Numerical verification.}
Thirteen numerical checks cover small global Cartesian-product enumerations, non-simultaneously attainable extrema, ties and boundary cases. Across all 450 cells, verification confirms the 80\% development thresholds, the earlier $H/L$ results, finite values, inclusion of observed values, information nesting, comparison direction and regret aggregation; the largest discrepancy from the earlier results is $7.11\times10^{-15}$ percentage points. A separate implementation shares the original record-reading procedure and independently computes every weight, policy and information level for eight original-record cells with zero discrepancy. These cells are the first and last in lexicographic order within each source. The analysis covers 74,250 bounds, 297,000 directed comparisons and 74,250 policy selections, with complete and unique coverage of the planned combinations. The calculations use existing predictions and take 167.29 seconds on CPU. Models and images recur across cells, and the results describe comparisons within the fixed panel.

\subsection{A reproducible real-record illustration}\label{app:worked-case}
The example referenced in Section~\ref{sec:retro-empirical} uses the MAE ViT-L classifier and Diverse print condition 14 at lighting level 1 (500 evaluation images, 200 classes). Its clean threshold is the historical 80\% development-coverage threshold. The selection protocol was fixed before choosing the case: among real panel cells with positive retained-correct support and a shared-correspondence certificate missed by separate intervals at $\lambda=0.5$, choose the cell nearest the median marginal $H$ width over the real panel; break ties by source, model and condition. Table~\ref{tab:worked-intervals} reports its intervals and observed values. The case illustrates the certified ordering and identified intervals for one observed condition.

\begin{table}[!htbp]
\centering
\caption{Sharp intervals and observed values in the real-record illustration. Units are percentage points for $H,L$ and probability points for $dM_k$. Actual correspondence determines the observed column.}
\label{tab:worked-intervals}

\small
\begin{tabular}{lrrrr}\toprule
Target & $\mathcal I_0$ & $\mathcal I_k$ & $\mathcal I_5$ & Observed\\\midrule
$H$ & [0.4, 0.8] & [0.4, 0.8] & [0.4, 0.6] & 0.6\\
$L$ & [17.8, 26.2] & [19.0, 26.0] & [19.0, 26.0] & 23.2\\
$dM_k$ & [19.52, 30.02] & [20.58, 29.43] & [20.58, 29.43] & 26.24\\\bottomrule
\end{tabular}
\end{table}

Choose the first policy pair in index order with mixed outcomes and a strict shared certificate missed by separate intervals. Its nominal clean-development coverages are 70\% and 60\%, with score thresholds 0.669638 and 0.719598. At $\mathcal I_5$, their $J$ intervals are [14.9,18.4] and [16.7,20.3] points. Subtracting separate endpoints gives an upper difference of $+1.7$, whereas optimizing the difference under one common correspondence gives $-1.5$. The 70\%-coverage policy therefore has lower transition cost for every feasible correspondence. This ordering already holds at $\mathcal I_0$; the example demonstrates the shared-correspondence comparison alongside the target-specific narrowing of $H$. Independent classwise enumeration reproduces all nine target intervals and 75 policy-difference bounds from the underlying records.

\FloatBarrier
\section{Screening Protocols and Additional Results}\label{app:screening}
These analyses support Sections~\ref{sec:incrementmethod}--\ref{sec:screening-scope} through score and identity controls, budgeted failure recall, equal-inference probe comparisons, and split and inference-cost checks.

The analyses use the fixed evaluation records, with additional model evaluations for the natural-transformation comparison described below. Unless a table states resampling of the 200 true classes, intervals are percentile intervals from resampling the ten model groups (2,000 draws); both summarize sensitivity within this pool, not population uncertainty. Tiers are the development-defined tiers of Section~\ref{sec:design}.

\subsection{Image-level scores: protocol, resampling, screening and identity ablation}
\paragraph{Scores and sets.}
On clean-correct images, $M_i=S_i$, so $(S,\delta)$ and $(S,U)$ contain identical information. 

For each model, $U_i$ averages the true-class probability $M'_{i,c}$ over the 18 Syn-B conditions $c$ on the 500 real evaluation base images, $\mathrm{ret}_i$ is the share of these conditions on which the image remains correct, and $\delta_i=S_i-U_i$, which on clean-correct images equals the mean clean-minus-corrupted drop. All scores are oriented so that larger values indicate failure ($-S$, $-U$, $\delta$, $-\mathrm{ret}$); the outcome for a real condition is failure among clean-correct evaluation images, and cells in which all such images share one outcome have no AUROC (118 Diverse and 2 ES cells, all but one extreme). Every score is evaluated on the same images of a cell. The class-level baseline assigns to each evaluation image the development-set mean of $M-U$ over all development images of its class. The frozen predictors $g(S)$, $g(S,U)$ and $g(S,U,\mathrm{ret})$ are logistic regressions fitted per model and source on development clean-correct images, pooling observations across the real conditions of that source and standardizing features on the development data (L-BFGS optimization, inverse regularization strength $C=1$). A fit on $(S,\delta)$ under the same standardization and penalty is not algebraically identical to the fit on $(S,U)$, because the columnwise standardization and the $\ell_2$ penalty are not invariant to the linear reparameterization; recomputed on all cells, the two predictors differ in AUROC by at most 0.004 (mean $5.6\times10^{-5}$), so the choice is immaterial here. $g(S)$ reproduces the ranking of $S$ exactly in 87 of the 88 model--source fits; for one model on Diverse the development fit has a positive slope and the ranking reverses, which is reported rather than corrected. Adding $\mathrm{ret}$ to $g(S,U)$ changes mean AUROC by at most 0.001. Within a model, $\delta$ and $-U$ have Spearman correlation between 0.23 and 0.91 across models (median 0.77).

\begin{table}[!htbp]
\centering\setlength{\tabcolsep}{3pt}
\caption{AUROC for ranking real failures among clean-correct images, on identical sets for every score. Class-$\delta$ is the development-set class mean of $M-U$; $g(S,U)$ is fitted on development images. The gap uses 95\% model-group resampling intervals. Class-resampled intervals and cell counts are in Appendix~\ref{app:screening}.}
\label{tab:image-level}

\footnotesize
\begin{tabular}{@{}llrrrrrrl@{}}\toprule
Source & Tier & $S$ & $U$ & $\delta$ & ret & Class-$\delta$ & $g(S,U)$ & $U-S$ [95\%]\\\midrule
ES & mild & 0.778 & 0.854 & 0.768 & 0.823 & 0.580 & 0.855 & +0.076 [+0.070, +0.080]\\
ES & severe & 0.593 & 0.647 & 0.630 & 0.614 & 0.546 & 0.647 & +0.054 [+0.044, +0.063]\\
ES & extreme & 0.538 & 0.578 & 0.572 & 0.560 & 0.535 & 0.576 & +0.040 [+0.027, +0.056]\\
Diverse & mild & 0.678 & 0.801 & 0.781 & 0.755 & 0.587 & 0.809 & +0.123 [+0.110, +0.131]\\
Diverse & severe & 0.601 & 0.700 & 0.698 & 0.656 & 0.565 & 0.710 & +0.099 [+0.084, +0.109]\\
Diverse & extreme & 0.504 & 0.512 & 0.512 & 0.529 & 0.478 & 0.517 & +0.008 [-0.006, +0.024]\\
\bottomrule
\end{tabular}

\end{table}

\FloatBarrier
\paragraph{Two resampling schemes.}
Figure~\ref{fig:screening-auroc} and Table~\ref{tab:image-level} use intervals from resampling the ten model groups. Table~\ref{tab:image-level-classboot} instead resamples the 200 true classes with replacement, applying one class draw to every cell of a source (200 draws); cells whose resampled positives or negatives vanish are skipped in that draw. Both schemes exclude zero for $U-S$ on mild and severe conditions and for $\delta-U$ on mild conditions; neither is a population confidence interval.

\begin{table}[!htbp]
\centering
\caption{Per-cell AUROC differences with 95\% intervals from resampling the 200 true classes (point estimates are the plain cell means; cells is the mean number of cells retained per draw).}
\label{tab:image-level-classboot}

\footnotesize
\begin{tabular}{@{}llrlll@{}}\toprule
Source & Tier & Cells & $U-S$ & $\delta-U$ & $g(S,U)-S$\\\midrule
ES & mild & 1935 & +0.076 [+0.060,+0.093] & -0.086 [-0.107,-0.065] & +0.077 [+0.060,+0.097]\\
ES & severe & 219 & +0.054 [+0.035,+0.070] & -0.016 [-0.028,-0.005] & +0.054 [+0.034,+0.072]\\
ES & extreme & 214 & +0.040 [+0.020,+0.057] & -0.005 [-0.020,+0.011] & +0.038 [+0.017,+0.056]\\
Diverse & mild & 2244 & +0.123 [+0.107,+0.142] & -0.020 [-0.031,-0.007] & +0.131 [+0.110,+0.153]\\
Diverse & severe & 1099 & +0.099 [+0.084,+0.116] & -0.002 [-0.013,+0.009] & +0.108 [+0.091,+0.128]\\
Diverse & extreme & 3174 & +0.008 [-0.013,+0.039] & +0.001 [-0.015,+0.017] & +0.013 [-0.013,+0.052]\\
\bottomrule
\end{tabular}

\end{table}

\FloatBarrier
\paragraph{Screening levels.}
Table~\ref{tab:screening} fixes in advance the fraction $q$ of clean-correct images to recapture and reports the share of a condition's real failures among the images ranked highest by each score (ties broken at random in this table); random selection captures $q$ in expectation. On mild conditions, $U$ captures 49\% (ES) and 26\% (Diverse) of failures at $q=0.1$ and 67\% and 45\% at $q=0.2$, against 41\%/21\% and 58\%/36\% for clean confidence. This evaluation ranks labeled images within the existing pool.

\begin{table}[!htbp]
\centering\setlength{\tabcolsep}{2pt}
\caption{Screening: share of real failures among the top-$q$ fraction of clean-correct images ranked by each score (cells weighted equally within source and tier). Random selection is one seeded draw per cell; its expectation is $q$.}
\label{tab:screening}

\footnotesize
\begin{tabular}{@{}llrrrrrrrrrr@{}}\toprule
& & \multicolumn{5}{c}{$q=0.1$} & \multicolumn{5}{c}{$q=0.2$}\\
Source & Tier & $S$ & $U$ & $\delta$ & ret & random & $S$ & $U$ & $\delta$ & ret & random\\\midrule
ES & mild & 0.406 & 0.489 & 0.395 & 0.487 & 0.096 & 0.578 & 0.672 & 0.568 & 0.648 & 0.199\\
ES & severe & 0.126 & 0.133 & 0.134 & 0.135 & 0.100 & 0.241 & 0.254 & 0.252 & 0.248 & 0.201\\
ES & extreme & 0.104 & 0.106 & 0.106 & 0.106 & 0.100 & 0.208 & 0.210 & 0.209 & 0.208 & 0.200\\
Diverse & mild & 0.208 & 0.260 & 0.271 & 0.271 & 0.099 & 0.357 & 0.449 & 0.452 & 0.441 & 0.200\\
Diverse & severe & 0.130 & 0.144 & 0.150 & 0.147 & 0.100 & 0.248 & 0.274 & 0.280 & 0.270 & 0.199\\
Diverse & extreme & 0.102 & 0.102 & 0.103 & 0.103 & 0.100 & 0.203 & 0.204 & 0.204 & 0.204 & 0.200\\
\bottomrule
\end{tabular}

\end{table}

\FloatBarrier
\paragraph{Screening differences, attainable ceilings and budgets.}
Table~\ref{tab:screening-main} reports budget $q=0.2$; Table~\ref{tab:screening} gives the coverage levels at $q=0.1$ and $q=0.2$ with a random-selection draw, and Table~\ref{tab:ceiling} the failure share $p$ and the attainable ceiling $\min(1,k/n_{\mathrm{fail}})$ with $k=\max(1,\operatorname{round}(qn))$ recaptures. Paired differences use tie-prorated coverage (the expected coverage under random tie-breaking, which matters for the retention rate with its 19 possible values); class-resampled intervals (Table~\ref{tab:screening-boot}) recompute the coverage with resampled class weights and the same proration. On extreme conditions $U$ attains 95\% to 98\% of the ceiling, but so does random selection in expectation (a fraction $p$ of the ceiling: 91\% on ES and 96\% on Diverse, where $U$ reaches 98\%), so this table bounds the room for improvement without establishing retained ranking ability; the AUROC columns of Table~\ref{tab:image-level} carry that information.

\begin{table}[!htbp]
\centering\setlength{\tabcolsep}{2.5pt}
\caption{Failure coverage at a 20\% recapture budget and paired differences with 95\% model-group resampling intervals (ties prorated; cells equally weighted). Random selection covers approximately 20\% in expectation, subject to integer budget rounding. Other budgets and class-resampled intervals are in Appendix~\ref{app:screening}.}
\label{tab:screening-main}

\footnotesize
\begin{tabular}{@{}llrrlll@{}}\toprule
Source & Tier & $S$ & $U$ & $U-S$ [95\%] & $U-\delta$ [95\%] & $U-\mathrm{ret}$ [95\%]\\\midrule
ES & mild & 0.578 & 0.672 & +0.094 [+0.078,+0.106] & +0.104 [+0.079,+0.140] & +0.024 [+0.013,+0.042]\\
ES & severe & 0.241 & 0.254 & +0.013 [+0.011,+0.017] & +0.002 [-0.001,+0.007] & +0.007 [+0.001,+0.016]\\
ES & extreme & 0.208 & 0.210 & +0.002 [+0.002,+0.003] & +0.001 [-0.000,+0.002] & +0.001 [+0.000,+0.004]\\
Diverse & mild & 0.357 & 0.449 & +0.092 [+0.075,+0.109] & -0.003 [-0.009,+0.007] & +0.008 [-0.006,+0.036]\\
Diverse & severe & 0.248 & 0.274 & +0.026 [+0.021,+0.032] & -0.006 [-0.008,-0.005] & +0.004 [-0.002,+0.016]\\
Diverse & extreme & 0.203 & 0.204 & +0.001 [+0.001,+0.002] & -0.000 [-0.001,-0.000] & +0.000 [-0.000,+0.001]\\
\bottomrule
\end{tabular}

\end{table}

\begin{table}[!htbp]
\centering
\caption{Class-resampled 95\% intervals for the paired coverage differences at each budget (200 draws of the 200 true classes, one draw shared by every cell of a source).}
\label{tab:screening-boot}

\footnotesize
\begin{tabular}{@{}llllll@{}}\toprule
Source & Tier & Budget & $U-S$ [classes] & $U-\delta$ [classes] & $U-\mathrm{ret}$ [classes]\\\midrule
ES & mild & 0.1 & [+0.064,+0.111] & [+0.077,+0.116] & [-0.009,+0.013]\\
ES & mild & 0.2 & [+0.075,+0.114] & [+0.076,+0.125] & [+0.008,+0.038]\\
ES & mild & 0.3 & [+0.079,+0.120] & [+0.057,+0.115] & [+0.015,+0.048]\\
ES & severe & 0.1 & [+0.004,+0.011] & [-0.004,+0.004] & [-0.005,+0.001]\\
ES & severe & 0.2 & [+0.006,+0.020] & [-0.003,+0.006] & [+0.001,+0.011]\\
ES & severe & 0.3 & [+0.008,+0.024] & [-0.001,+0.009] & [+0.008,+0.021]\\
ES & extreme & 0.1 & [+0.000,+0.002] & [-0.001,+0.001] & [-0.001,+0.000]\\
ES & extreme & 0.2 & [+0.000,+0.004] & [-0.001,+0.002] & [+0.000,+0.003]\\
ES & extreme & 0.3 & [+0.001,+0.006] & [-0.001,+0.002] & [+0.001,+0.005]\\
Diverse & mild & 0.1 & [+0.040,+0.065] & [-0.021,-0.002] & [-0.017,-0.003]\\
Diverse & mild & 0.2 & [+0.073,+0.113] & [-0.015,+0.008] & [-0.002,+0.020]\\
Diverse & mild & 0.3 & [+0.090,+0.133] & [-0.003,+0.018] & [+0.019,+0.048]\\
Diverse & severe & 0.1 & [+0.010,+0.017] & [-0.008,-0.003] & [-0.005,-0.000]\\
Diverse & severe & 0.2 & [+0.020,+0.034] & [-0.011,-0.002] & [-0.001,+0.010]\\
Diverse & severe & 0.3 & [+0.029,+0.046] & [-0.008,+0.002] & [+0.009,+0.022]\\
Diverse & extreme & 0.1 & [+0.001,+0.001] & [-0.001,+0.000] & [-0.000,+0.000]\\
Diverse & extreme & 0.2 & [+0.001,+0.002] & [-0.001,+0.000] & [-0.000,+0.001]\\
Diverse & extreme & 0.3 & [+0.001,+0.003] & [-0.001,+0.000] & [+0.000,+0.003]\\
\bottomrule
\end{tabular}

\end{table}

\begin{table}[!htbp]
\centering\setlength{\tabcolsep}{3pt}
\caption{Failure share $p$ among clean-correct images, attainable coverage ceiling $\min(1,k/n_{\mathrm{fail}})$ for budgets $q$, and the coverage of $U$ as a fraction of the ceiling. Each column averages cellwise quantities equally within source and tier; the last is the mean of cellwise ratios, not the ratio of the displayed means.}
\label{tab:ceiling}

\footnotesize
\begin{tabular}{@{}llrrrrrr@{}}\toprule
Source & Tier & $p$ & \multicolumn{3}{c}{attainable ceiling} & \multicolumn{2}{c}{$U$ / ceiling}\\
& & & $q{=}0.1$ & $q{=}0.2$ & $q{=}0.3$ & $q{=}0.1$ & $q{=}0.2$\\\midrule
ES & mild & 0.167 & 0.748 & 0.883 & 0.941 & 0.676 & 0.760\\
ES & severe & 0.675 & 0.160 & 0.319 & 0.478 & 0.864 & 0.830\\
ES & extreme & 0.908 & 0.112 & 0.223 & 0.335 & 0.956 & 0.947\\
Diverse & mild & 0.360 & 0.373 & 0.643 & 0.815 & 0.777 & 0.739\\
Diverse & severe & 0.652 & 0.170 & 0.339 & 0.500 & 0.891 & 0.853\\
Diverse & extreme & 0.964 & 0.104 & 0.209 & 0.314 & 0.984 & 0.980\\
\bottomrule
\end{tabular}

\end{table}

\FloatBarrier
\paragraph{Protocol sensitivity without re-inference.}
All stored probabilities use one scoring protocol (a fixed multiplier of 100 and categorical softmax for OpenCLIP models). The stored records contain no logits, so a first check uses only rank information; the native scoring rules are recomputed from the re-inference described later in this appendix. As the rank check, $U_{\mathrm{rank}}$ replaces each condition's true-class probability by its percentile rank among the clean-correct images before averaging, which is invariant to any strictly monotone within-condition rescaling, and $U^{\gamma}$ averages $(M')^{\gamma}$ for $\gamma\in\{0.5,2\}$. Table~\ref{tab:protocol-sensitivity} shows that $U_{\mathrm{rank}}$ remains above clean confidence by 0.05 (ES) and 0.08 (Diverse) on mild conditions, up to 0.044 below $U$, and that the power transforms move $U$ by at most 0.007. The direction of the result does not depend on the probability scale; its magnitude does.

\begin{table}[!htbp]
\centering
\caption{Protocol sensitivity: AUROC of $U$, of its rank-based version and of power-transformed versions, with the $U_{\mathrm{rank}}-S$ gap and a 95\% interval from resampling model groups.}
\label{tab:protocol-sensitivity}

\footnotesize
\begin{tabular}{@{}llrrrrrl@{}}\toprule
Source & Tier & $S$ & $U$ & $U_{\mathrm{rank}}$ & $U^{0.5}$ & $U^{2}$ & $U_{\mathrm{rank}}-S$ [groups]\\\midrule
ES & mild & 0.778 & 0.854 & 0.832 & 0.854 & 0.851 & +0.054 [+0.049,+0.058]\\
ES & severe & 0.593 & 0.647 & 0.628 & 0.649 & 0.644 & +0.035 [+0.027,+0.044]\\
ES & extreme & 0.538 & 0.578 & 0.565 & 0.580 & 0.575 & +0.028 [+0.016,+0.045]\\
Diverse & mild & 0.678 & 0.801 & 0.757 & 0.805 & 0.794 & +0.079 [+0.069,+0.086]\\
Diverse & severe & 0.601 & 0.700 & 0.662 & 0.704 & 0.695 & +0.061 [+0.052,+0.069]\\
Diverse & extreme & 0.504 & 0.512 & 0.506 & 0.511 & 0.512 & +0.003 [-0.008,+0.016]\\
\bottomrule
\end{tabular}

\end{table}

\FloatBarrier
\paragraph{Check on the other half of the image pool.}
The image-level comparisons were explored on the 500 evaluation base images. Table~\ref{tab:dev-confirmation} repeats them on the 500 development images, which were used only for thresholds and calibrators, with the fitting roles exchanged (class-level baseline and $g$ fitted on evaluation images) and no other change. The tier labels are those defined from development-set accuracy (Section~\ref{sec:design}), so on this half they were fixed using the same images that are evaluated; regrouping the same cells by tiers defined from evaluation-set accuracy (per-model correct counts over evaluation images, equal-weight mean, cuts 40\% and 20\%) moves 5 Diverse conditions from severe to mild (evaluation pool means 40.1\% to 41.7\%) and leaves the mild-tier gaps essentially unchanged: $U-S$ AUROC +0.128 and coverage +0.088 on Diverse against +0.128 and +0.093 under the development-defined tiers, with ES unchanged (Table~\ref{tab:retier}). The gaps reproduce: $U-S$ in AUROC is $+0.078$ and $+0.128$ on mild ES and Diverse, and the budget-0.2 coverage difference is $+0.111$ and $+0.093$. This is the other half of the same pool with the same models and capture settings, not a new collection; on this half $\delta$ slightly exceeds $U$ on severe and extreme conditions.

\begin{table}[!htbp]
\centering\setlength{\tabcolsep}{2pt}
\caption{Check on the development images with fitting roles exchanged: AUROC of each score, the $U-S$ gap, and the budget-0.2 coverage difference, with 95\% intervals from resampling model groups (class-resampled intervals also exclude zero on mild and severe conditions).}
\label{tab:dev-confirmation}

\footnotesize
\begin{tabular}{@{}llrrrrrll@{}}\toprule
Source & Tier & $S$ & $U$ & $\delta$ & ret & $g(S{,}U)$ & AUROC $U{-}S$ & Cov.\ $U{-}S$\\\midrule
ES & mild & 0.778 & 0.856 & 0.779 & 0.831 & 0.856 & +0.078 [+0.072,+0.083] & +0.111 [+0.097,+0.129]\\
ES & severe & 0.572 & 0.641 & 0.646 & 0.608 & 0.644 & +0.069 [+0.059,+0.075] & +0.012 [+0.010,+0.015]\\
ES & extreme & 0.504 & 0.560 & 0.577 & 0.553 & 0.563 & +0.056 [+0.034,+0.072] & +0.003 [+0.002,+0.003]\\
Diverse & mild & 0.673 & 0.801 & 0.790 & 0.754 & 0.815 & +0.128 [+0.119,+0.135] & +0.093 [+0.075,+0.120]\\
Diverse & severe & 0.588 & 0.701 & 0.709 & 0.663 & 0.718 & +0.113 [+0.096,+0.128] & +0.026 [+0.020,+0.036]\\
Diverse & extreme & 0.475 & 0.507 & 0.536 & 0.518 & 0.520 & +0.032 [+0.019,+0.049] & +0.001 [+0.001,+0.002]\\
\bottomrule
\end{tabular}

\end{table}

\begin{table}[!htbp]
\centering\setlength{\tabcolsep}{3pt}
\caption{Swapped-split confirmation regrouped by tiers defined from evaluation-set accuracy (development images evaluated; fitting on evaluation images). AUROC of $S$ and $U$, and the $U-S$ gaps in AUROC and in coverage at $q=0.2$, with 95\% intervals from resampling model groups.}
\label{tab:retier}

\footnotesize
\begin{tabular}{@{}llrrrll@{}}\toprule
Source & Tier (eval-defined) & Conditions & $S$ & $U$ & AUROC $U{-}S$ [95\%] & Cov.\ $U{-}S$ [95\%]\\\midrule
ES & mild & 44 & 0.778 & 0.856 & +0.078 [+0.072,+0.083] & +0.111 [+0.097,+0.129]\\
ES & severe & 5 & 0.572 & 0.641 & +0.069 [+0.059,+0.075] & +0.012 [+0.010,+0.015]\\
ES & extreme & 5 & 0.504 & 0.560 & +0.056 [+0.034,+0.072] & +0.003 [+0.002,+0.003]\\
Diverse & mild & 56 & 0.667 & 0.794 & +0.128 [+0.118,+0.135] & +0.088 [+0.071,+0.114]\\
Diverse & severe & 20 & 0.583 & 0.695 & +0.112 [+0.093,+0.127] & +0.023 [+0.018,+0.032]\\
Diverse & extreme & 86 & 0.475 & 0.507 & +0.032 [+0.019,+0.049] & +0.001 [+0.001,+0.002]\\
\bottomrule
\end{tabular}

\end{table}

\FloatBarrier
\paragraph{Identity ablation on $U$ and a class-level baseline matched to $U$.}
Table~\ref{tab:identity-ablation} reassigns $U_i$ among clean-correct images of the same class with no fixed points (a derangement; ten seeded draws, AUROC averaged) or replaces it by the leave-one-out class mean, and compares with $U_i$ on the same non-singleton subset; classes with a single clean-correct image have no control value and are excluded from this comparison (10 to 55 images per model). A uniform within-class permutation is not used because, with two or three images per class, it would leave about 40\% of values in place. These controls answer what happens when an image receives another same-class image's response; they are not class-level predictors. The leave-one-out mean decreases in the image's own value and with two images per class it swaps the two, so the gap is an identity ablation, not the increment over class-level information. On mild conditions the derangement control falls to AUROC 0.55 on ES and 0.58 on Diverse against 0.85 and 0.80 for $U_i$, and $U_i$ exceeds its control in 99.9\% and 100.0\% of cells; the same ablation applied to $\delta_i$ gives the same ordering. On extreme Diverse conditions the control degenerates in 14\% of cells (all retained-correct images of the cell lie in one class, so the derangement only exchanges values among them and equals $U_i$ in AUROC), which weakens the contrast there; excluding those cells moves the gap from $+0.047$ to $+0.055$. Table~\ref{tab:class-baseline-U} evaluates development-fitted class-level predictors: the development-set class mean of $U$ over clean-correct development images of the class, and over all development images, both evaluated on the images whose class has such a value. They reach AUROC 0.60 and 0.60, below clean confidence (0.78 and 0.68), and cover 0.30 and 0.27 of failures at the 20\% budget against 0.67 and 0.45 for $U$. Table~\ref{tab:image-level-strata} stratifies by terciles of clean confidence within model: $U_i$ retains AUROC 0.63 to 0.77 inside each stratum, where clean confidence retains much less separation (0.52 to 0.68), and images with above-median $U_i$ have higher retention rates than those below the median within every stratum.

\begin{table}[!htbp]
\centering\setlength{\tabcolsep}{3.5pt}
\caption{Identity ablation: AUROC of clean confidence $S$, of $U_i$, of its within-class derangement control and of its leave-one-out class mean for ranking failing against retained-correct clean-correct images (failure as the positive class), all on the same non-singleton subset; gap is $U_i$ minus the derangement control with a 95\% interval from resampling the ten model groups; wins is the share of cells in which $U_i$ exceeds its control.}
\label{tab:identity-ablation}

\small
\begin{tabular}{@{}llrrrrrlr@{}}\toprule
Source & Tier & Cells & $S$ & $U$ & Derangement & LOO mean & Gap [95\%] & Wins (\%)\\\midrule
ES & mild & 1936 & 0.775 & 0.853 & 0.553 & 0.548 & +0.299 [+0.275,+0.319] & 99.9\\
ES & severe & 218 & 0.594 & 0.647 & 0.523 & 0.519 & +0.124 [+0.119,+0.130] & 99.1\\
ES & extreme & 216 & 0.543 & 0.581 & 0.504 & 0.504 & +0.077 [+0.058,+0.103] & 80.5\\
Diverse & mild & 2244 & 0.675 & 0.800 & 0.581 & 0.582 & +0.219 [+0.205,+0.230] & 100.0\\
Diverse & severe & 1099 & 0.599 & 0.699 & 0.537 & 0.536 & +0.162 [+0.148,+0.171] & 99.2\\
Diverse & extreme & 3485 & 0.498 & 0.508 & 0.461 & 0.453 & +0.047 [+0.038,+0.059] & 56.0\\
\bottomrule
\end{tabular}

\end{table}

\begin{table}[!htbp]
\centering\setlength{\tabcolsep}{2.5pt}
\caption{Class-level baselines matched to $U$: AUROC of $S$, $U$ and the development-set class mean of $U$ over clean-correct development images (cc.\ dev) or over all development images (all dev), on the images whose class has such a value; the last two columns give the $U$ minus clean-correct development class-mean gap in AUROC and in coverage at $q=0.2$ with 95\% intervals from resampling model groups.}
\label{tab:class-baseline-U}

\footnotesize
\begin{tabular}{@{}llrrrrll@{}}\toprule
Source & Tier & $S$ & $U$ & \shortstack{Class\\(cc.\ dev)} & \shortstack{Class\\(all dev)} & \shortstack{$U-$Class\\{[95\%]}} & \shortstack{Cov.\ $U-$Class\\{[95\%]}}\\\midrule
ES & mild & 0.778 & 0.854 & 0.597 & 0.590 & +0.257 [+0.238,+0.277] & +0.374 [+0.332,+0.422]\\
ES & severe & 0.593 & 0.646 & 0.557 & 0.543 & +0.089 [+0.080,+0.100] & +0.037 [+0.027,+0.052]\\
ES & extreme & 0.540 & 0.578 & 0.545 & 0.541 & +0.033 [+0.012,+0.050] & +0.006 [+0.005,+0.008]\\
Diverse & mild & 0.678 & 0.801 & 0.604 & 0.596 & +0.197 [+0.185,+0.206] & +0.177 [+0.148,+0.213]\\
Diverse & severe & 0.601 & 0.701 & 0.567 & 0.554 & +0.134 [+0.122,+0.143] & +0.051 [+0.041,+0.063]\\
Diverse & extreme & 0.502 & 0.510 & 0.449 & 0.419 & +0.061 [+0.036,+0.093] & +0.003 [+0.003,+0.004]\\
\bottomrule
\end{tabular}

\end{table}

\begin{table}[!htbp]
\centering\setlength{\tabcolsep}{4pt}
\caption{Image-level separation within clean-confidence terciles (all conditions of each source, cells weighted equally; failure as the positive class). Gap is $U_i$ minus its derangement control on the non-singleton subset with a 95\% interval from resampling the ten model groups; the last column is the retention rate of images with above-median $U_i$ minus that of images below the median within the stratum.}
\label{tab:image-level-strata}

\small
\begin{tabular}{@{}llrrrlr@{}}\toprule
Source & Stratum & $S$ & $U$ & Derangement & Gap [95\%] & Retention gap\\\midrule
ES & low & 0.683 & 0.771 & 0.517 & +0.255 [+0.235,+0.274] & +0.225\\
ES & mid & 0.548 & 0.749 & 0.455 & +0.294 [+0.255,+0.332] & +0.100\\
ES & high & 0.530 & 0.755 & 0.460 & +0.295 [+0.252,+0.325] & +0.075\\
Diverse & low & 0.581 & 0.676 & 0.541 & +0.136 [+0.110,+0.156] & +0.150\\
Diverse & mid & 0.527 & 0.633 & 0.490 & +0.144 [+0.117,+0.170] & +0.127\\
Diverse & high & 0.517 & 0.646 & 0.507 & +0.138 [+0.122,+0.158] & +0.116\\
\bottomrule
\end{tabular}

\end{table}

\FloatBarrier
\paragraph{Number of synthetic conditions.}
Table~\ref{tab:cost-curve} recomputes $U$ from subsets of the 18 Syn-B conditions fixed before any result was seen (seed-drawn, up to 20 distinct subsets per size; all 18 singletons for $m=1$), plus the nine severity-2 and the nine severity-4 conditions. No subset was selected on outcomes; the table reports performance for each tested subset size. With six random conditions the mean coverage gain over clean confidence is 81\% (ES) and 92\% (Diverse) of the 18-condition gain, with minima of 67\% and 68\% over the prefixed draws; the worst three-condition subset still retains 51\% and 41\%; a single condition can fall below clean confidence on ES (worst gain -0.009). The severity-4 conditions alone reach coverage 0.669 and 0.461 against 0.672 and 0.449 for all 18, and the severity-2 conditions 0.658 and 0.416.

\begin{table}[!htbp]
\centering\setlength{\tabcolsep}{2pt}
\caption{Coverage at $q=0.2$, coverage gain over $S$ and AUROC of $U_m$ computed from $m$ of the 18 Syn-B conditions on mild conditions (cells weighted equally); for random subsets the mean over the prefixed subsets is followed by the minimum and maximum over subsets (minimum only for the gain).}
\label{tab:cost-curve}

\footnotesize
\begin{tabular}{@{}lrlrllll@{}}\toprule
Source & $m$ & Subsets & $n$ & Coverage [min,max] & $U_m-S$ [min] & AUROC [min,max]\\\midrule
ES & 1 & random & 18 & 0.597 [0.569,0.619] & +0.019 [-0.009] & 0.799 [0.785,0.816]\\
ES & 2 & random & 20 & 0.625 [0.609,0.638] & +0.047 [+0.030] & 0.818 [0.801,0.833]\\
ES & 3 & random & 20 & 0.640 [0.626,0.652] & +0.062 [+0.048] & 0.830 [0.818,0.844]\\
ES & 6 & random & 20 & 0.654 [0.641,0.666] & +0.076 [+0.063] & 0.840 [0.826,0.852]\\
ES & 9 & random & 20 & 0.663 [0.655,0.669] & +0.085 [+0.077] & 0.849 [0.837,0.854]\\
ES & 9 & severity 2 only & 1 & 0.658 & +0.080 & 0.837\\
ES & 9 & severity 4 only & 1 & 0.669 & +0.091 & 0.854\\
ES & 18 & all 18 & 1 & 0.672 & +0.094 & 0.854\\
Diverse & 1 & random & 18 & 0.399 [0.363,0.449] & +0.042 [+0.005] & 0.730 [0.685,0.792]\\
Diverse & 2 & random & 20 & 0.419 [0.380,0.465] & +0.061 [+0.023] & 0.753 [0.711,0.809]\\
Diverse & 3 & random & 20 & 0.429 [0.395,0.459] & +0.071 [+0.037] & 0.766 [0.726,0.807]\\
Diverse & 6 & random & 20 & 0.442 [0.419,0.457] & +0.084 [+0.062] & 0.784 [0.759,0.804]\\
Diverse & 9 & random & 20 & 0.444 [0.427,0.453] & +0.087 [+0.070] & 0.792 [0.775,0.803]\\
Diverse & 9 & severity 2 only & 1 & 0.416 & +0.059 & 0.760\\
Diverse & 9 & severity 4 only & 1 & 0.461 & +0.103 & 0.811\\
Diverse & 18 & all 18 & 1 & 0.449 & +0.092 & 0.801\\
\bottomrule
\end{tabular}

\end{table}

\FloatBarrier
\paragraph{Natural test-time augmentation, native scoring rules and consistency of the re-inference.}
The comparison of Section~\ref{sec:tta} follows a protocol specified before its model evaluations. The fixed set of 18 natural transformations is horizontal flip; shifts of $\pm2\%$ and $\pm5\%$ of the width or height in each axis (eight); zoom 1.1 and 1.2 (central crop, resized back); rotations of $\pm5^\circ$ and $\pm10^\circ$; gamma 0.8 and 0.9; flip combined with zoom 1.1; shifts and rotations use reflect padding and every transformation is applied to the original image before the model's own preprocessing. The candidate pool of 30 adds zoom-out to 90\%, gamma 0.6 and 1.25, flip combined with $\pm10^\circ$ rotation, brightness 0.9 and 1.1, contrast 0.9 and 1.1, saturation 0.8 and 1.2, flip combined with a 2\% shift. $\mathrm{TTA}_{18}$ averages the true-class probability over the 18 transformed versions of a clean-correct image, so the class is fixed to the clean prediction and neither the prediction nor the image set is changed by the augmentation; the versions with the untransformed image added as a nineteenth term follow \citet{bahat2020tta}. The development-selected variant $\mathrm{TTA}^{\mathrm{sel}}$ adds transformations from the pool one at a time, each time taking the one that maximizes the mean coverage at $q=0.2$ over all models and conditions of the source on the development images with that source's real failure labels, until 18 are chosen; for the development-half report the selection is redone on evaluation images. Zoom-out was chosen first in three of the four selections (rotation by $+10^\circ$ in the fourth), and the selected sets share 11 to 14 members with the fixed set; the analogous selection of 18 of the 27 Syn-B conditions, denoted $U^{\mathrm{sel}}$, changes $U$ by +0.003 on ES (interval including zero) and +0.025 on Diverse, mostly by admitting severity-5 conditions. Table~\ref{tab:tta-variants} reports the paired coverage differences for these variants at three budgets, Table~\ref{tab:tta-dev} the development-half confirmation, and Table~\ref{tab:tta-curve} the coverage of both averages as a function of the number of versions with the prefixed subsets of Table~\ref{tab:cost-curve} applied by index to both lists.

Every score in this comparison is derived from a common evaluation of the 44 models on the 1,000 base images under 49 input variants: the clean image, 18 corruptions and 30 natural transformations. Model loading, preprocessing, mixed precision and scoring follow the original evaluation procedure. Single-precision cosine similarities for OpenCLIP models, 1,000-class logits for supervised and linear-head models, and learned logit scales and SigLIP biases support the alternative scoring calculations. Three agreement tolerances were specified in advance: at least 99\% prediction agreement, mean absolute probability difference at most 0.01, and clean accuracy difference at most 0.5 percentage points. The repeated evaluation reproduces the recorded clean and corruption class predictions and probabilities exactly for all 44 models and all 19 conditions (Table~\ref{tab:tta-consistency}); the resulting $U$ and $S$ AUROCs agree with Table~\ref{tab:image-level} to $10^{-16}$. This confirms computational consistency on the same GPU model using the original procedure and the documented contemporary library versions. Scores recomputed from the single-precision outputs differ from the recorded half-precision probabilities by at most 0.009 and change the predicted class for at most 0.8\% of images near ties. The common protocol applies softmax to cosine similarities multiplied by 100 for OpenCLIP models and to the 200-class logit subspace for other models. Native scoring uses the learned scale (14 to 117), sigmoid probabilities with the learned scale and bias for SigLIP, and full 1,000-class softmax probabilities for supervised and linear-head models. The text classifier, candidate-space predictions and eligible images remain fixed across scoring rules. Under native scoring, mean mild-condition AUROC decreases by about 0.02 for clean confidence, $U_{18}$ and $\mathrm{TTA}_{18}$ (-0.020, -0.022 and -0.022 on ES), while their gaps remain unchanged. At the primary 20\% recapture budget, the native-rule coverage gaps between $U_{18}$ and $\mathrm{TTA}_{18}$ are 0.073 on ES and 0.072 on Diverse.

\begin{table}[!htbp]
\centering\setlength{\tabcolsep}{2pt}
\caption{Paired coverage differences for the variants of the natural-augmentation comparison at budgets $q\in\{0.1,0.2,0.3\}$ (95\% intervals from resampling model groups): both averages with the untransformed image added; both under the native scoring rule; $U_{18}$ against the development-selected natural set; the development-selected against the fixed natural set (the development-selected Syn-B set against the historical $U$ is given in the text).}
\label{tab:tta-variants}

\scriptsize
\begin{tabular}{@{}llrllll@{}}\toprule
Source & Tier & $q$ & $U^{+0}-\mathrm{TTA}^{+0}$ & native $U-\mathrm{TTA}$ & $U_{18}-\mathrm{TTA}^{\mathrm{sel}}$ & $\mathrm{TTA}^{\mathrm{sel}}-\mathrm{TTA}_{18}$\\\midrule
ES & mild & 0.1 & +0.057 [+0.046,+0.074] & +0.060 [+0.046,+0.082] & +0.054 [+0.043,+0.069] & +0.002 [-0.002,+0.007]\\
ES & mild & 0.2 & +0.076 [+0.062,+0.086] & +0.073 [+0.061,+0.082] & +0.072 [+0.060,+0.081] & +0.004 [+0.002,+0.007]\\
ES & mild & 0.3 & +0.078 [+0.062,+0.090] & +0.076 [+0.062,+0.088] & +0.076 [+0.061,+0.089] & +0.002 [+0.001,+0.003]\\
ES & severe & 0.1 & +0.007 [+0.006,+0.009] & +0.006 [+0.005,+0.007] & +0.007 [+0.005,+0.009] & +0.000 [-0.001,+0.001]\\
ES & severe & 0.2 & +0.012 [+0.009,+0.016] & +0.010 [+0.006,+0.013] & +0.012 [+0.009,+0.015] & +0.001 [+0.000,+0.002]\\
ES & severe & 0.3 & +0.017 [+0.012,+0.021] & +0.014 [+0.008,+0.019] & +0.017 [+0.011,+0.021] & +0.001 [-0.001,+0.002]\\
ES & extreme & 0.1 & +0.001 [+0.001,+0.002] & +0.001 [+0.001,+0.002] & +0.001 [+0.001,+0.002] & -0.000 [-0.000,+0.000]\\
ES & extreme & 0.2 & +0.002 [+0.002,+0.003] & +0.002 [+0.001,+0.003] & +0.002 [+0.001,+0.003] & +0.000 [-0.000,+0.001]\\
ES & extreme & 0.3 & +0.003 [+0.002,+0.004] & +0.003 [+0.001,+0.004] & +0.003 [+0.002,+0.004] & +0.000 [-0.000,+0.001]\\
Diverse & mild & 0.1 & +0.045 [+0.036,+0.052] & +0.040 [+0.031,+0.048] & +0.044 [+0.035,+0.051] & +0.002 [+0.001,+0.004]\\
Diverse & mild & 0.2 & +0.080 [+0.064,+0.095] & +0.072 [+0.052,+0.088] & +0.082 [+0.066,+0.096] & +0.001 [-0.001,+0.003]\\
Diverse & mild & 0.3 & +0.104 [+0.082,+0.121] & +0.090 [+0.064,+0.112] & +0.104 [+0.083,+0.121] & +0.001 [-0.000,+0.003]\\
Diverse & severe & 0.1 & +0.012 [+0.009,+0.014] & +0.010 [+0.008,+0.012] & +0.012 [+0.009,+0.014] & +0.000 [+0.000,+0.001]\\
Diverse & severe & 0.2 & +0.023 [+0.018,+0.027] & +0.021 [+0.015,+0.026] & +0.023 [+0.019,+0.028] & +0.000 [-0.001,+0.001]\\
Diverse & severe & 0.3 & +0.035 [+0.028,+0.041] & +0.030 [+0.020,+0.038] & +0.035 [+0.028,+0.041] & +0.001 [-0.000,+0.001]\\
Diverse & extreme & 0.1 & +0.001 [+0.000,+0.001] & +0.001 [+0.000,+0.001] & +0.001 [+0.000,+0.001] & -0.000 [-0.000,+0.000]\\
Diverse & extreme & 0.2 & +0.001 [+0.001,+0.002] & +0.001 [+0.001,+0.002] & +0.001 [+0.001,+0.002] & -0.000 [-0.000,+0.000]\\
Diverse & extreme & 0.3 & +0.002 [+0.001,+0.002] & +0.002 [+0.001,+0.002] & +0.002 [+0.001,+0.002] & +0.000 [-0.000,+0.000]\\
\bottomrule
\end{tabular}

\end{table}

\begin{table}[!htbp]
\centering\setlength{\tabcolsep}{3pt}
\caption{Agreement between the re-inference and the historical clean and corruption records, by model implementation (minimum prediction agreement, maximum mean absolute probability difference and clean accuracy difference over models and conditions; range of learned logit scales), and the number of models meeting the prespecified agreement tolerances.}
\label{tab:tta-consistency}

\footnotesize
\begin{tabular}{@{}lrrrrrr@{}}\toprule
Implementation & Models & \shortstack{Pred.\ agreement\\(min)} & \shortstack{Mean $|\Delta M|$\\(max)} & \shortstack{Clean acc.\ diff\\(pp, max)} & \shortstack{Scale\\(range)} & Within tolerances\\\midrule
DINOv2 & 1 & 1.0000 & 0.0000 & 0.00 & -- & 1/1\\
DINOv3 & 1 & 1.0000 & 0.0000 & 0.00 & -- & 1/1\\
OpenCLIP & 35 & 1.0000 & 0.0000 & 0.00 & 14.3--117.3 & 35/35\\
timm & 6 & 1.0000 & 0.0000 & 0.00 & -- & 6/6\\
torchvision & 1 & 1.0000 & 0.0000 & 0.00 & -- & 1/1\\
\bottomrule
\end{tabular}

\end{table}

\begin{table}[!htbp]
\centering\setlength{\tabcolsep}{2.2pt}
\caption{Natural-augmentation comparison on the development half of the image pool with the selection of $\mathrm{TTA}^{\mathrm{sel}}$ redone on evaluation images (coverage at $q=0.2$; intervals from resampling model groups and classes).}
\label{tab:tta-dev}

\scriptsize
\begin{tabular}{@{}llrrrllll@{}}\toprule
Source & Tier & $S$ & $U_{18}$ & $\mathrm{TTA}_{18}$ & $U_{18}-\mathrm{TTA}_{18}$ [groups] & [classes] & $\Delta$ AUROC [groups] & $U_{18}-\mathrm{TTA}^{\mathrm{sel}}$\\\midrule
ES & mild & 0.561 & 0.671 & 0.583 & +0.088 [+0.076,+0.103] & [+0.058,+0.125] & +0.065 [+0.060,+0.069] & +0.085 [+0.073,+0.100]\\
ES & severe & 0.228 & 0.241 & 0.229 & +0.012 [+0.009,+0.015] & [+0.006,+0.018] & +0.065 [+0.055,+0.071] & +0.011 [+0.009,+0.015]\\
ES & extreme & 0.204 & 0.207 & 0.204 & +0.002 [+0.002,+0.003] & [+0.001,+0.004] & +0.057 [+0.037,+0.072] & +0.002 [+0.002,+0.003]\\
Diverse & mild & 0.334 & 0.428 & 0.345 & +0.082 [+0.066,+0.109] & [+0.069,+0.100] & +0.117 [+0.108,+0.123] & +0.080 [+0.064,+0.105]\\
Diverse & severe & 0.239 & 0.265 & 0.242 & +0.024 [+0.018,+0.033] & [+0.019,+0.030] & +0.104 [+0.087,+0.118] & +0.024 [+0.018,+0.033]\\
Diverse & extreme & 0.202 & 0.203 & 0.202 & +0.001 [+0.001,+0.002] & [+0.001,+0.002] & +0.035 [+0.021,+0.049] & +0.001 [+0.001,+0.002]\\
\bottomrule
\end{tabular}

\end{table}

\begin{table}[!htbp]
\centering\setlength{\tabcolsep}{2.5pt}
\caption{Coverage at $q=0.2$, gain over $S$ and AUROC of the corruption average and of the natural-augmentation average computed from $m$ versions, mild conditions, with the prefixed subsets of Table~\ref{tab:cost-curve} applied by index to both lists (mean over subsets followed by the minimum and maximum; minimum only for the gain).}
\label{tab:tta-curve}

\scriptsize
\begin{tabular}{@{}llrrllll@{}}\toprule
Source & Pool & $m$ & Subsets & Coverage: mean [min, max] & Gain over $S$: mean [min] & AUROC: mean [min, max]\\\midrule
ES & $U$ & 1 & 18 & 0.597 [0.569,0.619] & +0.019 [-0.009] & 0.799 [0.785,0.816]\\
ES & $U$ & 2 & 20 & 0.625 [0.609,0.638] & +0.047 [+0.030] & 0.818 [0.802,0.833]\\
ES & $U$ & 3 & 20 & 0.640 [0.626,0.652] & +0.062 [+0.048] & 0.830 [0.818,0.844]\\
ES & $U$ & 6 & 20 & 0.654 [0.641,0.666] & +0.076 [+0.063] & 0.840 [0.826,0.852]\\
ES & $U$ & 9 & 20 & 0.663 [0.655,0.669] & +0.085 [+0.077] & 0.849 [0.837,0.854]\\
ES & $U$ & 18 & 1 & 0.672 & +0.094 & 0.854\\
ES & TTA & 1 & 18 & 0.574 [0.553,0.585] & -0.004 [-0.025] & 0.777 [0.767,0.783]\\
ES & TTA & 2 & 20 & 0.584 [0.575,0.592] & +0.006 [-0.003] & 0.783 [0.776,0.788]\\
ES & TTA & 3 & 20 & 0.590 [0.584,0.597] & +0.012 [+0.005] & 0.787 [0.783,0.789]\\
ES & TTA & 6 & 20 & 0.594 [0.589,0.597] & +0.015 [+0.011] & 0.789 [0.785,0.791]\\
ES & TTA & 9 & 20 & 0.596 [0.592,0.598] & +0.017 [+0.014] & 0.789 [0.787,0.791]\\
ES & TTA & 18 & 1 & 0.596 & +0.018 & 0.791\\
Diverse & $U$ & 1 & 18 & 0.399 [0.363,0.449] & +0.042 [+0.006] & 0.730 [0.685,0.792]\\
Diverse & $U$ & 2 & 20 & 0.419 [0.380,0.465] & +0.061 [+0.023] & 0.753 [0.711,0.809]\\
Diverse & $U$ & 3 & 20 & 0.429 [0.395,0.459] & +0.071 [+0.038] & 0.766 [0.726,0.807]\\
Diverse & $U$ & 6 & 20 & 0.442 [0.419,0.457] & +0.084 [+0.062] & 0.784 [0.759,0.804]\\
Diverse & $U$ & 9 & 20 & 0.444 [0.428,0.454] & +0.087 [+0.070] & 0.792 [0.775,0.803]\\
Diverse & $U$ & 18 & 1 & 0.449 & +0.092 & 0.801\\
Diverse & TTA & 1 & 18 & 0.358 [0.351,0.363] & +0.001 [-0.006] & 0.679 [0.675,0.684]\\
Diverse & TTA & 2 & 20 & 0.361 [0.356,0.365] & +0.003 [-0.002] & 0.682 [0.678,0.686]\\
Diverse & TTA & 3 & 20 & 0.363 [0.357,0.366] & +0.006 [+0.000] & 0.685 [0.680,0.688]\\
Diverse & TTA & 6 & 20 & 0.364 [0.361,0.366] & +0.007 [+0.004] & 0.686 [0.684,0.689]\\
Diverse & TTA & 9 & 20 & 0.365 [0.363,0.368] & +0.008 [+0.006] & 0.687 [0.685,0.689]\\
Diverse & TTA & 18 & 1 & 0.366 & +0.009 & 0.688\\
\bottomrule
\end{tabular}

\end{table}

\FloatBarrier
\subsection{A related augmentation-alignment ranking score}\label{app:alignment-baseline}
We evaluate the scoring equations of A3Rank \citep{wei2024a3rank} on the fixed natural and corruption probe sets, separately. This adds one directly related ranking construction while keeping the same 18 probe evaluations and common clean prediction. Its score is evaluated for subsequent recapture failures, whereas the original work targets errors on the scored input and includes a downstream rejection component. We retain the score's sums and three roles, but do not train its rejector or reproduce its model-specific training augmentations. The original score has no tuned weights: its alignment terms are summed with equal weight, and its only model-specific ingredient is the variant set drawn from the classifier's training augmentations, which we replace by the fixed probe sets. PRIMA's learned ranking with input and model mutations uses different resources \citep{wang2021prima}; no performance comparison with that pipeline is claimed here.

Let $p$ be the clean predicted class and $m$ the modal predicted class across the 18 probes. Write $v_a(c)$ for probe $a$'s probability of class $c$ and $v_0(c)$ for the clean probability. Define distractors $\mathcal D=\{a:\arg\max_c v_a(c)\notin\{p,m\}\}$, supporters $\mathcal S=\{a:\arg\max_c v_a(c)=p\}$ and dominators $\mathcal G=\{a:\arg\max_c v_a(c)=m\}$. The adapted score is
\begin{align}
A^3={}&v_0(p)-\sum_{a\in\mathcal D}\bigl[\max_c v_a(c)-v_a(p)\bigr]
 +\sum_{a\in\mathcal S}\bigl[v_a(p)-v_a(m)\bigr]\nonumber\\
 &+\sum_{a\in\mathcal G}\bigl[v_a(p)-v_a(m)+v_0(m)-v_0(p)\bigr].
\end{align}
Low $A^3$ indicates high failure risk. When $p=m$, the supporter and dominator contributions vanish even though their index sets coincide. Modal-count ties are resolved by the mean maximum probability among probes voting for each tied class, then by class index. This specifies a deterministic version of the original confidence-based tie rule.

The calculation uses the common 200-class probability protocol and historical clean-correct eligibility, without real-outcome fitting or score selection. For four eligible model--image pairs, recomputation from the underlying cosine similarities or logits changes the highest-probability clean class. They remain in the primary common image set; excluding them in a predeclared numerical sensitivity leaves the conclusions unchanged. The adaptation and protocol were fixed after the earlier exploration of the image pool and before calculating these baseline results.

Table~\ref{tab:tta-main} gives the numerical results plotted in Figure~\ref{fig:probe-score}.

\begin{table}[!htbp]
\centering\setlength{\tabcolsep}{4pt}
\caption{Two probe sets crossed with two scoring rules on identical mild-condition image sets. Each uses 18 transformed inputs; $A^3$ is the A3Rank scoring adaptation. Recall uses a 20\% recapture budget. Difference intervals are in Appendix~\ref{app:alignment-baseline}.}
\label{tab:tta-main}

\footnotesize
\begin{tabular}{lrrrr}
\toprule
& \multicolumn{2}{c}{Natural transforms} & \multicolumn{2}{c}{Corruptions} \\
Scoring rule & ES & Diverse & ES & Diverse \\
\midrule
\multicolumn{5}{l}{\emph{Failure recall at 20\% budget}} \\
Probability mean & 0.596 & 0.366 & 0.672 & 0.449 \\
$A^3$ score & 0.588 & 0.360 & 0.654 & 0.447 \\
\midrule
\multicolumn{5}{l}{\emph{AUROC}} \\
Probability mean & 0.791 & 0.688 & 0.854 & 0.801 \\
$A^3$ score & 0.784 & 0.680 & 0.836 & 0.775 \\
\bottomrule
\end{tabular}

\end{table}

The same-corruption scores are close in Diverse coverage at the primary budget, while $U$ has higher AUROC on both sources (Table~\ref{tab:alignment-differences}). At 10\% on Diverse, $A^3$ covers about 0.268 of failures versus 0.260 for $U$; at 30\%, $U$ is better (Table~\ref{tab:alignment-budgets}). Thus this comparison supports the usefulness of the tested synthetic responses with more than one scoring construction and locates budget-dependent differences between them. Both probe variants are also evaluated across all tiers.

\begin{table}[!htbp]
\centering\setlength{\tabcolsep}{4pt}
\caption{Mild-condition differences between $U$ and the A3Rank scoring adaptations. Differences and separate 95\% resampling intervals are in percentage points. Group resampling uses 2,000 draws of ten model groups; class resampling uses 200 draws of 200 classes and recomputes metrics on weighted images.}
\label{tab:alignment-differences}

\footnotesize
\begin{tabular}{lllrrr}\toprule
Source & A3 probes & Metric & $U-A^3$ & Group interval & Class interval\\\midrule
ES & Natural & Recall@.2 & +8.38 & [+7.15, +9.40] & [+6.72, +10.06] \\
ES & Natural & AUROC & +6.95 & [+6.30, +7.48] & [+5.51, +8.47] \\
ES & Corruption & Recall@.2 & +1.85 & [+0.73, +2.74] & [+0.69, +2.86] \\
ES & Corruption & AUROC & +1.77 & [+1.37, +2.07] & [+1.30, +2.31] \\
Diverse & Natural & Recall@.2 & +8.91 & [+7.41, +10.45] & [+7.06, +10.97] \\
Diverse & Natural & AUROC & +12.09 & [+10.67, +13.04] & [+10.56, +14.00] \\
Diverse & Corruption & Recall@.2 & +0.23 & [-0.40, +1.18] & [-0.58, +1.09] \\
Diverse & Corruption & AUROC & +2.66 & [+2.24, +3.14] & [+2.20, +3.22] \\
\bottomrule\end{tabular}

\end{table}

\begin{table}[!htbp]
\centering\setlength{\tabcolsep}{3pt}
\caption{Budget sensitivity of the corruption-based ranking adaptation. $U$ and $A^3$ columns report failure recall; differences $U-A^3$ and their 95\% model-group intervals are in percentage points. The primary budget is 20\%; $A^3$ has higher recall on Diverse at 10\%.}
\label{tab:alignment-budgets}

\footnotesize
\begin{tabular}{lrrrrr}\toprule
Source & Budget & $U$ & $A^3$ & Difference & Group interval\\\midrule
ES & 10\% & 0.489 & 0.479 & +0.93 & [-0.84, +2.13] \\
ES & 20\% & 0.672 & 0.654 & +1.85 & [+0.73, +2.74] \\
ES & 30\% & 0.771 & 0.758 & +1.29 & [+0.71, +2.16] \\
Diverse & 10\% & 0.260 & 0.268 & -0.80 & [-1.37, -0.44] \\
Diverse & 20\% & 0.449 & 0.447 & +0.23 & [-0.40, +1.18] \\
Diverse & 30\% & 0.591 & 0.577 & +1.41 & [+0.59, +2.64] \\
\bottomrule\end{tabular}

\end{table}

\FloatBarrier
\subsection{Probe and capture-condition diagnostics}\label{app:probe-diagnostics}
These exploratory diagnostics use the same predictions and development-defined mild tiers as the primary comparison. We score each of the 18 corruptions separately, average the two severities within each of nine types, and average all nine probes of each severity. The clean-correct image sets, target labels, failure recall and tie proration match the primary comparison. These sets use one, two, nine or 18 probe evaluations respectively; only the full corruption and natural sets have equal counts. Table~\ref{tab:probe-severity} gives the severity comparison, and Figure~\ref{fig:probe-diagnostics} displays every individual probe.

\begin{table}[!htbp]
\centering
\caption{Mild screening by probe set. Evaluations count transformed inputs beyond the common clean prediction; the clean-only row counts that prediction. The severity subsets use different numbers of transformed inputs. Differences are in recall units.}
\label{tab:probe-severity}

\footnotesize
\begin{tabular}{llrrrr}
\toprule
Source & Probe set & Evaluations & Recall@.2 & AUROC & $\Delta$ vs. TTA \\
\midrule
ES & Clean & 1 & 0.578 & 0.778 & -0.018 \\
ES & Natural TTA & 18 & 0.596 & 0.791 & +0.000 \\
ES & Severity 2 & 9 & 0.658 & 0.837 & +0.062 \\
ES & Severity 4 & 9 & 0.669 & 0.854 & +0.073 \\
ES & All corruptions & 18 & 0.672 & 0.854 & +0.076 \\
Diverse & Clean & 1 & 0.357 & 0.678 & -0.009 \\
Diverse & Natural TTA & 18 & 0.366 & 0.688 & +0.000 \\
Diverse & Severity 2 & 9 & 0.416 & 0.760 & +0.050 \\
Diverse & Severity 4 & 9 & 0.461 & 0.811 & +0.094 \\
Diverse & All corruptions & 18 & 0.449 & 0.801 & +0.083 \\
\bottomrule
\end{tabular}

\end{table}

\begin{figure}[htbp]
\centering\includegraphics[width=\linewidth]{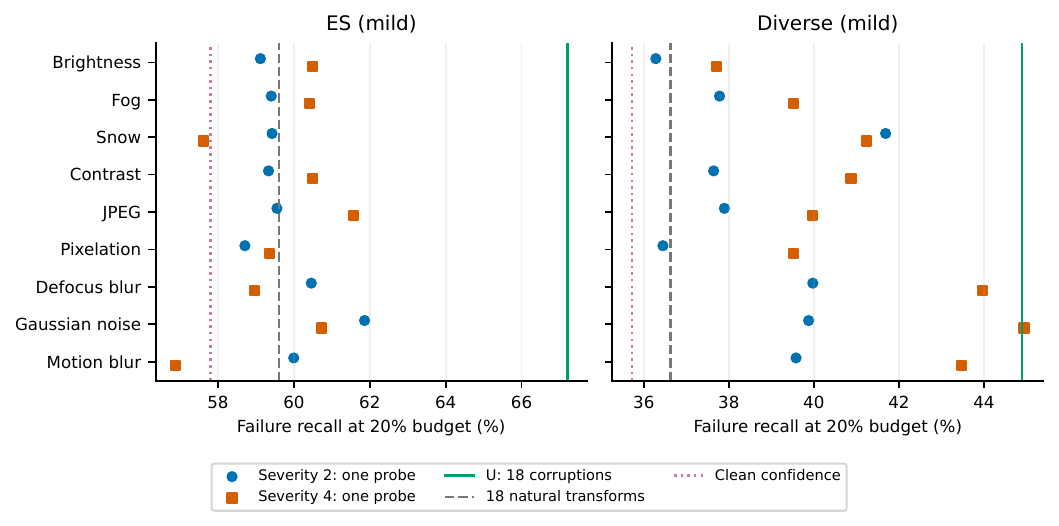}
\caption{Failure recall at a 20\% testing budget for each corruption probe on mild recaptures. Circles and squares distinguish severities 2 and 4. Vertical lines mark the 18-corruption average, 18-transform natural average and clean confidence. Each point uses one transformed input, while the averages use 18. The probe sets differ in both transformation family and response strength.}
\label{fig:probe-diagnostics}
\end{figure}

Every two-severity type average exceeds natural TTA in mean coverage on both sources (9/9 types), whereas only 8/18 ES and 16/18 Diverse individual probes do so. Increasing severity from 2 to 4 improves recall for 5/9 ES and 8/9 Diverse types. Severity-4 averaging reaches 0.669/0.461 compared with 0.672/0.449 for the full set; adding more probes therefore need not improve the score. Mean development error rates, averaged across probes and models on the historical clean-correct set, are 2.03\% for natural TTA, 6.45\% for severity 2 and 13.71\% for severity 4. These rates average the errors of individual transformed predictions. The natural and corruption sets differ in both family and response strength, so the observed advantage is attributed to the tested sets rather than to corruption type independently of dose.

For capture sensitivity, each mild condition contributes its paired $U-S$ and $U-\mathrm{TTA}_{18}$ coverage difference, averaged over models. Both means are positive in all 44 ES and 51 Diverse mild conditions. Grouping by lighting, ISO, shutter or aperture, or deleting any one factor level before averaging, preserves positive means. These factors are crossed acquisition settings, not independent replication units. The positive means occur across the tested conditions; these results establish neither per-model improvement in every condition nor simultaneous significance across conditions. The model-group differences in Appendix~\ref{app:simple-baselines} remain relevant; these diagnostics reuse the same models and images.

Table~\ref{tab:tta-full} reports the corruption and natural-transformation comparison across all tiers.

\begin{table}[!htbp]
\centering\setlength{\tabcolsep}{2.2pt}
\caption{Corruption versus natural-transformation averages across all tiers, using the same model evaluations. Coverage is failure recall at $q=0.2$; paired differences carry separate 95\% group and class intervals.}
\label{tab:tta-full}

\scriptsize
\begin{tabular}{@{}llrrrllrrl@{}}\toprule
& & \multicolumn{3}{c}{Coverage at $q{=}0.2$} & \multicolumn{2}{c}{$U_{18}-\mathrm{TTA}_{18}$} & \multicolumn{3}{c}{AUROC}\\
 Source & Tier & $S$ & $U_{18}$ & $\mathrm{TTA}_{18}$ & [groups] & [classes] & $U_{18}$ & $\mathrm{TTA}_{18}$ & $\Delta$ [groups]\\\midrule
ES & mild & 0.578 & 0.672 & 0.596 & +0.076 [+0.063,+0.086] & [+0.059,+0.088] & 0.854 & 0.791 & +0.063 [+0.058,+0.067]\\
ES & severe & 0.241 & 0.254 & 0.242 & +0.013 [+0.010,+0.016] & [+0.007,+0.018] & 0.647 & 0.596 & +0.051 [+0.041,+0.059]\\
ES & extreme & 0.208 & 0.210 & 0.208 & +0.002 [+0.002,+0.003] & [+0.001,+0.004] & 0.578 & 0.540 & +0.038 [+0.026,+0.052]\\
Diverse & mild & 0.357 & 0.449 & 0.366 & +0.083 [+0.066,+0.098] & [+0.066,+0.103] & 0.801 & 0.688 & +0.114 [+0.100,+0.123]\\
Diverse & severe & 0.248 & 0.274 & 0.250 & +0.023 [+0.019,+0.028] & [+0.018,+0.031] & 0.700 & 0.607 & +0.093 [+0.078,+0.104]\\
Diverse & extreme & 0.203 & 0.204 & 0.203 & +0.001 [+0.001,+0.002] & [+0.001,+0.002] & 0.512 & 0.504 & +0.008 [-0.006,+0.026]\\
\bottomrule
\end{tabular}

\end{table}

\FloatBarrier
\subsection{Simple scoring controls and model-group decomposition}\label{app:simple-baselines}
These exploratory checks compare additional scores on the same image pool and model predictions as the primary analysis. The three added scores were fixed before calculation. Clean margin is the largest minus the second-largest probability in the 200-class candidate space; lower values indicate greater failure risk. Clean entropy is $-\sum_c p_c\log p_c$, with higher values indicating risk. These are simple uncertainty controls for test prioritization \citep{weiss2022simple}. TTA agreement is the fraction of the fixed 18 natural transforms whose candidate prediction equals the recorded clean prediction, with lower agreement indicating risk. The reference is the clean prediction from the common re-inference; eligibility remains the historical clean-correct set. Agreement is measured against this fixed clean prediction.

Scores use double-precision probability calculations from single-precision cosine similarities or logits under the common 200-class protocol. All 9,504 evaluation cells reproduce the existing $S$, $U_{18}$ and $\mathrm{TTA}_{18}$ AUROC and recall at $q\in\{0.1,0.2,0.3\}$ to within $10^{-10}$, with identical sets of cells having undefined metrics. The comparisons use the same cells and tie rules. Table~\ref{tab:simple-baselines} reports the primary mild, $q=0.2$ results. Group intervals use 2,000 whole-group draws; class intervals use 200 shared class draws. These intervals quantify sensitivity within the existing pool. No score direction, subset or transformation is selected using the new outcomes.

\begin{table}[!htbp]
\centering\setlength{\tabcolsep}{3pt}
\caption{Additional simple baselines on mild recaptures. $\Delta$ is the original mean recall difference $U-$baseline at $q=0.2$; the final two columns give separate 95\% group/class resampling intervals for that difference. AUROC is the secondary metric; all three baselines were specified before calculation.}
\label{tab:simple-baselines}

\footnotesize
\begin{tabular}{llrrrrr}
\toprule
Source & Score & Recall@.2 & AUROC & $U-$score & Group CI & Class CI \\
\midrule
ES & $U$ & 0.672 & 0.854 & -- & -- & -- \\
ES & $S$ & 0.578 & 0.778 & +0.094 & [+0.079,+0.106] & [+0.076,+0.115] \\
ES & TTA & 0.596 & 0.791 & +0.076 & [+0.063,+0.086] & [+0.059,+0.088] \\
ES & Margin & 0.577 & 0.779 & +0.096 & [+0.079,+0.110] & [+0.073,+0.115] \\
ES & Entropy & 0.574 & 0.773 & +0.098 & [+0.079,+0.114] & [+0.081,+0.121] \\
ES & Agreement & 0.423 & 0.648 & +0.249 & [+0.208,+0.284] & [+0.212,+0.283] \\
Diverse & $U$ & 0.449 & 0.801 & -- & -- & -- \\
Diverse & $S$ & 0.357 & 0.678 & +0.092 & [+0.076,+0.109] & [+0.074,+0.113] \\
Diverse & TTA & 0.366 & 0.688 & +0.083 & [+0.066,+0.098] & [+0.066,+0.103] \\
Diverse & Margin & 0.357 & 0.679 & +0.092 & [+0.076,+0.107] & [+0.073,+0.113] \\
Diverse & Entropy & 0.355 & 0.676 & +0.094 & [+0.077,+0.114] & [+0.076,+0.113] \\
Diverse & Agreement & 0.264 & 0.558 & +0.185 & [+0.153,+0.227] & [+0.161,+0.214] \\
\bottomrule
\end{tabular}

\end{table}

The margin, entropy and agreement controls do not remove the mean advantage in either mild source. Table~\ref{tab:screening-families} decomposes the original primary comparisons. Every model-group mean is positive for $U-S$ and $U-\mathrm{TTA}_{18}$. At the model level, $U-S$ is positive for all 44 models on both sources; $U-\mathrm{TTA}_{18}$ is positive for 43/44 on ES and 44/44 on Diverse. AlexNet is the ES exception ($-0.00321$ recall); its $U-$margin difference is also negative ($-0.00222$). Group means therefore do not establish improvement for every model. No model is excluded for its comparison outcome.

\begin{table}[!htbp]
\centering\setlength{\tabcolsep}{5pt}
\caption{Mild-condition recall differences at $q=0.2$ by the ten fixed model groups, with cells weighted equally within each group. The statistical groups organize the evaluated model pool and have unequal sizes.}
\label{tab:screening-families}

\footnotesize
\begin{tabular}{llrrr}
\toprule
Source & Group & Cells & $U-S$ & $U-\mathrm{TTA}$ \\
\midrule
Diverse & SSL & 204 & +0.085 & +0.074 \\
Diverse & DataComp & 204 & +0.107 & +0.093 \\
Diverse & DFN & 102 & +0.115 & +0.091 \\
Diverse & LAION & 612 & +0.093 & +0.091 \\
Diverse & MetaCLIP & 153 & +0.123 & +0.115 \\
Diverse & OpenAI & 255 & +0.072 & +0.064 \\
Diverse & Other CLIP & 153 & +0.046 & +0.028 \\
Diverse & PE-Core & 102 & +0.109 & +0.090 \\
Diverse & SigLIP & 204 & +0.145 & +0.131 \\
Diverse & Supervised & 255 & +0.053 & +0.051 \\
ES & SSL & 176 & +0.092 & +0.089 \\
ES & DataComp & 176 & +0.087 & +0.062 \\
ES & DFN & 88 & +0.159 & +0.119 \\
ES & LAION & 528 & +0.105 & +0.086 \\
ES & MetaCLIP & 132 & +0.105 & +0.099 \\
ES & OpenAI & 220 & +0.092 & +0.066 \\
ES & Other CLIP & 132 & +0.056 & +0.026 \\
ES & PE-Core & 88 & +0.057 & +0.052 \\
ES & SigLIP & 176 & +0.115 & +0.087 \\
ES & Supervised & 220 & +0.064 & +0.062 \\
\bottomrule
\end{tabular}

\end{table}

\FloatBarrier
\section{Inference, Splits, and Threshold Conventions}\label{app:protocol}
\paragraph{Source overlap and tier counts.}
Syn and the real pool overlap in 209 images, including 63 Syn-development/real-evaluation overlaps; cross-source image populations are therefore not independent. The development-accuracy tiers contain 44/51 mild, 5/25 severe and 5/86 extreme ES/Diverse conditions. Model-level prediction uses all Syn images or Syn-B development images to predict real evaluation targets.

\paragraph{Inference and scoring.}
We evaluate models over the candidate classes defined for each dataset, using a common scoring protocol to obtain categorical probabilities. The study thus compares models under a shared scoring convention rather than assessing the native calibration of individual checkpoints. Predicted labels, correctness, true-class probabilities, and maximum predicted probabilities are derived from the same inference outputs.

\paragraph{Preprocessing and reproducibility.}
Clean and perturbed images undergo the same model-specific preprocessing. Although the original computational environment and preprocessing configurations were not fully documented, repeating the original inference procedure on the same clean and corrupted images exactly reproduced the stored prediction records for all 44 models, as reported in Appendix~\ref{app:screening}. These records and the accompanying experimental documentation provide the basis for the reported analyses. Table~\ref{tab:models} identifies the evaluated checkpoints.

\paragraph{Perturbations and data splits.}
We consider nine synthetic corruption types---brightness, fog, snow, contrast, JPEG compression, pixelation, defocus blur, Gaussian noise, and motion blur---at two severity levels, yielding 18 experimental conditions. Additional higher-severity Syn-B images are excluded from the primary analyses. The development and evaluation sets are stratified by class, with fixed image assignments. Because the original per-image corruption seeds were not retained, exact regeneration of the perturbed images is not guaranteed. The severity levels represent nominal experimental conditions rather than calibrated measures of physical perturbation intensity. Figure~\ref{fig:perturbation-examples} illustrates the 18 corruption conditions, representative natural transformations used in Section~\ref{sec:prosp}, and mild-condition physical recaptures from ES and Diverse.

\begin{figure}[htbp]
\centering
\includegraphics[width=\linewidth]{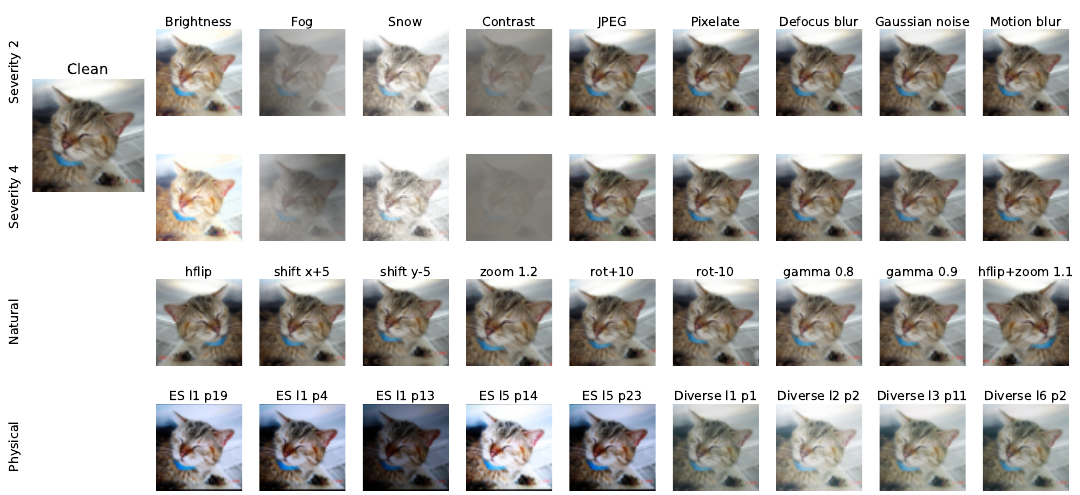}
\caption{Examples of three perturbation families for the same base image. Rows 1--2 show synthetic corruptions used in Syn and Syn-B. Row 3 shows representative natural transformations used for comparison. Row 4 shows original physical recaptures from ImageNet-ES and ES Diverse.}
\label{fig:perturbation-examples}
\end{figure}

\paragraph{Acceptance and ties.}
The primary fixed-coverage analysis evaluates risk among the highest-scoring 80\% of target samples, with ties resolved to maintain the prescribed sample size. The frozen threshold is the lowest score among the highest-scoring 80\% of clean development samples. All scores at or above this threshold are accepted, so ties may increase clean-development coverage beyond the nominal level. Fractional-coverage diagnostics instead average over ties at the acceptance boundary. The supplementary 5\% risk rule selects the largest top-ranked subset with empirical error no greater than 5\% and accepts all scores at or above its cutoff. Including all cutoff ties can change the empirical risk even on the data used to select the threshold. Neither this rule nor the primary frozen threshold guarantees risk control in deployment.

\paragraph{Consistency across evaluations.}
The transition, identification, and binary-score analyses use a single, internally consistent set of predictions and probabilities. Full multiclass Brier scores use true-class and collision probabilities from a separate, internally consistent evaluation, available only for Syn and ES; incompatible probabilities from earlier evaluations are not incorporated into these scores. The supplementary clean probability-gap analysis combines gaps from this second evaluation with outcomes from the primary evaluation. Across 484,000 clean prediction records, predicted labels and correctness agree exactly, with small probability differences confined to DINOv3. This agreement does not establish equality of the full probability vectors, and multiclass scores for perturbed inputs remain unavailable for the other sources.

\paragraph{Probability-gap analysis.}
The probability gap is the difference between the largest and second-largest predicted class probabilities. For each model, we define three groups using the one-third and two-thirds quantiles of the gap distribution among misclassified clean development samples, assigning boundary values to the higher group. A model is excluded from the corresponding analysis when fewer than six such errors are available. These cutpoints are defined relative to clean errors, rather than the overall clean population. The analysis does not control for initial confidence and does not support a causal interpretation.

\subsection{Acceptance protocols and calibration}
Platt scaling (logistic regression on the logit of $S$) and isotonic regression \citep{zadrozny2002isotonic} are fitted on clean development records per model and source and applied to both sides. Under the quantile protocol of Section~\ref{sec:design} (80\% clean-development coverage), a strictly increasing calibrator cannot change any acceptance decision; Platt scaling reproduces the raw decisions in all 88 model--source pairs with zero difference, and isotonic steps would only create ties that inflate coverage. Table~\ref{tab:protocols} shows that calibration does change $H$, $L$ and selective risk under absolute thresholds: at a threshold of 0.9, $H$ falls by roughly 60\% and selective risk from 7.9 to 5.4 points on ES. Whether calibration matters for a rejection rule is therefore decided by the protocol in this setting rather than by the model alone. No calibrator is fitted on the perturbed side.

\begin{table}[!htbp]
\centering\setlength{\tabcolsep}{2.5pt}
\caption{Acceptance events under the quantile protocol (Q80) and absolute-probability thresholds (A$p$), for raw and Platt-scaled scores fitted on clean development records (real sources, all conditions, cells weighted equally). $H$, $L$ and selective risk are expressed as percentages; coverage is the fraction of perturbed inputs accepted. Selective risk is the error rate among accepted perturbed inputs.}
\label{tab:protocols}

\footnotesize
\begin{tabular}{@{}llrrrrrrrr@{}}\toprule
& & \multicolumn{2}{c}{$H$} & \multicolumn{2}{c}{$L$} & \multicolumn{2}{c}{Sel.\ risk} & \multicolumn{2}{c}{Coverage}\\
Source & Protocol & raw & Platt & raw & Platt & raw & Platt & raw & Platt\\\midrule
ES & Q80 & 1.39 & 1.39 & 9.90 & 9.90 & 10.48 & 10.48 & 0.531 & 0.531\\
ES & A0.9 & 0.65 & 0.26 & 9.48 & 10.81 & 7.92 & 5.44 & 0.484 & 0.476\\
ES & A0.7 & 2.31 & 1.82 & 7.48 & 6.45 & 14.03 & 13.26 & 0.602 & 0.608\\
ES & A0.5 & 5.39 & 5.27 & 4.20 & 3.26 & 19.78 & 20.60 & 0.703 & 0.712\\
Diverse & Q80 & 2.05 & 2.05 & 8.16 & 8.16 & 18.95 & 18.95 & 0.186 & 0.186\\
Diverse & A0.9 & 1.09 & 0.34 & 7.61 & 8.40 & 16.54 & 11.24 & 0.166 & 0.153\\
Diverse & A0.7 & 3.13 & 2.17 & 5.83 & 5.48 & 29.21 & 26.75 & 0.233 & 0.226\\
Diverse & A0.5 & 6.81 & 6.23 & 3.86 & 3.15 & 39.88 & 42.70 & 0.305 & 0.304\\
\bottomrule
\end{tabular}

\end{table}

\FloatBarrier
\section{Model Pool and Evidence Provenance}\label{app:models}
ImageNet supplies the classification reference task \citep{russakovsky2015}. Table~\ref{tab:models} lists the evaluated checkpoints. The supervised models use AlexNet \citep{krizhevsky2012alexnet}, ResNet \citep{he2016}, EfficientNet \citep{tan2019efficientnet}, Vision Transformer \citep{dosovitskiy2021} and ConvNeXt \citep{liu2022convnext} architectures. The self-supervised lineages are MAE \citep{he2022mae}, DINOv2 \citep{oquab2024dinov2} and DINOv3 \citep{simeoni2025dinov3}; the identifiers distinguish fine-tuned MAE classifiers from DINO linear classifiers.

The image--text models include OpenAI CLIP \citep{radford2021}, EVA-CLIP \citep{sun2023evaclip}, Perception Encoder (PE-Core) \citep{bolya2025perception}, SigLIP \citep{zhai2023siglip}, SigLIP~2 \citep{tschannen2025siglip2}, MetaCLIP \citep{xu2024metaclip}, Meta CLIP~2 \citep{chuang2025metaclip2}, CLIPA \citep{li2023clipa,li2023clipav2} and CoCa \citep{yu2022coca}. Training-data and checkpoint provenance are distinguished from architecture: the pool includes DataComp \citep{gadre2023datacomp} and Data Filtering Networks (DFN) \citep{fang2024dfn} releases, as well as OpenCLIP checkpoints trained on LAION-400M or LAION-2B \citep{schuhmann2021laion400m,schuhmann2022laion5b,cherti2023scaling}. The two RN50 checkpoints labeled \texttt{cc12m} and \texttt{yfcc15m} are OpenCLIP releases using Conceptual~12M \citep{changpinyo2021cc12m} and a subset of YFCC100M \citep{thomee2016yfcc}, respectively. OpenCLIP \citep{ilharco2021openclip} and timm \citep{wightman2019timm} provide implementations and pretrained-model registries; their software citations are distinct from the original method and dataset references.

These citations document the model, training and data lineages represented; they do not imply that every evaluated checkpoint reproduces the original paper's training recipe. Architectural variety does not make the pool a random sample of classifiers or isolate architecture from pretraining, scale, resolution or classifier construction. Model-group holdouts therefore assess transfer within this defined population.
\begingroup\small
\setlength{\LTcapwidth}{\linewidth}
\begin{longtable}{@{}p{0.68\linewidth}lr@{}}
\caption{Complete model pool with checkpoint identifiers and parameter counts. Groups define the statistical holdouts and may contain checkpoints trained on different datasets.}\label{tab:models}\\
\toprule Model identifier & Group & Params. / M\\ \midrule \endfirsthead
\multicolumn{3}{c}{Table~\thetable\ (continued).}\\[\baselineskip]
\toprule Model identifier & Group & Params. / M\\ \midrule \endhead
\bottomrule \endfoot
\path{DINOv2-vitg14-lc__meta} & SSL & 1144\\
\path{DINOv3-ViT7B16-lc__meta} & SSL & 6716\\
\path{EVA02-L-14__merged2b_s4b_b131k} & other-clip & 428\\
\path{MAE-ViT-B-ft__meta} & SSL & 87\\
\path{MAE-ViT-L-ft__meta} & SSL & 304\\
\path{PE-Core-L-14-336__meta} & pe-core & 671\\
\path{PE-Core-bigG-14-448__meta} & pe-core & 2419\\
\path{RN101-quickgelu__openai} & openai & 120\\
\path{RN50-quickgelu__cc12m} & other-clip & 102\\
\path{RN50-quickgelu__openai} & openai & 102\\
\path{RN50-quickgelu__yfcc15m} & other-clip & 102\\
\path{ViT-B-16-SigLIP__webli} & siglip & 203\\
\path{ViT-B-16-quickgelu__openai} & openai & 150\\
\path{ViT-B-16__datacomp_xl_s13b_b90k} & datacomp & 150\\
\path{ViT-B-16__dfn2b} & dfn & 150\\
\path{ViT-B-16__laion2b_s34b_b88k} & laion & 150\\
\path{ViT-B-16__laion400m_e32} & laion & 150\\
\path{ViT-B-32-quickgelu__metaclip_fullcc} & metaclip & 151\\
\path{ViT-B-32-quickgelu__openai} & openai & 151\\
\path{ViT-B-32__datacomp_xl_s13b_b90k} & datacomp & 151\\
\path{ViT-B-32__laion2b_s34b_b79k} & laion & 151\\
\path{ViT-B-32__laion400m_e32} & laion & 151\\
\path{ViT-H-14-CLIPA__datacomp1b} & datacomp & 968\\
\path{ViT-H-14-quickgelu__dfn5b} & dfn & 986\\
\path{ViT-H-14__laion2b_s32b_b79k} & laion & 986\\
\path{ViT-L-14-quickgelu__metaclip_fullcc} & metaclip & 428\\
\path{ViT-L-14-quickgelu__openai} & openai & 428\\
\path{ViT-L-14__datacomp_xl_s13b_b90k} & datacomp & 428\\
\path{ViT-L-14__laion2b_s32b_b82k} & laion & 428\\
\path{ViT-L-14__laion400m_e32} & laion & 428\\
\path{ViT-SO400M-14-SigLIP__webli} & siglip & 877\\
\path{ViT-SO400M-16-SigLIP2-384__webli} & siglip & 1136\\
\path{ViT-bigG-14-worldwide-378__metaclip2_worldwide} & metaclip & 3631\\
\path{ViT-bigG-14__laion2b_s39b_b160k} & laion & 2540\\
\path{ViT-g-14__laion2b_s34b_b88k} & laion & 1367\\
\path{ViT-gopt-16-SigLIP2-384__webli} & siglip & 1872\\
\path{alexnet-sup__tv} & supervised & 61\\
\path{coca_ViT-L-14__laion2b_s13b_b90k} & laion & 638\\
\path{convnext-b-sup__timm} & supervised & 89\\
\path{convnext_base_w__laion2b_s13b_b82k} & laion & 179\\
\path{convnext_large_d_320__laion2b_s29b_b131k_ft_soup} & laion & 352\\
\path{efficientnet-b4-sup__timm} & supervised & 19\\
\path{resnet50-sup__timm} & supervised & 26\\
\path{vit-b16-sup__timm} & supervised & 87\\
\end{longtable}
\space
\endgroup
Model sources and data-access information accompany the supplementary materials.

The stratified split seed is 20260908. Development thresholds and evaluation targets remain separate. Values are usually rounded to two decimals, with three for selected correlations or proportions; rounding does not change the original decision criteria. Whole-group-removal refitting is reported in Appendix~\ref{app:new-p8}. Human review of recapture semantics and validation on additional models or images remain open directions; Appendix~\ref{app:labels} reports the annotation-sensitivity analyses and planned human review.

\section{Definitions and Prediction Methods}\label{app:definitions}
The remaining-degree-of-freedom calculation supplements Section~\ref{sec:prelim}. The independent counterparts, feature sets and fitting procedures specify the prediction comparisons in Section~\ref{sec:boundaries} and Appendix~\ref{app:new-p8}.

\subsection{The remaining degree of freedom}
The identities in Table~\ref{tab:response-states} and the fact that the five proportions sum to one imply
\begin{equation}
h=a_0-k,\quad r=a_1-k,\quad w=F-a_0-a_1+2k,\quad s=1-F-k.
\end{equation}
Once $a_0,a_1,F$ are included, adding four transition proportions introduces algebraic redundancy. We represent the remaining degree of freedom with $k$. This identity informs the comparison design.

\subsection{Constructing class-conditional independent counterparts}
Fix a model and condition. For true class $y$, let $n_y$ be the number of base images, $a_y,b_y$ the clean and perturbed correct counts, and $\mu_y,\mu_y'$ the mean true-class probabilities among each side's correct images. Independent combination within classes gives
\begin{equation}
dM_{k,\ind}=\frac{\sum_y(a_yb_y/n_y)(\mu_y-\mu_y')}{\sum_y a_yb_y/n_y}.\label{eq:ind}
\end{equation}
Classes with zero correct images on either side receive zero weight and need no corresponding mean. This ratio of expected numerator and denominator is neither the difference between the two global correct-image means nor, in general, the expectation of a conditional mean under random permutation, because the denominator can vary. It requires class labels but no observed correspondence.

Each clean image is combined with every perturbed image of the same class, with weight $1/n_y$ per combination, preserving the total image weight. The empirical control instead uniformly samples within-class permutations, including the identity, moving each complete perturbed prediction record. The number of permutations is $\prod_y n_y!$; even two images per class across $C$ classes yield $2^C$ combinations. Prediction errors are averaged within each model over 20 reproducible permutation replicates before comparison.

\subsection{Feature sequence and comparisons}
Table~\ref{tab:ladder} defines the feature sequence used in these comparisons.
\begin{table}[H]
\caption{Feature definitions. $L_k^{\rm syn}$ is the mass of synthetic retained-correct inputs that lose acceptance. All features are aggregated at the model level.}
\label{tab:ladder}\centering

\small
\begin{tabularx}{\linewidth}{@{}lY@{}}
\toprule Set / step & Features\\ \midrule
$B_0$ & Clean accuracy on synthetic base images and real development base images\\
$B_1$ & $B_0$, synthetic $a_1$, mean $S$ on each side, mean $M$ among each side's correct images, perturbed clipped log loss, and target-matched synthetic risk\\
$B_{\ind}$ & $B_1$ plus independent counterparts of $F,A,Q,k,dM_k,ds_s,ds_w,L_k^{\rm syn}$\\
C1 & Add $F,A,Q$ to $B_1$\\
C2 & Add $k$ to C1\\
C3a & Add $dM_k$ to C2 (primary hypothesis)\\
C3b & Add $ds_s,ds_w$ to C3a\\
C4 & Add $L_k^{\rm syn}$ to C3b\\
\bottomrule
\end{tabularx}
\end{table}

For preceding features $B$ and current additions $q$, the total increment compares $B$ with $B+q$. The primary controlled increment compares $B+q_{\ind}$ with $B+q_{\ind}+q$. The dimension-matched shuffle comparison uses $B+q_{\rm shuf}$ versus $B+q$. Since $B$ may contain preceding paired summaries, the primary sequence is not a comparison against a complete unpaired baseline at every step. C1 against $B_{\ind}$ directly probes that more demanding comparison.

For $R_{80}$ and $E_\tau$, $B_1$ includes the corresponding synthetic protocol quantity. For $L_k$, it includes only perturbed correct-and-rejected mass. No duplicate is added for $a_1$. Clipped log loss is $-\E[\log(\max(M',10^{-6}))]$. The true-class squared error $(1-M)^2$ is excluded from the prediction features; it differs from the full multiclass Brier score.

\FloatBarrier
\subsection{Fitting, aggregation, and intervals}
The ridge grid is $\{0.1,0.3,1,3,10,30,100\}$. Percentage targets are divided by 100, clipped to $[10^{-4},1-10^{-4}]$, and transformed by the standard-normal quantile function. Inner selection minimizes MAE after returning to the original scale. Regularization is selected inside the training groups of each outer fold, so that selection optimism does not enter the outer estimate \citep{cawley2010}. Missing features are imputed using training-column medians, with zero for wholly missing columns. Means and standard deviations are also estimated within training folds. Every feature set uses the same grid and procedure.

Synthetic summaries are computed per condition and averaged equally. Real prediction targets are computed from evaluation-image counts per model and condition, then pooled by source and severity group. Pooling counts is generally different from averaging per-condition risk ratios. The diagnostic figure pools counts across all models; model-level medians are explicitly identified when used. Cross-source analyses use condition-weighted statistics; estimates based on different subsets or aggregation schemes are kept separate.

Group-bootstrap intervals resample groups of stored out-of-fold predictions and average per-model error differences without rerunning the nested fitting procedure. Appendix~\ref{app:new-p8} separately reports whole-group deletion refits that exclude each designated group from both training and evaluation. These differ from the earlier evaluation-row deletion, which did not remove training groups.

\section{Prediction Comparisons and Full Group-Removal Refits}\label{app:new-p8}
Compressing synthetic responses into per-model statistics and predicting mild real $a_1,R_{80},E_\tau,L_k$ across held-out model groups gives target- and baseline-dependent gains. Here, $R_{80}$ denotes the error rate among the highest-scoring 80\% of perturbed inputs, using the number of selected inputs as the denominator. Against a baseline containing all class-conditional independent counterparts, simple paired statistics reduce Diverse accuracy MAE by 1.10 points and $R_{80}$ MAE by 0.38. The ES $R_{80}$ gain is not retained against the simpler unpaired baseline, which is better than the augmented baseline on three of four accuracy/risk targets (Tables~\ref{tab:strong-prediction} and~\ref{tab:baseline}). Retained-correct confidence $dM_k$ shows no stable incremental benefit across the tested comparisons. The comparisons below include feature-order checks and whole-group refits. These comparisons show why the prediction target matters: gains in image-level screening do not imply stable gains in predicting model-level performance.

\subsection{Distinct comparisons and feature order}
Baseline $B_1$ includes clean accuracies, perturbed synthetic accuracy, both sides' confidence, true-class confidence among correct records, log loss, the matching synthetic risk protocol and labeled real development accuracy. For any summary $q$, its class-conditional independent version combines every clean record with every perturbed record of the same class without observed correspondence; the stronger baseline $B_{\ind}$ adds all eight independent counterparts of $F,A,Q,k,dM_k,ds_s,ds_w,L_k$ to $B_1$, and Table~\ref{tab:strong-prediction} compares adding $(F,A,Q)$ or $dM_k$ to this baseline. Nested leave-one-group-out ridge regression selects regularization on training groups only, with imputation and scaling within folds; intervals resample ten groups of stored out-of-fold errors 2,000 times and are conditional summaries, not full-pipeline or new-population guarantees \citep{bengio2004}. Across the ten whole-group deletion refits, gains remain positive for Diverse accuracy (0.50--1.55 points) and $R_{80}$ (0.30--1.67), whereas ES $R_{80}$ gains change sign in three deletions; gains below the original practical threshold (10\% of the target interquartile range) do not establish zero benefit. For $dM_k$, the sequential test leaves ES accuracy compatible with improvement ($+0.52[-0.43,2.56]$) while the other seven interval upper endpoints fall below their thresholds, and paired and independent versions yield similar synthetic-to-real rank correlations and identical choices on the 38 Syn model pairs tested by the fixed tie-breaking rule.

\begin{table}[!htbp]
\centering
\caption{Syn summaries predicting mild real targets. Baseline is $B_{\ind}$; changes are MAE reductions in pp, so positive is better. C1 adds $(F,A,Q)$; the last column instead adds $dM_k$ alone. Brackets are conditional group-reweighting intervals of stored out-of-fold errors, not full fitting-uncertainty intervals. All sources and deletion refits are in Tables~\ref{tab:p8-all} and~\ref{tab:deletion} below.}
\label{tab:strong-prediction}

\small
\begin{tabular}{@{}llrrrr@{}}
\toprule
Target source & Target & Baseline & C1 gain & C1 interval & $dM_k$ gain\\\midrule
ES & $a_1$ & 2.88 & -0.12 & [-0.52, +0.15] & +0.11\\
ES & $R_{80}$ & 3.26 & +0.22 & [+0.12, +0.32] & +0.24\\
ES & $E_\tau$ & 0.61 & +0.00 & [-0.09, +0.07] & -0.02\\
ES & $L$ & 2.42 & -0.06 & [-0.15, -0.00] & -0.00\\
Diverse & $a_1$ & 3.35 & +1.10 & [+0.40, +2.12] & +0.01\\
Diverse & $R_{80}$ & 2.60 & +0.38 & [-0.22, +1.29] & +0.02\\
Diverse & $E_\tau$ & 0.77 & -0.02 & [-0.20, +0.11] & +0.03\\
Diverse & $L$ & 1.65 & +0.03 & [-0.00, +0.10] & +0.03\\
\bottomrule
\end{tabular}
\end{table}

The analysis includes the original C1 comparison, $B_1+(F,A,Q)_{\ind}$ versus that baseline plus $(F,A,Q)$, and the original sequential $dM_k$ comparison after preceding C1 and k features plus $dM_{k,\ind}$. Two further comparisons use the same stronger unpaired baseline: $B_{\ind}$ versus $B_{\ind}+(F,A,Q)$, and separately $B_{\ind}$ versus $B_{\ind}+dM_k$. The latter assesses feature-order sensitivity using the same model pool and outcomes. Compressed independent features do not exhaust the complete marginal records used in the identification theorem.

\begin{table}[!htbp]
\centering
\caption{All primary source/target settings: model-weighted MAE reductions (pp) with conditional OOF group-reweighting intervals. Original $dM_k$ conditions on the preceding paired summaries and its independent counterpart; $B_{\ind}+dM_k$ is a separate feature-order sensitivity. These are correlated comparisons on the same model pool.}
\label{tab:p8-all}

\small
\begin{tabular}{@{}lllrrr@{}}
\toprule
Features & Source & Target & $B_{\ind}+\mathrm{C1}$ & Original $dM_k$ & $B_{\ind}+dM_k$\\\midrule
Syn & ES & $a_1$ & -0.12 [-0.52, +0.15] & +0.52 [-0.43, +2.56] & +0.11 [-0.00, +0.27]\\
Syn & ES & $R_{80}$ & +0.22 [+0.12, +0.32] & +0.05 [-0.07, +0.22] & +0.24 [-0.00, +0.70]\\
Syn & ES & $E_\tau$ & +0.00 [-0.09, +0.07] & -0.04 [-0.11, +0.00] & -0.02 [-0.05, +0.00]\\
Syn & ES & $L$ & -0.06 [-0.15, -0.00] & -0.55 [-1.88, +0.05] & -0.00 [-0.06, +0.04]\\
Syn & Diverse & $a_1$ & +1.10 [+0.40, +2.12] & +0.02 [+0.00, +0.05] & +0.01 [-0.19, +0.13]\\
Syn & Diverse & $R_{80}$ & +0.38 [-0.22, +1.29] & -0.03 [-0.11, +0.01] & +0.02 [-0.04, +0.07]\\
Syn & Diverse & $E_\tau$ & -0.02 [-0.20, +0.11] & -0.01 [-0.15, +0.08] & +0.03 [-0.01, +0.09]\\
Syn & Diverse & $L$ & +0.03 [-0.00, +0.10] & -0.13 [-0.43, +0.00] & +0.03 [-0.02, +0.06]\\
Syn-B & ES & $a_1$ & +0.62 [+0.23, +0.96] & +0.01 [-0.03, +0.07] & -0.06 [-0.18, +0.01]\\
Syn-B & ES & $R_{80}$ & +0.68 [+0.21, +1.03] & +0.00 [-0.03, +0.04] & -0.11 [-0.71, +0.27]\\
Syn-B & ES & $E_\tau$ & -0.29 [-0.90, -0.03] & -0.04 [-0.12, +0.00] & -0.15 [-0.28, -0.04]\\
Syn-B & ES & $L$ & +0.04 [+0.00, +0.09] & -0.01 [-0.05, +0.05] & +0.00 [-0.01, +0.03]\\
Syn-B & Diverse & $a_1$ & +0.31 [-0.23, +0.81] & -0.23 [-0.78, +0.00] & +0.06 [-0.01, +0.19]\\
Syn-B & Diverse & $R_{80}$ & +0.20 [-0.45, +0.75] & -0.04 [-0.08, +0.01] & -0.12 [-0.40, +0.03]\\
Syn-B & Diverse & $E_\tau$ & -0.21 [-0.58, -0.01] & -0.02 [-0.11, +0.03] & -0.24 [-0.60, -0.06]\\
Syn-B & Diverse & $L$ & -0.12 [-0.26, -0.02] & -0.03 [-0.10, +0.01] & -0.04 [-0.15, +0.01]\\
\bottomrule
\end{tabular}
\end{table}

\FloatBarrier
\subsection{Deletion refits and weighting}
For each of ten removals, the designated group is excluded from both outer evaluation and all training/inner-validation sets. The entire nested fit is then repeated on the remaining nine groups for all 16 source--target combinations. An earlier diagnostic deleted only evaluation records while retaining previously fitted predictions; it therefore does not assess dependence on a group included in training.

\begin{table}[!htbp]
\centering
\caption{C1 against $B_{\ind}$: complete-source MAE and the range of MAE reductions after removing each group from both training and evaluation and refitting all inner/outer folds. The deletion range is a sensitivity range, not a confidence interval. Relative gain uses the matched baseline in that row.}
\label{tab:deletion}

\small
\begin{tabular}{@{}lllrrrrr@{}}
\toprule
Features & Source & Target & Baseline & C1 & Gain (\%) & Deletion gain (pp) & Positive\\\midrule
Syn & ES & $a_1$ & 2.88 & 3.00 & -4.1 & [-0.20, +0.19] & 6/10\\
Syn & ES & $R_{80}$ & 3.26 & 3.04 & +6.7 & [-0.24, +0.34] & 7/10\\
Syn & ES & $E_\tau$ & 0.61 & 0.61 & +0.0 & [-0.01, +0.39] & 6/10\\
Syn & ES & $L$ & 2.42 & 2.48 & -2.5 & [-0.16, +0.06] & 2/10\\
Syn & Diverse & $a_1$ & 3.35 & 2.25 & +33.0 & [+0.50, +1.55] & 10/10\\
Syn & Diverse & $R_{80}$ & 2.60 & 2.22 & +14.5 & [+0.30, +1.67] & 10/10\\
Syn & Diverse & $E_\tau$ & 0.77 & 0.80 & -2.9 & [-0.12, +0.02] & 2/10\\
Syn & Diverse & $L$ & 1.65 & 1.61 & +2.1 & [-0.10, +0.09] & 5/10\\
Syn-B & ES & $a_1$ & 4.12 & 3.49 & +15.2 & [-0.35, +0.70] & 9/10\\
Syn-B & ES & $R_{80}$ & 3.83 & 3.15 & +17.7 & [+0.02, +1.09] & 10/10\\
Syn-B & ES & $E_\tau$ & 0.78 & 1.08 & -37.3 & [-0.31, +0.27] & 4/10\\
Syn-B & ES & $L$ & 1.58 & 1.54 & +2.4 & [-0.01, +0.30] & 9/10\\
Syn-B & Diverse & $a_1$ & 2.09 & 1.78 & +15.0 & [+0.08, +1.54] & 10/10\\
Syn-B & Diverse & $R_{80}$ & 2.15 & 1.96 & +9.1 & [-0.06, +2.02] & 8/10\\
Syn-B & Diverse & $E_\tau$ & 1.82 & 2.03 & -11.6 & [-0.63, +0.53] & 4/10\\
Syn-B & Diverse & $L$ & 1.34 & 1.46 & -9.3 & [-0.12, +0.21] & 3/10\\
\bottomrule
\end{tabular}
\end{table}
\begin{figure}[t]\centering
\includegraphics[width=\linewidth]{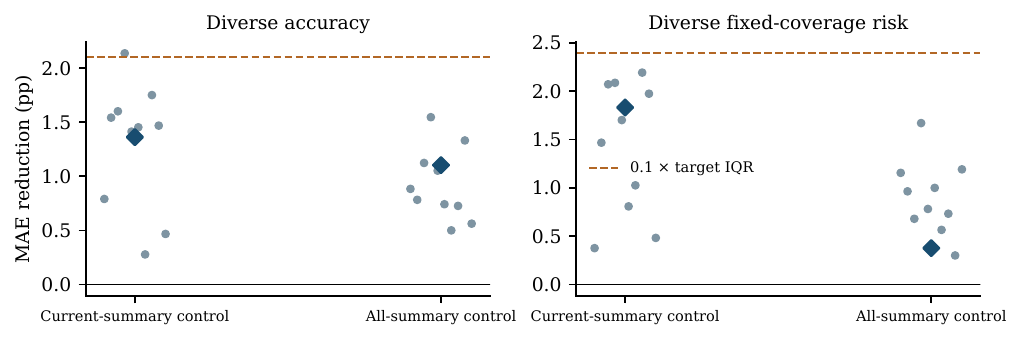}
\caption{Syn-to-Diverse C1 sensitivity. Diamonds use all 44 models; dots remove one group from both training and evaluation and rerun nested fitting. Current-summary control is $B_1+(F,A,Q)_{\ind}$; all-summary control is $B_{\ind}$. The dashed line is the original $0.1\,\mathrm{IQR}$ criterion. Dots show the individual group-deletion refits.}
\label{fig:deletion}\end{figure}

The prediction-error intervals continue to resample stored out-of-fold errors within each fixed fit; they do not repeat training in each bootstrap draw. The deletion experiments (Figure~\ref{fig:deletion}) are explicit changes of the training/evaluation population, not a fully refitted bootstrap. The model-equal summary weights each retained model equally; the group-equal summary first averages within each retained group. Inner regularization selection uses model-equal MAE in both cases. Neither treats the ten groups as guaranteed independent model populations. For Syn to Diverse under the stronger baseline, equal-group C1 gains are $1.41[0.67,2.21]$ points for accuracy and $0.42[-0.18,1.12]$ for $R_{80}$, compared with equal-model gains of 1.10 and 0.38. This weighting change preserves their respective positive and inconclusive conditional-interval patterns.

The original practical threshold is $0.1$ times the full-pool target IQR. Supplementary analyses retain thresholds of 0.025/0.05/0.1/0.2 IQR and 5/10/20\% of the matched baseline MAE. These thresholds assess sensitivity to the chosen practical effect size. For example, the Diverse $R_{80}$ original controlled C1 comparison is 4.36 to 2.53 MAE, a 42\% reduction, whereas the stronger-baseline comparison is 2.60 to 2.22, about 14.5\%. Each relative gain is calculated against its corresponding baseline.

\FloatBarrier
\subsection{Common lost-acceptance target}
The observed L from the identification analysis and the real $L_k$ used for prediction agree across all 88 model/source targets to $3.55\times10^{-15}$ percentage points after restricting identification to the same mild conditions. On these conditions, mean per-cell L widths are 6.14 to 5.01 for ES and 5.68 to 4.05 for Diverse when overlap is added. These are compatible bounds on each fixed cell; a single common permutation across conditions is not required. The Syn-to-real prediction comparison instead receives source summaries and learns from other model groups' real targets. The contrast therefore connects the same event across two different tasks, without equating bound width with forecast error.

\subsection{Numerical verification of prediction comparisons}
The analysis comprises 49,280 out-of-fold model predictions, with finite predictions and interval estimates for every planned fit. The full-cohort C1 and sequential $dM_k$ gains agree with the earlier results to within $10^{-8}$ percentage points. Independent calculations verify training-only preprocessing and nested prediction, including whole-group exclusions; eight independently refitted prediction groups differ by at most $7.11\times10^{-15}$ percentage points. The sensitivity analysis uses the existing model predictions and takes 199.84 seconds on CPU.
\subsection{Unpaired synthetic information is a useful starting point}
Adding unpaired synthetic quantities reduces accuracy-prediction MAE by 2.18 percentage points for ES and 3.58 for Diverse; fixed-coverage risk improves by 2.65 and 4.11 points (Table~\ref{tab:baseline}). Prediction of accepted-error mass and lost acceptance among retained-correct inputs does not show similarly stable gains. These improvements show the predictive value of unpaired synthetic information for these targets.

\begin{table}[!htbp]
\caption{Unpaired synthetic information versus clean-performance features (Syn; nested prediction across held-out model groups). MAEs and improvements are in percentage points; intervals resample stored out-of-fold errors by model group.}
\label{tab:baseline}\centering

\small
\begin{tabular}{@{}llrrr@{}}
\toprule
Target & Source & $B_0$ MAE & $B_1$ MAE & Gain [95\% interval]\\
\midrule
$a_1$ & ES & 4.87 & 2.69 & +2.18 [+0.68, +4.40]\\
$R_{80}$ & ES & 5.32 & 2.68 & +2.65 [+1.19, +5.01]\\
$a_1$ & DIV & 6.62 & 3.03 & +3.58 [+1.54, +6.87]\\
$R_{80}$ & DIV & 8.07 & 3.96 & +4.11 [+1.59, +8.32]\\
\bottomrule
\end{tabular}
\space
\end{table}

\FloatBarrier
\section{Annotation Sensitivity and Shared Persistent Errors}\label{app:labels}
These checks address the label sensitivity discussed in Section~\ref{sec:limits}, covering annotation changes, cross-model error agreement and the status of human recapture review.

\subsection{Matched ReaL evaluation and candidate compatibility}
ReaL provides accepted-label sets for original ImageNet images \citep{beyer2020real}. We map the model's 200-class prediction through its synset to the standard 1,000-class index, retain the original image identifier, and compare original and ReaL correctness on the same evaluation images. Of the shared 500 evaluation images, 473 have nonempty accepted-label sets; 27 are excluded from both sides of this matched comparison. This is distinct from the primary original-label analysis on all 500 images.

Of those 473 images, 435 have at least one accepted label among the 200 candidates and 38 have none. The latter remain in the main matched sensitivity analysis: every available prediction is wrong under their ReaL sets. A separate split gives 424 images whose original label is retained and 49 whose original label is removed. These partitions explain annotation sensitivity; the 435-image subgroup supplements the full matched analysis.

\begin{table}[!htbp]
\centering

\caption{Label sensitivity on the same 473 validation images with nonempty ReaL labels; percentages use the matched cohort rather than all 500 images.}
\label{tab:label-sensitivity}

\small
\begin{tabular}{lrrrrrr}
\toprule
Domain & Matched images & Original acc. & ReaL acc. & $\Delta$ (pp) & Original $s$ & ReaL $s$ \\
\midrule
ES & 473 & 66.88 & 63.33 & -3.55 & 3.99 & 8.01 \\
Diverse & 473 & 26.11 & 24.96 & -1.16 & 1.16 & 2.77 \\
\bottomrule
\end{tabular}
\end{table}

On the 473-image matched set (Table~\ref{tab:label-sensitivity}), mean $s$ rises from 3.99\% to 8.01\% on ES and from 1.16\% to 2.77\% on Diverse. On the 435-image candidate-compatible subgroup, it falls by 0.37/0.15 points. Using the full matched denominator, target accuracy changes decompose as $-3.55=+1.04-4.59$ points for ES and $-1.16=+0.67-1.83$ for Diverse, where the positive and negative terms come from the 435 and 38 images. These are rounded additive contributions, not semantic judgments about the recaptures.

Let $q_{\mathrm{agree}}$ denote the fraction of matched images with unchanged predicted labels. Because multiple accepted labels permit two different correct answers, we split $k$ into $k_{\mathrm{equal}}$ and $k_{\mathrm{different}}$ and verify
\begin{equation}
q_{\mathrm{agree}}-a_1=s-r-k_{\mathrm{different}}.\label{eq:multilabel-agreement}
\end{equation}
The primary five-state identities and finite bounds retain the original single-label definition. As a sensitivity check, we recompute the states with correctness attached to each record under its own ReaL set, on the 473 images with nonempty sets (Table~\ref{tab:real-states}). Relative to original labels, $h$ falls by 2 to 4 points and $s$ roughly doubles; $H$ falls by 6 to 9\% and $L$ by about 5\%; and $dM_k$, restricted to $k$ pairs whose original label is in the ReaL set, changes by less than half a point. Between 10\% and 18\% of original $h$ records and 16\% to 19\% of original $w$ records change state; 10\% of images have an original label absent from their ReaL set. Direct permutation enumeration over all \enumClassesReal{} classes in the \enumCellsReal{} real cells finds identical $L$ and $dM_k$ endpoints at $\mathcal I_k$ and $\mathcal I_5$ in every class, with strictly narrower five-state $H$ bounds in \enumNarrowerReal{} classes. This tests the algebra under record-attached correctness; it does not re-judge correctness after reassignment, and reusing original-image accepted labels on recaptures remains a sensitivity exercise that does not establish that the captured content is recognizable.

\begin{table}[!htbp]
\centering\setlength{\tabcolsep}{2.5pt}
\caption{Five-state shares and acceptance events under original versus ReaL record-attached correctness (original/ReaL, percentages of the 473 matched images, averaged over models and all conditions). The last two columns give the share of original $h$ and $w$ records that change state under ReaL.}
\label{tab:real-states}

\footnotesize
\begin{tabular}{@{}llrrrrrrrrr@{}}\toprule
Source & Labels & $h$ & $r$ & $k$ & $s$ & $w$ & $H$ & $L$ & $h$ moved (\%) & $w$ moved (\%)\\\midrule
ES & original & 24.17 & 1.55 & 65.33 & 3.99 & 4.96 & 1.40 & 9.93 & -- & --\\
ES & ReaL & 22.02 & 1.30 & 62.03 & 8.01 & 6.64 & 1.27 & 9.38 & 18.4 & 16.0\\
Diverse & original & 64.00 & 0.62 & 25.49 & 1.16 & 8.73 & 2.08 & 8.31 & -- & --\\
Diverse & ReaL & 59.65 & 0.56 & 24.40 & 2.77 & 12.63 & 1.95 & 7.87 & 10.2 & 18.6\\
\bottomrule
\end{tabular}

\end{table}

\FloatBarrier
\subsection{Cross-model support for a clean wrong answer}
On the original-label 500-image evaluation pool, we select images a given model gets wrong when clean and measure how many \emph{other} models predict that same clean wrong answer. We compare exclusion of the evaluated model, exclusion of its whole group, and equal weighting of other groups. Low/high support means agreement at most 0.25/at least 0.75; the middle group contains the remaining values. The outcome is whether the model retains that same wrong answer after perturbation, with each group's clean-error events as denominator.

\begin{table}[!htbp]
\centering

\caption{Persistent same-wrong rate conditional on original-label clean-wrong events among all 500 images, stratified by other-family clean-side consensus (high $\geq0.75$; low $\leq0.25$).}
\label{tab:cross-model-consensus}

\small
\begin{tabular}{lrrrr}
\toprule
Domain & High (\%) & Low (\%) & High events & Low events \\
\midrule
ES & 61.45 & 27.10 & 16416 & 79704 \\
Diverse & 18.69 & 6.40 & 49248 & 239112 \\
\bottomrule
\end{tabular}
\end{table}

Excluding the evaluated group (Table~\ref{tab:cross-model-consensus}), high/low-support persistent-error rates are 61.45\%/27.10\% on ES and 18.69\%/6.40\% on Diverse. These rates weight model--condition--image events; they are neither the unconditional state proportion $s$ nor independent-image probabilities. The association does not control clean confidence and cannot establish an independent effect, a new predictive increment, or label correctness. Shared model errors may also reflect shared biases \citep{geirhos2020}.

\FloatBarrier
\subsection{External annotations and pending human review}
We match the detailed ImageNet-Mistakes resource by original image and the specific predicted label, not by image alone \citep{vasudevan2022imagenetmistakes}. The matching yields 16 distinct model--image--resource combinations across two base images, represented by 32 source-specific matches because ES and Diverse share the clean images. This sparse coverage cannot explain persistent errors across the full pool.

The planned two-reviewer assessment comprises a stratified sample of 200 of the 27,537 events across 18 strata, with inclusion probabilities equal to each stratum's sample-to-population ratio. Each event specifies a source, condition, base image and persistent wrong answer, and an image may recur across events. The 400 assigned ratings remain unannotated, so human assessment and adjudication results are pending. Reviewers are masked to model identity, consensus and ReaL annotations and evaluate the displayed candidate answer. Estimation will account for unequal inclusion probabilities and repeated images.


\end{document}